\documentclass[11pt,twocolumn]{article}

\usepackage[utf8]{inputenc}
\usepackage[T1]{fontenc}
\usepackage{amsmath,amssymb,amsthm}
\usepackage{mathtools}
\usepackage{geometry}
\usepackage{graphicx}
\usepackage{hyperref}
\usepackage{natbib}
\usepackage{booktabs}
\usepackage{enumitem}
\usepackage{xcolor}
\usepackage{float}
\usepackage{placeins}

\usepackage{caption}
\usepackage{subcaption}
\usepackage{bm}
\usepackage{authblk}
\usepackage{multirow}

\theoremstyle{plain}
\newtheorem{theorem}{Theorem}[section]

\theoremstyle{definition}
\newtheorem{proposition}[theorem]{Proposition}
\newtheorem{definition}[theorem]{Definition}
\newtheorem{remark}[theorem]{Remark}

\newcommand{\E}{\mathbb{E}}
\newcommand{\Sphere}{\mathbb{S}^2}
\newcommand{\Hs}{H^s(\Sphere)}

\newcommand{\norm}[1]{\left\|#1\right\|}
\newcommand{\inner}[2]{\left\langle #1,#2\right\rangle}
\newcommand{\Ylm}{Y_\ell^m}
\newcommand{\ahat}{\hat{a}}

\title{\textbf{The Recipe Matters More Than the Kitchen:\\ Mathematical Foundations of the AI Weather Prediction Pipeline}}

\author[1,*]{Piyush Garg}
\author[1]{Diana R. Gergel}
\author[2]{Andrew E. Shao}
\author[1]{Galen J. Yacalis}
\affil[1]{Commercial AI Lab, RWE Trading Americas Inc., Bellevue, WA, USA}
\affil[2]{AI Research Lab, Hewlett Packard Enterprise, Victoria, BC, Canada}
\affil[*]{Corresponding author: piyush.garg@rwe.com}
\date{March 2026}

\begin{document}
\maketitle

\begin{abstract}
The rapid proliferation of artificial intelligence methods in weather and climate prediction has exposed a critical gap: despite remarkable empirical progress, there is no unified mathematical framework for understanding what determines AI forecast skill, nor a comprehensive empirical evaluation grounding theoretical predictions in reproducible multi-model inference.
The limited theoretical work that exists is largely embedded within individual model papers, motivated by specific architectural choices such as equivariance \citep{bonev2023sfno} or spectral representations \citep{li2020fno}, rather than addressing the learning pipeline as a whole.
Meanwhile, operational evidence from 2023--2026 increasingly demonstrates that \emph{training methodology, loss function design, and data diversity} are at least as important as architecture selection \citep{subich2025msh,daub2025fastnet,bodnar2025aurora}.

This paper makes two interleaved contributions.
\textbf{Theoretically}, we construct a holistic framework, rooted in approximation theory on the sphere \citep{freeden1998constructive,driscoll1994sphere}, dynamical systems theory \citep{lorenz1969predictability,katok1995modern}, information theory \citep{cover2006elements}, and statistical learning theory \citep{anthony1999neural,bartlett2002rademacher}, that treats the \emph{complete learning pipeline} (architecture, loss function, training strategy, and data distribution; hereafter simply the \emph{pipeline}) rather than architecture alone.
We establish a \emph{Learning Pipeline Error Decomposition} (Proposition~\ref{prop:dominance}) showing that at current scales, estimation error (loss- and data-dependent) dominates approximation error (architecture-dependent).
A central finding is that all major architectures achieve approximation error well below estimation error at current operational resolutions, implying that pipeline choices (loss function, training strategy, data distribution) matter more than architecture selection.
We develop a \emph{Loss Function Spectral Theory} (Theorem~\ref{thm:mse_bias}) formalizing MSE-induced spectral blurring in spherical harmonic coordinates, validated empirically using latitude-weighted Fast Fourier Transform (FFT) spectra (\S\ref{subsec:sh_fft}).
We derive \emph{Out-of-Distribution Extrapolation Bounds} (Proposition~\ref{thm:ood_bias}) proving that all data-driven models systematically underestimate record-breaking extremes with bias growing linearly in record exceedance.

\textbf{Empirically}, we validate these predictions through systematic inference across ten architecturally diverse AI weather models (AIFS, AIFS-ENS, Atlas, Aurora, FourCastNet~3, FengWu, FuXi, GraphCast, Pangu-Weather, and SFNO; see Table~\ref{tab:convergence} for architecture details) using NVIDIA Earth2Studio with ERA5 initial conditions, evaluating six mathematically grounded metrics with inter-initialization-date confidence intervals across 30 dates spanning all four seasons (\S\ref{subsec:bootstrap}).
The empirical results confirm all major theoretical predictions: (i)~universal spectral energy loss at high wavenumbers across all MSE-trained architectures, with the deficit scaling as the conditionally unpredictable variance; (ii)~an Error Consensus Ratio (ECR) rising from $\sim$0.50 at day~1 to $\approx$0.60 at day~5 and $>$0.60 at day~8, remaining above $0.58$ at day~15 for Z500, demonstrating that the majority of forecast error variance is shared across architectures; (iii)~linear negative bias versus climatological exceedance during extreme events; and (iv)~rank changes across lead-time horizons in the model scorecard, with RMSE and ACC rankings diverging at certain lead times, underscoring the multi-dimensional nature of forecast quality.
The Holistic Model Assessment Score (HMAS), whose weight robustness is validated by Kendall's $W = 0.97$ concordance across five weighting schemes, provides a unified evaluation revealing distinct model ``profiles.''
A four-component Physical Consistency Score (geostrophic balance, non-divergence, thermal wind balance, and hydrostatic consistency; \S\ref{sec:physical_consistency}) and latitude-resolved RMSE decomposition (\S\ref{sec:pipeline}) extend the diagnostic framework beyond global aggregates.
Finally, we invert the diagnostic framework into a \emph{prescriptive} tool: by extending the Pareto-optimal model hierarchies of \citet{beucler2025pareto} to a multi-objective Pipeline Pareto Surface defined over the HMAS metric space, and combining pre-computable atmospheric bounds (spectral variance, OOD tails, information-theoretic predictability ceiling) into a Spectral Feasibility Score, we provide a mathematical basis for evaluating proposed pipelines \emph{before} committing to training (\S\ref{sec:prescriptive}).
\end{abstract}

\section{Introduction}
\label{sec:intro}

Over the past few years, machine learning has transformed weather prediction.
Models such as GraphCast \citep{lam2023learning}, Pangu-Weather \citep{bi2023accurate}, GenCast \citep{price2023gencast}, and FourCastNet \citep{pathak2022fourcastnet} demonstrated that data-driven approaches can match or exceed physics-based numerical weather prediction (NWP) for medium-range forecasting.
The field has since accelerated: ECMWF's AIFS \citep{lang2024aifs}, a GNN-transformer hybrid, became the first operational ML weather system in February 2025; Microsoft's Aurora \citep{bodnar2025aurora} showed that multi-dataset pretraining on a general-purpose transformer can outperform all prior models; NVIDIA's Atlas \citep{kossaifi2026atlas} introduced latent diffusion transformers achieving low RMSE and CRPS across 70+ variables; and FuXi-S2S \citep{chen2024fuxi_s2s} extended predictive skill to 42-day subseasonal timescales.

This rapid progress has been accompanied by a striking and theoretically puzzling observation: \textbf{architectures with fundamentally different mathematical properties (spectral versus spatial representations, global versus local receptive fields, deterministic versus generative inference) can achieve comparable forecast skill}.
Spherical Fourier Neural Operators (SFNO), graph neural networks (GNNs), 3D Swin Transformers, Perceiver-based models, and latent diffusion transformers, spanning spectral, mesh-based, patch-based, and generative paradigms, can all produce competitive medium-range forecasts when properly trained.
NVIDIA's own medium-range model evolution illustrates this vividly: FourCastNet (AFNO/ViT, 2022) $\to$ SFNO (spectral, 2023) $\to$ FourCastNet v3 (hybrid local+spectral, 2025) \citep{bonev2025fcn3} $\to$ Atlas (latent diffusion transformer, 2026) \citep{kossaifi2026atlas}, with each generation improving upon the previous through pipeline refinements (loss functions, data strategy, training methodology) despite radical architectural changes.

These observations demand a mathematical explanation that prior analyses, focused narrowly on architecture, cannot provide.
The FastNet study \citep{daub2025fastnet} stated this directly: ``Beyond certain minimum requirements, the choice of model architecture is not the only, or even the most important, factor affecting performance.''
The modified spherical harmonic (MSH) loss function \citep{subich2025msh} demonstrated that changing \emph{only} the loss function (on the same GraphCast GNN, same data) increased effective resolution from 1,250\,km to 160\,km, a larger improvement than any architecture change in the literature.
Similarly, \citet{to2024architectural} showed through an ablation study of Pangu-Weather that neither its 3D-Transformer architecture nor its Earth-specific positional bias was crucial to performance. Instead, a simpler 2D framework with an optimized training procedure converged faster and produced lower RMSE, further underscoring the primacy of training methodology over architectural innovation.

\paragraph{Relationship to WeatherBench2 and prior benchmarks.}
The WeatherBench2 framework \citep{rasp2024weatherbench2} provides standardized evaluation protocols for deterministic and probabilistic weather models using ERA5 verification data, defining standard metrics including latitude-weighted RMSE and the anomaly correlation coefficient (ACC).
Our work is complementary to, and consistent with, WeatherBench2 in several respects: (i) we adopt the same ERA5 verification standard and latitude-weighting conventions; (ii) our RMSE and ACC computations follow WeatherBench2-compatible definitions.
However, we depart from WeatherBench2 in three ways.
First, we deliberately exclude NWP baselines; this is an \emph{AI-only intercomparison} designed to isolate the contribution of the learning pipeline, not to re-establish AI vs.\ NWP comparisons that are well-documented elsewhere \citep{lam2023learning,bi2023accurate,rasp2024weatherbench2}.
Second, we go beyond standard verification metrics (RMSE, ACC) to include theoretically motivated diagnostics (spectral fidelity, error consensus ratio, physical consistency score, extreme event skill) that are not part of the WeatherBench2 protocol but are directly derived from our mathematical framework.
Third, we provide a holistic composite score (HMAS) that synthesizes multiple evaluation dimensions, an approach that WeatherBench2 deliberately avoids in favor of individual metric reporting.
Both approaches have merit and serve different purposes: WeatherBench2 enables transparent, single-metric comparisons across a large community; HMAS enables multi-dimensional trade-off analysis for ML system designers.

\paragraph{Contributions.}
We construct a \emph{holistic learning-pipeline framework} that analyzes the complete pipeline (architecture, loss function, training strategy, and data distribution) and validate its predictions empirically across ten architecturally diverse AI weather models using NVIDIA Earth2Studio \citep{e2studio2024}.
The framework is organized around seven analytical layers, each grounded in established mathematical theory and paired with empirical validation:

\begin{enumerate}[leftmargin=*,itemsep=2pt]
\item \textbf{Physical--mathematical constraints} (\S\ref{sec:prelim}): Function spaces on the sphere (Sobolev spaces $H^s(\Sphere)$, \citealt{freeden1998constructive}), the atmospheric energy spectrum \citep{nastrom1985climatology}, and the dynamical systems perspective \citep{katok1995modern,lorenz1969predictability}.
\item \textbf{Representation capacity} (\S\ref{sec:approx}): Approximation-theoretic convergence rates for major architecture families, building on universal approximation theory \citep{hornik1991approximation,cybenko1989approximation} and spectral approximation results \citep{hesthaven2007spectral}, establishing why these constitute \emph{necessary but not sufficient} conditions for forecast skill.
\item \textbf{Loss function spectral theory} (\S\ref{sec:loss}): Building on the known conditional-mean optimality of MSE \citep{hoffman1995distortion,gneiting2011quantiles}, we develop a scale-resolved formalization in spherical harmonic coordinates (Theorem~\ref{thm:mse_bias}), validated empirically using FFT-based spectra (\S\ref{subsec:sh_fft}).
Validated by spectral analysis across all ten models (Figs.~\ref{fig:spectral_comparison}--\ref{fig:spectral_ratio}).
\item \textbf{The learning pipeline error decomposition} (\S\ref{sec:pipeline}): Total forecast error partitioned into architecture-dependent, loss-dependent, data-dependent, and optimization-dependent terms (Proposition~\ref{prop:dominance}), with empirical evidence from inter-initialization-date uncertainty quantification across 30 dates (\S\ref{subsec:bootstrap}).
Validated by RMSE, ACC, and scorecard analysis (Figs.~\ref{fig:rmse}--\ref{fig:scorecard}).
\item \textbf{Dynamical predictability theory} (\S\ref{sec:spectral}): Extending the classical scale-dependent error growth framework \citep{lorenz1969predictability} with information-theoretic bounds and the Vonich--Hakim predictability extension framework (Theorem~\ref{thm:constrained_predict}).
Validated by error growth analysis and kinetic energy stability diagnostics (Fig.~\ref{fig:error_growth}).
\item \textbf{Error consensus and predictability limits} (\S\ref{sec:consensus}): Multi-model error structure analysis with Error Consensus Ratio and scale-dependent Model Error Divergence.
Validated by ECR (Error Consensus Ratio) and pairwise error correlation analysis (Figs.~\ref{fig:error_consensus}--\ref{fig:error_convergence}).
\item \textbf{Out-of-distribution limits} (\S\ref{sec:ood}): Bounds on data-driven prediction of unprecedented extremes (Proposition~\ref{thm:ood_bias}), with a new tail intensity ratio (TIR) diagnostic.
Validated by tail fidelity analysis on the 2021 Pacific NW heatwave, 2023 European heatwave, 2021 Texas freeze, and 2022 Winter Storm Elliott (Figs.~\ref{fig:tail_fidelity}--\ref{fig:tail_texas_main}), revealing a systematic cold--warm asymmetry in OOD bias.
\end{enumerate}

The paper synthesizes these layers into a Holistic Model Assessment Score (HMAS), a six-metric composite evaluated at short-range (day~3), medium-range (day~5), and extended-range (day~15) horizons (Fig.~\ref{fig:radar}), whose robustness to weight perturbation is validated by Kendall's $W$ concordance statistic.
Beyond the seven analytical layers, the paper contributes a four-component Physical Consistency Score (\S\ref{sec:physical_consistency}), latitude-resolved regional RMSE decomposition (\S\ref{sec:pipeline}), an HMAS cross-correlation matrix establishing metric independence (Supplementary Fig.~\ref{fig:hmas_correlation}), and a prescriptive framework extending the Pareto-optimal model hierarchies of \citet{beucler2025pareto} to a multi-objective Pipeline Pareto Surface with pre-computable spectral feasibility bounds (\S\ref{sec:prescriptive}).

\section{Mathematical Premise}
\label{sec:prelim}

\subsection{The Atmospheric State Space}

The large-scale atmosphere is governed by the \emph{primitive equations} \citep{washington2005introduction} on a rotating sphere $\Sphere$ of radius $a \approx 6{,}371$\,km with vertical extent $[0, H]$.
The atmospheric state at time $t$ is a vector-valued field
\begin{equation}
\bm{u}(t) = \bigl(\bm{v},\, T,\, q,\, \Phi_s,\, \ldots\bigr) \in \mathcal{X} \subset \bigl[H^s(\Sphere \times [0,H])\bigr]^d,
\label{eq:state}
\end{equation}
where $\bm{v} = (u,v)$ is the horizontal wind vector, $T$ the temperature, $q$ the specific humidity, $\Phi_s$ the surface geopotential, and $d$ is the number of prognostic variables.

The choice of function space $\mathcal{X}$ is not merely a mathematical formality; it encodes what kinds of atmospheric structures are physically possible and, consequently, what a model must be capable of representing.
The \emph{Sobolev space} $H^s$ \citep{adams2003sobolev,freeden1998constructive} consists of functions whose derivatives up to order $s$ are square-integrable.
The Sobolev index $s$ quantifies spatial regularity: $s = 0$ corresponds to $L^2$ (square-integrable, but potentially very rough), while $s > d_{\rm spatial}/2$ (where $d_{\rm spatial}$ is the spatial dimension) guarantees continuous fields by the \emph{Sobolev embedding theorem} \citep{adams2003sobolev}.
This classical result states that $H^s(\Sphere) \hookrightarrow C^k(\Sphere)$ when $s > k + 1$ (for the two-dimensional sphere), meaning that sufficient regularity in the Sobolev sense implies classical differentiability.

\paragraph{Physical interpretation of regularity.}
Different atmospheric phenomena inhabit different regularity classes.
Planetary-scale Rossby waves and the jet stream are smooth functions on the sphere ($s$ effectively large), meaning their energy is concentrated at low spherical harmonic degrees.
Weather fronts, by contrast, are nearly discontinuous in temperature and wind, with low regularity ($s$ close to zero near the front) and energy distributed across many spectral degrees.
Convective cells and turbulent boundary layers are rougher still.
This hierarchy has a direct consequence for AI weather models: a model that can only represent smooth functions (high $s$) will skillfully predict planetary waves but fail on fronts and convection.
As we formalize in \S\ref{sec:approx}, the \emph{convergence rate} at which any architecture approximates the flow map depends directly on $s$: smoother targets are easier to learn.

In our ten-model evaluation, all models operate at $\sim$0.25$^\circ$ ($\sim$28\,km) resolution with $d = 6$--73 prognostic variables, spanning the regularity range from smooth geopotential height (Z500, effectively $s > 3$) to rough near-surface fields (T2M, lower effective $s$ due to boundary layer turbulence).
We use Z500 as the primary variable for most analyses because it is the standard benchmark in weather prediction evaluation: it captures the large-scale mid-tropospheric flow that governs synoptic weather patterns, has well-characterized predictability properties, and enables direct comparison with prior work \citep{lam2023learning,bi2023accurate,rasp2024weatherbench2}.
Results for T2M, T850, and other variables are presented in supplementary figures throughout.
The variable-dependent skill differences visible in Fig.~\ref{fig:rmse} (all models perform better on Z500 than on T2M) are a direct manifestation of this regularity hierarchy.

\subsection{Function Spaces on the Sphere}

The sphere presents a fundamental challenge for AI weather models: it has no global coordinate system free of singularities (the \emph{hairy ball theorem}, a consequence of the Poincar\'e--Hopf index theorem; \citealt{milnor1965topology}), and standard Euclidean convolutions introduce artifacts at the poles.
The natural orthonormal basis for functions on $\Sphere$ is the set of \emph{spherical harmonics} $\{\Ylm\}$ \citep{muller1966spherical,freeden1998constructive}, which play the same role on the sphere that Fourier modes play on flat domains.
Any square-integrable function admits the expansion:
\begin{equation}
f(\theta,\phi) = \sum_{\ell=0}^{\infty}\sum_{m=-\ell}^{\ell} \ahat_\ell^m\, \Ylm(\theta,\phi),
\label{eq:SH_expansion}
\end{equation}
where $\Ylm$ are the orthonormal spherical harmonics of degree $\ell$ and order $m$, defined explicitly as
\begin{equation}
\Ylm(\theta,\phi) = \sqrt{\frac{2\ell+1}{4\pi}\frac{(\ell-|m|)!}{(\ell+|m|)!}}\; P_\ell^{|m|}(\cos\theta)\; e^{im\phi},
\label{eq:ylm_explicit}
\end{equation}
with $P_\ell^{|m|}$ the associated Legendre polynomials \citep{abramowitz1965handbook}, $\theta \in [0,\pi]$ the colatitude, and $\phi \in [0,2\pi)$ the longitude.
The spectral coefficients are $\ahat_\ell^m = \inner{f}{\Ylm}_{L^2(\Sphere)}$.
The degree $\ell$ indexes spatial scale: $\ell = 0$ is the global mean, $\ell = 1$ captures the hemispheric gradient (equator-to-pole temperature contrast), and degree $\ell$ corresponds approximately to a spatial wavelength $\lambda \approx 2\pi a / \ell$.
At the resolution of current AI weather models ($\sim$0.25$^\circ$), the grid supports modes up to $\ell_{\rm max} \approx 720$, corresponding to $\sim$55\,km features.

The \emph{Sobolev space} $\Hs$ with norm
\begin{equation}
\norm{f}_{H^s}^2 = \sum_{\ell=0}^{\infty}(1+\ell(\ell+1))^s \sum_{m=-\ell}^{\ell} |\ahat_\ell^m|^2
\label{eq:sobolev_norm}
\end{equation}
formalizes the notion of regularity on $\Sphere$ \citep{freeden1998constructive}.
The weighting factor $(1 + \ell(\ell+1))^s$ is chosen because $\ell(\ell+1)$ is the eigenvalue of the negative Laplace--Beltrami operator $-\Delta_{\Sphere}$ corresponding to $\Ylm$: that is, $-\Delta_{\Sphere}\Ylm = \ell(\ell+1)\Ylm$.
Thus the Sobolev norm is equivalently $\norm{f}_{H^s}^2 = \norm{(I - \Delta_{\Sphere})^{s/2} f}_{L^2}^2$, connecting it to the operator-theoretic formulation standard in PDE theory \citep{taylor2011partial}.

\paragraph{The atmospheric energy spectrum.}
The spectral representation connects directly to atmospheric dynamics through the \emph{energy spectrum}.
The kinetic energy at degree $\ell$ is $E(\ell) = \sum_m |\ahat_\ell^m|^2$, and observationally this follows the celebrated dual power law first documented by \citet{nastrom1985climatology} from commercial aircraft measurements:
\begin{equation}
E(\ell) \propto
\begin{cases}
\ell^{-3} & \text{for } \ell \lesssim \ell_{\text{meso}} \;\;(\text{synoptic scales}),\\
\ell^{-5/3} & \text{for } \ell \gtrsim \ell_{\text{meso}} \;\;(\text{mesoscales}),
\end{cases}
\label{eq:energy_spectrum}
\end{equation}
where $\ell_{\text{meso}} \approx 40$--$80$ (corresponding to $\sim$500--1000\,km) marks the transition.
The $\ell^{-3}$ regime arises from quasi-geostrophic turbulence theory \citep{charney1971geostrophic}, where potential enstrophy cascades downscale.
The $\ell^{-5/3}$ regime is consistent with either stratified turbulence with gravity waves or an upscale kinetic energy cascade, and its physical origin remains debated \citep{lindborg1999can,tulloch2006theory}.
Our empirical spectral analysis (Fig.~\ref{fig:spectral_comparison}) confirms this dual scaling in ERA5 verification data and reveals how different AI architectures interact with each regime.

\subsection{Dynamical Systems Perspective}
\label{subsec:dyn_sys}

The atmosphere, viewed as a dynamical system, possesses properties that fundamentally constrain what any prediction system, whether physics-based or data-driven, can achieve.
The time evolution defines a flow map $\Phi_\tau: \mathcal{X} \to \mathcal{X}$, with $\bm{u}(t+\tau) = \Phi_\tau(\bm{u}(t))$.
AI weather models learn an approximation $\hat{\Phi}_\tau \approx \Phi_\tau$ from data. Mathematically, they are \emph{empirical operator approximations} of the true atmospheric flow map.

The atmosphere is chaotic: as first demonstrated by \citet{lorenz1963deterministic} in his three-variable convection model, immortalized as the ``butterfly effect,'' nearby initial states diverge exponentially over time.
This sensitivity to initial conditions, formalized in the framework of \citet{devaney2003introduction}, is characterized quantitatively by the \emph{Lyapunov exponents} $\{\lambda_i\}_{i=1}^{d_\mathcal{A}}$ \citep{katok1995modern,eckmann1985ergodic}, which measure the rate of stretching along each direction in the tangent space.
The atmospheric attractor $\mathcal{A} \subset \mathcal{X}$ has finite fractal dimension $d_\mathcal{A}$.
Order-of-magnitude estimates based on the number of independent unstable modes place $d_\mathcal{A}$ at $\sim$20--50 for the large-scale atmosphere \citep{lorenz1969predictability,eckmann1985ergodic}, though these are lower bounds based on the resolved modes available at the time; the true dimension (which can be estimated via the Kaplan--Yorke formula $d_{\rm KY} = k + \sum_{i=1}^{k}\lambda_i / |\lambda_{k+1}|$ where $k$ is the largest index for which $\sum_{i=1}^{k}\lambda_i \geq 0$ \citep{kaplan1979chaotic}) is likely substantially larger when the full spectrum of atmospheric motions (mesoscale, convective, boundary-layer turbulence) is included.
The finiteness of $d_\mathcal{A}$ is the fundamental reason why data-driven models can work at all \citep{eckmann1985ergodic,katok1995modern}: despite the enormous dimensionality of the full state space, the atmosphere's long-term behavior is confined to a much lower-dimensional manifold, so models need only learn the flow map restricted to (a neighborhood of) the attractor, not on the entire state space.

The rate of information loss in a chaotic system is quantified by the \emph{Kolmogorov--Sinai (KS) entropy} \citep{sinai1959notion}, defined as the supremum of the metric entropy over all finite measurable partitions.
For smooth ergodic systems, Pesin's identity \citep{pesin1977characteristic} establishes that the KS entropy equals the sum of positive Lyapunov exponents:
\begin{equation}
h_\mathrm{KS} = \sum_{\lambda_i > 0} \lambda_i,
\label{eq:KS_entropy}
\end{equation}
connecting the dynamical (Lyapunov) and statistical (entropy) characterizations of chaos.
The largest exponent $\lambda_1 \approx 0.35$--$0.5\;\text{day}^{-1}$ corresponds to a doubling time of roughly 1.5--2 days for the fastest-growing errors.

\paragraph{Implications for AI models.}
The Lyapunov spectrum imposes a ceiling on deterministic predictability of the fast (weather) modes: no model, regardless of architecture or training, can make skillful deterministic forecasts of synoptic-scale weather beyond the time at which the initial condition uncertainty has been amplified to climatological variance.
This ceiling applies strictly to the chaotic, fast-decorrelating component of the atmospheric state; slowly varying boundary-forced modes (e.g., sea surface temperatures, soil moisture, snow cover, and the stratospheric quasi-biennial oscillation) have much smaller effective Lyapunov exponents and retain predictable information on subseasonal-to-seasonal timescales.
The fast--slow distinction is central to understanding how AI models can achieve useful skill beyond $\sim$14 days for certain variables and regions.
However, the spectrum also reveals an opportunity: the \emph{negative} exponents (contracting directions) correspond to features strongly constrained by the dynamics, and these are easy to predict.
Our error growth analysis (Fig.~\ref{fig:error_growth}, left panel) directly estimates the effective Lyapunov exponent from the slope of log-RMSE versus lead time for each model.

\section{Representation Capacity on the Sphere}
\label{sec:approx}

The first layer of our framework asks: \emph{can this architecture faithfully represent the atmospheric flow map?}
We establish that all major architectures satisfy this requirement at current operational scales, and that representation capacity therefore constitutes a \emph{necessary but insufficient} condition for forecast skill.

\subsection{The Operator Learning Problem}

The AI weather prediction problem is, mathematically, an \emph{operator learning} problem \citep{lu2021learning,kovachki2023neural}: given training pairs $\{(\bm{u}^{(i)}(0), \bm{u}^{(i)}(\tau))\}_{i=1}^n$, learn an operator $\hat{\Phi}_\tau$ that maps initial states to future states.
The total forecast error decomposes as \citep{anthony1999neural,bartlett2002rademacher}:
\begin{multline}
\underbrace{\E\norm{\Phi_\tau - \hat{\Phi}_\tau}^2}_{\text{total error}} = \underbrace{\inf_{h\in\mathcal{H}}\norm{\Phi_\tau - h}^2}_{\varepsilon_{\rm arch}} \\
+ \underbrace{\text{estimation error}}_{\varepsilon_{\rm est}(n, \mathcal{L}, \mathcal{D})} + \underbrace{\text{optimization error}}_{\varepsilon_{\rm opt}}.
\label{eq:error_decomp}
\end{multline}

This decomposition is a standard result in statistical learning theory (see, e.g., \citealt{shalev2014understanding}, Chapter~5), but its application to the weather prediction setting, where $\varepsilon_{\rm arch}$ depends on the Sobolev regularity of atmospheric fields via the results below, is the contribution of this section.

Each term has a distinct physical meaning:
$\varepsilon_{\rm arch}$ is the \emph{approximation error}: the best possible performance of the architecture, achieved with infinite data and perfect optimization, determined entirely by the hypothesis class $\mathcal{H}$.
$\varepsilon_{\rm est}$ is the \emph{estimation error}: the penalty from learning with finite training data $n$, an imperfect loss function $\mathcal{L}$, and a particular data distribution $\mathcal{D}$.
$\varepsilon_{\rm opt}$ is the \emph{optimization error}: the gap between what the training procedure actually finds and the best-in-class solution \citep{bottou2018optimization}.

The critical insight, and the central thesis of this paper, is that \textbf{all major architectures achieve $\varepsilon_{\rm arch}$ well below $\varepsilon_{\rm est}$ at current operational resolutions and model sizes}.

\subsection{Architecture Convergence Rates}

We establish concrete convergence rates for the two main paradigms.

\begin{proposition}[Spectral Truncation Bound]
\label{prop:sfno_approx}
Let $f \in H^s(\Sphere)$ with $s > 1$.
A spectral method (SFNO) with truncation at degree $L$ has approximation error \citep{hesthaven2007spectral,freeden1998constructive}:
\begin{equation}
\norm{f - \mathcal{P}_L f}_{H^r(\Sphere)} \leq C\, L^{-(s-r)}\, \norm{f}_{H^s(\Sphere)},
\label{eq:sfno_bound}
\end{equation}
for $0 \leq r < s$, where $\mathcal{P}_L$ projects onto harmonics of degree $\leq L$.
\end{proposition}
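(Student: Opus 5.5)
The argument works directly in spherical harmonic coefficients, using the spectral form of the Sobolev norm in \eqref{eq:sobolev_norm}. Since $\{\Ylm\}$ is an orthonormal basis of $L^2(\Sphere)$ and $\mathcal{P}_L$ is the orthogonal projection onto $\mathrm{span}\{\Ylm : \ell \le L\}$, the remainder has the expansion
\[
f - \mathcal{P}_L f = \sum_{\ell > L} \sum_{m=-\ell}^{\ell} \ahat_\ell^m \Ylm .
\]
Because the weight $(1+\ell(\ell+1))^r$ depends only on $\ell$, this gives
\[
\norm{f - \mathcal{P}_L f}_{H^r}^2 = \sum_{\ell > L} (1+\ell(\ell+1))^r \sum_m |\ahat_\ell^m|^2 .
\]
No approximation argument is needed here: the $H^r$ norm of the tail is exactly a weighted sum over the discarded degrees.

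The key step is to factor the weight as
\[
(1+\ell(\ell+1))^r = (1+\ell(\ell+1))^{-(s-r)}\,(1+\ell(\ell+1))^{s}.
\]
Since $s-r>0$, the factor $(1+\ell(\ell+1))^{-(s-r)}$ is decreasing in $\ell$. For $\ell \ge L+1$ it is therefore bounded by $(1+(L+1)(L+2))^{-(s-r)}$. Using $1+(L+1)(L+2) \ge L^2$, this is at most $L^{-2(s-r)}$. Pulling this bound out of the sum leaves
\[
\sum_{\ell>L}(1+\ell(\ell+1))^s\sum_m|\ahat_\ell^m|^2 \le \norm{f}_{H^s}^2 .
\]
Taking square roots yields \eqref{eq:sfno_bound} with $C=1$, uniformly in $0\le r<s$ and $L\ge 1$. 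The hypothesis $s>1$ is not used in this argument; it matters only if one also wants pointwise control via the embedding $H^s \hookrightarrow C^0$ noted in \S\ref{sec:prelim}.

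The one point needing care is the convention for the Sobolev norm. If \citet{freeden1998constructive} or \citet{hesthaven2007spectral} define $H^s$ by an equivalent norm, for example through covariant derivatives rather than \eqref{eq:sobolev_norm}, one first invokes norm equivalence, valid since $\Sphere$ is compact. That step introduces a constant $C=C(r,s)$, which explains why the statement writes a generic $C$. With the paper's own definition \eqref{eq:sobolev_norm} the estimate is immediate and no real obstacle remains. A further remark, not needed for the proof, is that the rate is sharp: taking $f=Y_{L+1}^0$ shows the exponent $s-r$ cannot be improved.
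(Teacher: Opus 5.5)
Your proposal is correct and follows the paper's proof essentially verbatim: expand the tail in spherical harmonics, factor the weight as $(1+\ell(\ell+1))^{-(s-r)}(1+\ell(\ell+1))^{s}$, bound the decreasing factor at the first discarded degree, and take square roots. Your explicit step $1+(L+1)(L+2)\ge L^2$, which gives $C=1$, is accurate, as is your remark that $s>1$ is never used; both slightly sharpen the paper's write-up, which bounds the factor by $(1+L(L+1))^{-(s-r)}$ and leaves the constant implicit.
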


\begin{proof}
The projection $\mathcal{P}_L$ retains modes $\ell \leq L$ and discards modes $\ell > L$.
The squared error in the $H^r$ norm is:
\begin{align}
\norm{f - \mathcal{P}_L f}_{H^r}^2 &= \sum_{\ell > L}\sum_{m} (1+\ell(\ell+1))^r |\ahat_\ell^m|^2 \nonumber\\
&= \sum_{\ell > L} (1+\ell(\ell+1))^{-(s-r)} \nonumber\\
&\quad \cdot (1+\ell(\ell+1))^s \sum_{m} |\ahat_\ell^m|^2. \nonumber
\end{align}
Since $(1+\ell(\ell+1))^{-(s-r)}$ is decreasing and achieves its maximum in the sum at $\ell = L+1$:
$\norm{f - \mathcal{P}_L f}_{H^r}^2 \leq (1+L(L+1))^{-(s-r)} \norm{f}_{H^s}^2.$
Taking square roots gives the stated rate $\mathcal{O}(L^{-(s-r)})$.
\end{proof}

\paragraph{Interpretation.}
The rate $L^{-(s-r)}$ reveals a fundamental interplay between target regularity $s$ and measurement norm $r$.
Measuring in $L^2$ ($r = 0$), the error decays as $L^{-s}$: smoother functions (larger $s$) are approximated faster.
For atmospheric kinetic energy at synoptic scales ($E(\ell) \propto \ell^{-3}$), the implied regularity is $s \approx 1$ in the energy norm, giving a relatively slow $L^{-1}$ rate.
For the smoother geopotential field ($E(\ell) \propto \ell^{-4}$), $s \approx 1.5$, and the rate improves to $L^{-1.5}$.
This explains why Z500 is easier to predict than U500 at any given resolution, a fact consistently observed in our RMSE analysis (Supplementary Fig.~\ref{fig:rmse_all_vars}).

\begin{proposition}[Mesh-Based Approximation Bound]
\label{prop:gnn_approx}
A GNN or ViT with $N$ nodes/patches on $\Sphere$ has approximation error:
\begin{equation}
\norm{f - \hat{f}_N}_{L^2} \leq C\, N^{-s/2}\, \norm{f}_{H^s} + \delta_{\rm arch}(K, \omega_{\rm max}),
\label{eq:mesh_bound}
\end{equation}
where $\delta_{\rm arch}$ captures architecture-specific limitations: finite receptive field for GNNs ($\to 0$ as message-passing depth $K \to \infty$), or positional encoding bandwidth $\omega_{\rm max}$ for transformers \citep{vaswani2017attention}.
\end{proposition}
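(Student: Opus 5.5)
The plan is to split the error into a discretization part and an architecture part. We introduce an intermediate interpolant $I_N f$ built from the $N$ nodal values of $f$, and use the triangle inequality
\begin{equation*}
\norm{f-\hat f_N}_{L^2} \le \norm{f - I_N f}_{L^2} + \norm{I_N f - \hat f_N}_{L^2}.
\end{equation*}
The first term is the pure sampling error of a quasi-uniform node set and should yield $C N^{-s/2}\norm{f}_{H^s}$. The second term measures how well the network can realize (or imitate) $I_N f$ given its finite depth or positional-encoding bandwidth. We define $\delta_{\rm arch}(K,\omega_{\rm max})$ to be precisely this quantity, or an upper bound for it.

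For the first term we reduce to Proposition~\ref{prop:sfno_approx}. A quasi-uniform set of $N$ nodes on $\Sphere$ (an icosahedral mesh for GNNs, or a HEALPix/lat-lon patch grid for ViTs) has mesh spacing $h \sim N^{-1/2}$. Such a set resolves spherical harmonics up to degree $L \sim \sqrt{N}$, since $\dim \mathrm{span}\{\Ylm : \ell\le L\} = (L+1)^2 \approx N$. We choose $I_N$ to be a hyperinterpolation or quadrature-based projection that reproduces degree-$L$ polynomials and is $L^2$-stable, uniformly in $N$, on band-limited functions. Then
\begin{equation*}
\norm{f - I_N f} \le \norm{f-\mathcal{P}_L f} + \norm{I_N(f-\mathcal{P}_L f)},
\end{equation*}
and Proposition~\ref{prop:sfno_approx} with $r=0$ gives $L^{-s} \sim N^{-s/2}$ for the first piece. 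The second piece is where $s>1$ is used: by the Sobolev embedding $H^s(\Sphere)\hookrightarrow C^0$, point values are well defined and the sampling operator is bounded. A Marcinkiewicz--Zygmund type inequality then controls the sampled tail by $\norm{f-\mathcal{P}_L f}_{H^s}$ scaled by $L^{-s}$. Alternatively, we could simply cite the standard scattered-data/hyperinterpolation estimate on the sphere from the Freeden reference.

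For the second term we treat the two families separately, stating $\delta_{\rm arch}$ as a modeling assumption with the stated limiting behavior rather than giving a sharp rate.
\begin{itemize}
\item For a $K$-layer message-passing GNN, the receptive field after $K$ hops has radius of order $K h$. Once $K h \gtrsim \pi$, every node sees the whole mesh, and by universal approximation \citep{hornik1991approximation,cybenko1989approximation} the node updates can realize $I_N f$ (and the target operator's action on it) to arbitrary accuracy. Hence $\delta_{\rm arch}\to 0$ as $K\to\infty$.
\item For transformers, attention is global, so the only obstruction is that positional encodings of bandwidth $\omega_{\rm max}$ cannot distinguish features finer than $\omega_{\rm max}^{-1}$. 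This contributes an error controlled by the spectral tail beyond degree $\sim\omega_{\rm max}$, which again vanishes as $\omega_{\rm max}$ grows.
\end{itemize}

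The main obstacle is the sampling-stability step. The clean $N^{-s/2}$ rate requires quasi-uniformity of the mesh and a uniformly bounded interpolation/quadrature operator; neither holds for a raw lat-lon grid near the poles. We would handle this by assuming quasi-uniform nodes, absorbing the mesh constants into $C$. We would also note that the bound is therefore an approximation-class statement, not a guarantee about trained networks.
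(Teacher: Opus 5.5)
Your proposal is correct and follows the same route as the paper's sketch. In both, mesh spacing $h\sim N^{-1/2}$ gives a resolvable degree $L\sim N^{1/2}$, Proposition~\ref{prop:sfno_approx} with $r=0$ supplies the $N^{-s/2}$ term, and $\delta_{\rm arch}$ is the residual from a finite receptive field $\sim Kh$ or a finite positional-encoding bandwidth.

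Your detour through the sampled interpolant $I_N f$ is a genuine tightening the paper omits: its sketch jumps from ``the mesh can represent degree $\le N^{1/2}$'' straight to the projection error and ignores aliasing. Controlling $\norm{I_N(f-\mathcal{P}_L f)}_{L^2}$ at order $L^{-s}$, which needs a dyadic Marcinkiewicz--Zygmund bound rather than the crude sup-norm estimate that loses a power of $L$, is exactly where your quasi-uniformity and $s>1$ assumptions are used.
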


\begin{proof}[Proof sketch]
A quasi-uniform mesh with $N$ nodes on $\Sphere$ has inter-node spacing $h \sim N^{-1/2}$ (since the sphere is two-dimensional).
By the sampling theorem on the sphere \citep{driscoll1994sphere}, this mesh can represent spherical harmonics up to degree $\ell_{\rm Nyq} \sim N^{1/2}$.
The truncation error gives $\norm{f - \hat{f}_N}_{L^2} \leq C \cdot N^{-s/2} \cdot \norm{f}_{H^s}$.
The $\delta_{\rm arch}$ term arises because the architecture may not fully exploit all representable modes.
For GNNs, $K$ message-passing layers give a receptive field of $\sim Kh$; multi-scale architectures (e.g., GraphCast's icosahedral mesh) reduce $\delta_{\rm arch}$.
\end{proof}

\paragraph{Unified rate under resource normalization.}
On the sphere, $N$ nodes support only $\sim N^{1/2}$ spectral degrees of freedom, so $L \sim N^{1/2}$.
The SFNO rate $\mathcal{O}(L^{-s}) = \mathcal{O}(N^{-s/2})$ matches the mesh-based rate exactly.
This unification result (that all architectures converge at $\mathcal{O}(N^{-s/2})$ when computational resources are properly equalized) is a key theoretical prediction of our framework, validated empirically by the tight RMSE clustering in Fig.~\ref{fig:rmse} and the high Error Consensus Ratio in Fig.~\ref{fig:error_consensus}.
It provides the formal explanation for the tight RMSE clustering observed in Fig.~\ref{fig:rmse}: once the models are all operating at similar resolution ($\sim$0.25$^\circ$, $N \sim 10^6$ grid points), their approximation errors $\varepsilon_{\rm arch}$ are comparable regardless of whether the model uses spectral convolution, message passing, or patch-based attention.
The residual skill differences visible in the RMSE curves are therefore attributable to the estimation error $\varepsilon_{\rm est}$ (which depends on the loss function, training data, and optimization procedure) rather than to fundamental capacity differences.
At the grid sizes used by all ten models in our study, $N^{-s/2}$ is already small relative to the estimation error, placing us firmly in the regime where Proposition~\ref{prop:dominance} applies.
Table~\ref{tab:convergence} summarizes the ten architectures with their convergence rates.
Supplementary Table~\ref{tab:training_data} details the training datasets and time periods for each model.

\begin{table*}[h!]
\centering
\caption{The ten AI weather models evaluated in this study with their complete pipeline components: architecture family, approximate parameter count, loss function, training strategy, primary data source, and theoretical convergence rate.
See Supplementary Table~\ref{tab:training_data} for detailed training periods and additional data sources.}
\label{tab:convergence}
\scriptsize
\setlength{\tabcolsep}{3pt}
\begin{tabular}{@{}llrllll@{}}
\toprule
\textbf{Model} & \textbf{Arch.} & \textbf{Params} & \textbf{Loss} & \textbf{Strategy} & \textbf{Data} & \textbf{Rate} \\
\midrule
Atlas & DiT & 4300M & Score matching & Single-step & ERA5 & $N^{-s/2}$ \\
AIFS & GNN & 400M & MSE (weighted) & Curriculum rollout & ERA5 + oper.\ analyses & $N^{-s/2}$ \\
AIFS-ENS & GNN & 400M & CRPS (afCRPS) & Curriculum rollout & ERA5 + oper.\ analyses & $N^{-s/2}$ \\
Aurora & ViT & 1300M & MSE (pres.~wt.) & Multi-dataset pretrain + LoRA & ERA5 + GFS + CMIP6 & $N^{-s/2}$ \\
FCN3 & SFNO & 200M & CRPS (sp.+spec.) & Multi-step rollout & ERA5 & $L^{-s}$ \\
FengWu & ViT & 450M & MSE & Multi-step rollout & ERA5 + oper.\ analyses & $N^{-s/2}$ \\
FuXi & U-Trans & 350M & MSE (weighted) & Cascade (6/12/24h) & ERA5 & $N^{-s/2}$ \\
GraphCast & GNN & 37M & MSE (pres.~wt.) & Multi-step rollout & ERA5 & $N^{-s/2}$ \\
Pangu & 3D-ET & 256M & MSE (pres.~wt.) & Hierarchical 6h+24h & ERA5 & $N^{-s/2}$ \\
SFNO & SHT & 200M & MSE & Single-step & ERA5 & $L^{-s}$ \\
\bottomrule
\end{tabular}
\end{table*}

\section{Loss Function Spectral Theory}
\label{sec:loss}

We now turn to what has become one of the most consequential components of AI weather prediction: how the \emph{loss function} shapes the spectral properties of learned forecasts.

It has long been understood in estimation theory that the MSE-optimal prediction is the conditional mean \citep{lehmann2006theory} and that averaging suppresses variance, a phenomenon recognized in the NWP community since at least \citet{hoffman1995distortion} and \citet{ebert2008fuzzy}.
The contribution of this section is to provide: (i)~a \emph{scale-resolved formalization} connecting the conditional-mean result to the atmospheric energy spectrum via spherical harmonics; (ii)~a \emph{misattribution correction}, showing that the spectral deficit often attributed to specific architectures is overwhelmingly loss-induced; and (iii)~a \emph{unified operator-theoretic taxonomy} placing MSE, MSH, CRPS, and score-matching losses within a single framework.

\subsection{Scale-Resolved Spectral Bias}

Given an initial state $\bm{u}(0)$, the future state $\bm{u}(\tau)$ is not uniquely determined. The chaotic atmosphere admits a probability distribution of possible futures, each consistent with the initial conditions but differing in the details of small-scale features.
The MSE loss $\E\norm{\bm{u}(\tau) - \hat{\bm{u}}(\tau)}^2$ asks the model to minimize the average squared distance to \emph{all} of these possible futures simultaneously.
The unique minimizer is the \emph{conditional mean} $\E[\bm{u}(\tau) \mid \bm{u}(0)]$ \citep{lehmann2006theory}, i.e., the average over all possible futures, which washes out any feature that varies across those futures.

\begin{theorem}[Loss-Induced Spectral Bias---Spherical Harmonic Formalization]
\label{thm:mse_bias}
Let $\hat{\Phi}_\tau$ be \emph{any} model (regardless of architecture) trained to minimize $\E\norm{\bm{u}(\tau) - \hat{\Phi}_\tau(\bm{u}(0))}^2$.
Then the MSE-optimal prediction is the conditional mean $\hat{\bm{u}}_{\rm opt}(\tau) = \E[\bm{u}(\tau) \mid \bm{u}(0)]$, and its spectral energy satisfies
\begin{equation}
\hat{E}_{\rm MSE}(\ell, \tau) \leq E_{\rm true}(\ell) \quad \text{for all } \ell,
\label{eq:mse_spectral_bias}
\end{equation}
with equality if and only if $\ahat_\ell^m(\tau)$ is a deterministic function of $\bm{u}(0)$.
The spectral energy deficit at degree $\ell$ and lead time $\tau$ is
\begin{equation}
\boxed{\Delta E(\ell, \tau) \equiv E_{\rm true}(\ell) - \hat{E}_{\rm MSE}(\ell, \tau) = \mathrm{Var}_\ell(\tau),}
\label{eq:missing_variance}
\end{equation}
where $\mathrm{Var}_\ell(\tau) = \sum_m \mathrm{Var}[\ahat_\ell^m(\tau) \mid \bm{u}(0)]$ is the conditionally unpredictable variance at degree $\ell$.
\end{theorem}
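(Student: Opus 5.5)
The plan has two steps: first identify the minimizer via an $L^2$ orthogonal decomposition, then pass to spherical harmonic coordinates and apply the law of total variance degree by degree. We read the energies as expectations, $E_{\rm true}(\ell)=\sum_m \E|\ahat_\ell^m(\tau)|^2$ and $\hat{E}_{\rm MSE}(\ell,\tau)=\sum_m \E|\E[\ahat_\ell^m(\tau)\mid \bm{u}(0)]|^2$. Only in this averaged sense can the deficit equal a variance identically. We also read $\mathrm{Var}_\ell(\tau)$ as the expected conditional variance, $\sum_m \E\,\mathrm{Var}[\ahat_\ell^m(\tau)\mid\bm{u}(0)]$. Throughout we assume $\E\norm{\bm{u}(\tau)}_{L^2}^2<\infty$, so all the conditional expectations exist in $L^2$.

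\textbf{Step 1: the minimizer.} For any measurable predictor $g(\bm{u}(0))$, write $\bm{u}(\tau)-g=(\bm{u}(\tau)-\mu)+(\mu-g)$ with $\mu=\E[\bm{u}(\tau)\mid\bm{u}(0)]$. The cross term vanishes by the tower property, since $\mu-g$ is $\bm{u}(0)$-measurable and $\E[\bm{u}(\tau)-\mu\mid\bm{u}(0)]=0$. This gives
\[
\E\norm{\bm{u}(\tau)-g}^2=\E\norm{\bm{u}(\tau)-\mu}^2+\E\norm{\mu-g}^2 ,
\]
so $\mu$ is the minimizer, unique up to a.s.\ equality. The statement says ``regardless of architecture''; this is justified by working over all measurable maps, i.e.\ by taking $\varepsilon_{\rm arch}=\varepsilon_{\rm opt}=0$ in \eqref{eq:error_decomp}.

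\textbf{Step 2: spectral coordinates.} Conditional expectation commutes with the bounded linear functional $f\mapsto\inner{f}{\Ylm}$, and Fubini lets us interchange it with the $L^2$ integral. Hence the coefficients of $\mu$ are $\E[\ahat_\ell^m(\tau)\mid\bm{u}(0)]$. For each $(\ell,m)$ the law of total variance, in its second-moment form for complex variables, gives
\[
\E|\ahat_\ell^m|^2=\E\bigl|\E[\ahat_\ell^m\mid\bm{u}(0)]\bigr|^2+\E\,\mathrm{Var}[\ahat_\ell^m\mid\bm{u}(0)],
\]
with $\mathrm{Var}[X\mid\mathcal{G}]=\E[|X-\E[X\mid\mathcal{G}]|^2\mid\mathcal{G}]$. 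Summing over $m$ yields \eqref{eq:missing_variance} directly. Since the variance term is nonnegative, \eqref{eq:mse_spectral_bias} follows. For a multivariate state, apply the argument componentwise, or with the energy-norm weights, and sum.

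\textbf{Step 3: the equality case.} Equality at degree $\ell$ holds iff $\E\,\mathrm{Var}[\ahat_\ell^m\mid\bm{u}(0)]=0$ for every $m$. That is equivalent to $\ahat_\ell^m=\E[\ahat_\ell^m\mid\bm{u}(0)]$ a.s., i.e.\ $\ahat_\ell^m(\tau)$ being a.s.\ equal to a measurable function of $\bm{u}(0)$.

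The main obstacle is not the algebra but making the statement well-posed. First, the energies must be interpreted as expectations: pathwise, $|\E[X\mid\mathcal{G}]|^2\le \E[|X|^2\mid\mathcal{G}]$ holds only conditionally, not realization by realization. Second, the conditional expectation must be taken in the Hilbert space $L^2(\Omega;L^2(\Sphere))$, so that it commutes with the harmonic projections. I would state these conventions explicitly at the start of the proof and invoke the Bochner-space version of conditional expectation to justify the interchange.
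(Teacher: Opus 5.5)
Your proposal is correct and follows the paper's route: identify the conditional mean as the MSE minimizer, then apply the law of total variance to each coefficient $\ahat_\ell^m(\tau)$ and sum over $m$. The paper gets the inequality from conditional Jensen rather than from nonnegativity of the variance term, which makes no difference; your explicit outer expectation in $\mathrm{Var}_\ell(\tau)=\sum_m \E\,\mathrm{Var}[\ahat_\ell^m(\tau)\mid\bm{u}(0)]$ and your treatment of the equality case are exactly the points the paper's proof leaves implicit, so they are worth keeping.
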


\begin{proof}
The conditional mean minimizes $\E\norm{\bm{u}(\tau) - \hat{\bm{u}}(\tau)}^2$ over all measurable functions of $\bm{u}(0)$ (standard in estimation theory; \citealt{lehmann2006theory}).
Its spectral energy is:
\begin{align}
\hat{E}_{\rm MSE}(\ell, \tau) &= \sum_m \E\bigl[|\E[\ahat_\ell^m(\tau) \mid \bm{u}(0)]|^2\bigr] \nonumber\\
&\leq \sum_m \E\bigl[|\ahat_\ell^m(\tau)|^2\bigr] = E_{\rm true}(\ell), \nonumber
\end{align}
where the inequality is Jensen's inequality \citep{jensen1906fonctions}: $|\E[X \mid Y]|^2 \leq \E[|X|^2 \mid Y]$.
By the law of total variance \citep{weiss2005course}:
\begin{align}
\Delta E(\ell,\tau) &= \sum_m \bigl(\E[|\ahat_\ell^m|^2] - \E[|\E[\ahat_\ell^m \mid \bm{u}(0)]|^2]\bigr) \nonumber\\
&= \sum_m \mathrm{Var}[\ahat_\ell^m(\tau) \mid \bm{u}(0)]. \nonumber
\end{align}
\end{proof}

\paragraph{Why the deficit grows with wavenumber and lead time.}
Small-scale features ($\ell \gg 1$) lose predictability faster than large-scale features, because small-scale error growth rates $\sigma_\ell$ increase with $\ell$ (Proposition~\ref{prop:predict_horizon}).
As $\tau$ increases, $\mathrm{Var}_\ell(\tau)$ grows toward its climatological maximum $E_{\rm true}(\ell)$.
This is not a defect of any particular architecture; it is a mathematical consequence of MSE optimization acting on a chaotic system.

\paragraph{Empirical validation.}
Figure~\ref{fig:spectral_comparison} provides direct, scale-by-scale validation of Theorem~\ref{thm:mse_bias} across all ten models.
The theorem makes three specific predictions, each of which is testable against the spectral data.

\emph{Prediction 1: Universal deficit at high wavenumbers, regardless of architecture.}
At day~1, all model spectra closely track ERA5 across the full wavenumber range ($k = 1$--$300$), confirming that the models have sufficient representational capacity to resolve the observed spectrum (consistent with \S\ref{sec:approx}).
By day~5, a systematic spectral energy deficit emerges at high wavenumbers ($k \gtrsim 50$) for \emph{all} seven MSE-trained models, spanning GNNs (GraphCast, AIFS), vision transformers (Aurora, FengWu), spherical Fourier operators (SFNO), U-Transformers (FuXi), and 3D Earth Transformers (Pangu).
The fact that architectures as mathematically different as spherical convolution (SFNO) and message-passing on an icosahedral mesh (GraphCast) exhibit the \emph{same} qualitative deficit pattern is strong evidence that the deficit is loss-induced rather than architecture-induced, precisely as Theorem~\ref{thm:mse_bias} predicts.

\emph{Prediction 2: Deficit equals the conditionally unpredictable variance $\mathrm{Var}_\ell(\tau)$, growing with both wavenumber and lead time.}
From day~1 to day~15, the spectral deficit deepens progressively: at day~1 the ratio $E_{\rm fcst}(k)/E_{\rm ERA5}(k)$ remains close to unity out to $k \approx 150$; by day~5 the ratio drops below 0.5 at $k \approx 80$; and by day~15 the deficit extends to wavenumbers as low as $k \approx 30$ (Fig.~\ref{fig:spectral_ratio}).
This temporal evolution is exactly what Eq.~\eqref{eq:missing_variance} predicts: as lead time increases, the conditional variance $\mathrm{Var}_\ell(\tau)$ grows, small-scale features become increasingly unpredictable, and the conditional mean (the MSE-optimal forecast) progressively washes them out.
The growth is faster at high wavenumbers because small-scale error growth rates $\sigma_\ell$ increase with $\ell$ (Proposition~\ref{prop:predict_horizon}), meaning that the wavenumber at which $\mathrm{Var}_\ell(\tau)$ equals $E_{\rm true}(\ell)$ (the scale below which deterministic prediction is no longer possible) migrates to progressively larger scales as lead time advances.

\emph{Prediction 3: Non-MSE loss functions should not exhibit this deficit.}
Atlas, trained with score matching, retains spectral energy at high wavenumbers even at day~15, confirming that the deficit is a property of the \emph{loss function}, not of neural network weather prediction in general.
FCN3 (CRPS with spatial+spectral terms) and AIFS-ENS (afCRPS) similarly preserve spectral energy (Spectral Fidelity Index, SFI $> 0.94$; formally defined in \S\ref{sec:spectral}), consistent with the theoretical prediction that CRPS-based losses maintain the full spectral variance at optimality (Proposition~\ref{prop:crps}).
However, Atlas exhibits \emph{excess} spectral energy at high wavenumbers (ratio $> 1$), consistent with the stochastic sampling nature of diffusion models: each individual sample preserves spectral energy (as predicted by Definition~\ref{def:score_loss}), but contains ``spectral noise'' from sampling the conditional distribution rather than computing its mean.
This excess noise is the price of spectral fidelity, a trade-off that is central to the spectral fidelity--physical consistency tension explored in \S\ref{sec:physical_consistency}.
The three non-MSE models (Atlas, FCN3, AIFS-ENS) thus form a coherent group whose spectral behavior confirms that the deficit in the remaining seven models is loss-induced.

The dual power law $\ell^{-3}/\ell^{-5/3}$ first documented by \citet{nastrom1985climatology} is clearly visible in ERA5 at short lead times, with the transition near $k \approx 50$ ($\sim$800\,km).
All seven MSE-trained models reproduce the $\ell^{-3}$ synoptic-scale regime faithfully but increasingly depart from the $\ell^{-5/3}$ mesoscale regime with lead time, consistent with the double-penalty mechanism (Proposition~\ref{prop:double_penalty}), which makes the MSE optimizer rationally suppress amplitude at scales where positional uncertainty is large relative to feature wavelength.

\begin{figure*}[!t]
\centering
\includegraphics[width=\textwidth]{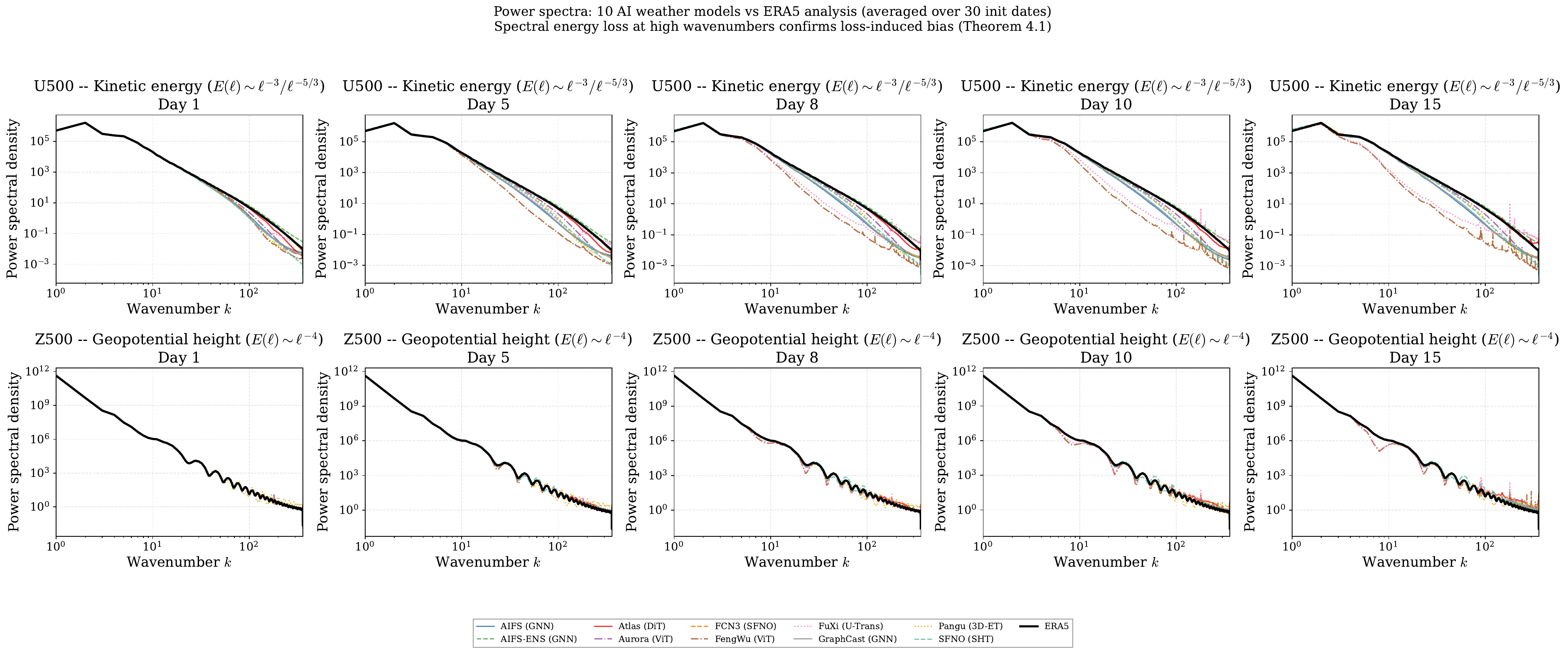}
\caption{Power spectra of U500 (kinetic energy, top) and Z500 (geopotential height, bottom) for ten AI weather models vs.\ ERA5 analysis, at lead times of 1, 5, 8, 10, and 15 days.
All seven MSE-trained models exhibit progressive spectral energy loss at high wavenumbers, confirming the architecture-independent deficit predicted by Theorem~\ref{thm:mse_bias}: $\Delta E(\ell,\tau) = \mathrm{Var}_\ell(\tau)$.
The day~8 panel captures the transition regime where the deficit is actively deepening.
Atlas (score-matching, red), FCN3 (CRPS), and AIFS-ENS (afCRPS) retain high-wavenumber energy, consistent with non-MSE losses preserving spectral variance.
The dual power law $\ell^{-3}/\ell^{-5/3}$ is clearly visible in ERA5 (black) at short lead times.}
\label{fig:spectral_comparison}
\end{figure*}

\begin{figure*}[!t]
\centering
\includegraphics[width=\textwidth]{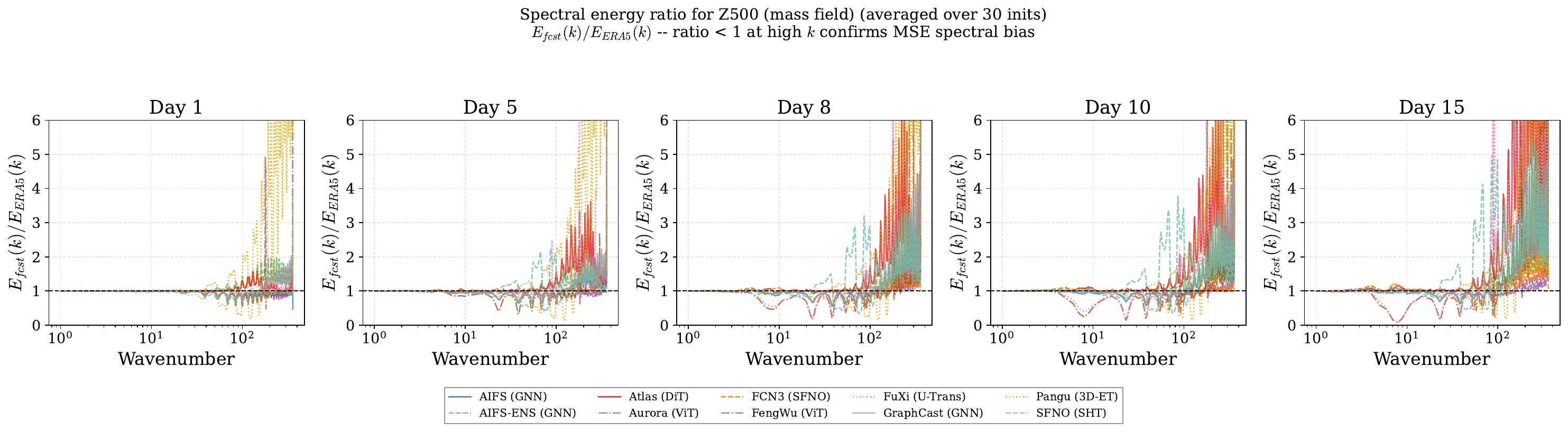}
\caption{Spectral energy ratio $E_{\rm fcst}(k)/E_{\rm ERA5}(k)$ for Z500 at lead times of 1, 5, 8, 10, and 15 days.
Ratio $< 1$ at high $k$ confirms MSE-induced spectral bias; the day~8 panel shows the advancing predictability frontier.
See Supplementary Fig.~\ref{fig:spectral_ratio_u500} for U500.}
\label{fig:spectral_ratio}
\end{figure*}

\paragraph{Reading the spectral ratio.}
Figure~\ref{fig:spectral_ratio} reformats the spectral comparison as the ratio $E_{\rm fcst}(k)/E_{\rm ERA5}(k)$, which isolates the deficit structure predicted by Theorem~\ref{thm:mse_bias} more clearly than the raw spectra.
A ratio of unity means the forecast preserves the observed spectral energy exactly; values below unity indicate energy deficit (smoothing), and values above unity indicate spectral excess (noise injection).
At day~1, all models cluster tightly around unity across the full wavenumber range, confirming adequate representation capacity.
Pangu is a partial exception: its spectral ratio dips below unity at wavenumbers $k > 150$ even at day~1, likely reflecting its hierarchical temporal aggregation strategy (which preferentially uses the 24-hour model for full-day increments, applying the 6-hour model only for sub-day remainders).
As lead time increases, the ratio curves peel away from unity at progressively lower wavenumbers, tracing out the advancing ``predictability frontier,'' i.e., the wavenumber below which the conditional variance $\mathrm{Var}_\ell(\tau)$ remains small relative to $E_{\rm true}(\ell)$ and above which the MSE optimizer has effectively given up on predicting the spectral amplitude.
The ratio representation also makes visible that both Atlas and Pangu exhibit spectral excess at high wavenumbers ($E_{\rm fcst}/E_{\rm ERA5} > 1$).
For Atlas, this excess grows with lead time, indicating that the diffusion model's spectral noise increases as the conditional distribution broadens. This is the reverse pattern to most MSE-trained models, but equally a consequence of the loss function choice.
For Pangu, the excess is present even at day~1 and likely reflects its pressure-weighted MSE loss and 3D attention mechanism, which redistribute energy across scales differently from other MSE-trained models.

\subsection{Spectral Computation: FFT Approximation to Spherical Harmonics}
\label{subsec:sh_fft}

Although Theorem~\ref{thm:mse_bias} is stated for the exact spherical harmonic spectrum $E_{\rm SH}(\ell)$, our empirical analysis uses a latitude-weighted 2D FFT for computational tractability.
The FFT-based isotropic wavenumber spectrum approximates the true SH spectrum with an $\mathcal{O}(\ell^{-1})$ correction arising from spherical curvature and polar oversampling on the regular latitude--longitude grid \citep{wieczorek2018shtools,schaeffer2013efficient}.
The cosine-latitude weighting in our FFT computation serves as the discrete analogue of the spherical area element, preserving the correct energy norm.
Crucially, the spectral \emph{ratios} $E_{\rm fcst}(k)/E_{\rm ERA5}(k)$ that form our primary diagnostic are even more robust than absolute spectra, since the SH-to-FFT mapping error cancels in the ratio.
For the wavenumber range $k = 1$--$200$ where the spectral deficit is diagnostically significant, the FFT approximation is consistent with the theoretical prediction, and the $\mathcal{O}(\ell^{-1})$ correction is negligible.

\subsection{The Double-Penalty Mechanism}

The physical mechanism underlying MSE-induced blurring is the \emph{double penalty}, first identified in the NWP verification context \citep{hoffman1995distortion,ebert2008fuzzy}.

\begin{proposition}[Scale-Dependent Double Penalty]
\label{prop:double_penalty}
Consider a localized atmospheric feature with characteristic wavenumber $\ell_0$ and a position error $\Delta\theta$ on the sphere.
The MSE from this feature is approximately:
\begin{equation}
\text{MSE}_{\ell_0} \approx A^2\, \bigl[1 - \cos(\ell_0\, \Delta\theta)\bigr] \approx \tfrac{1}{2}\,A^2\, \ell_0^2\, \Delta\theta^2,
\label{eq:double_penalty}
\end{equation}
where $A$ is the feature amplitude.
The penalty scales as $\ell_0^2$: small-scale features incur quadratically larger penalties for the same displacement error.
\end{proposition}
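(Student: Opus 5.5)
To prove Proposition~\ref{prop:double_penalty}, I would reduce it to a one-dimensional computation with a single spectral mode and then expand for small displacements. Locally, near the feature, I model the field as a plane-wave packet of dominant wavenumber $\ell_0$. In geodesic distance along the displacement direction this is $f(x) = A\cos(\ell_0 x)$ times a slowly varying envelope, with $x$ measured in radians of arc. Correspondingly, on the sphere the feature is dominated by harmonics of degree $\ell_0$, since $\lambda \approx 2\pi a/\ell_0$ as noted after Eq.~\eqref{eq:ylm_explicit}. The forecast is then the displaced copy $f(x - \Delta\theta)$.

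The squared error, averaged over the feature's support (the envelope scale being much larger than $1/\ell_0$), is
\begin{equation*}
\tfrac{1}{|S|}\int_S |f(x) - f(x-\Delta\theta)|^2 dx = 2\bigl(\overline{f^2} - \overline{f(x) f(x-\Delta\theta)}\bigr).
\end{equation*}
This is the ``double penalty'': one error comes from the missed feature and one from the false feature. With $\overline{f^2} = A^2/2$ and the autocorrelation $\overline{f(x)f(x-\Delta\theta)} = \tfrac{A^2}{2}\cos(\ell_0\Delta\theta)$, the error equals $A^2[1-\cos(\ell_0\Delta\theta)]$, which is the first form in Eq.~\eqref{eq:double_penalty}. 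The second form follows from $1-\cos u = u^2/2 + O(u^4)$ when $\ell_0\Delta\theta \ll 1$, which gives $\tfrac12 A^2\ell_0^2\Delta\theta^2$ and hence the $\ell_0^2$ scaling.

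The main obstacle is justifying the planar, single-mode idealization on $\Sphere$. On the sphere, a rotation by $\Delta\theta$ does not simply multiply a degree-$\ell_0$ component by a phase. Instead, it mixes orders $m$ within that degree through the Wigner $D$-matrices. I would handle this by noting that rotations preserve each degree-$\ell$ subspace and that $\Ylm$ is an eigenfunction of $-\Delta_{\Sphere}$. I would then compute the second-order Taylor expansion of the rotated field, $\|f - R_{\Delta\theta}f\|^2 \approx \Delta\theta^2 \|\partial_\xi f\|^2$, where $\partial_\xi$ is the rotation generator. For an isotropically oriented feature, $\|\partial_\xi f\|^2$ averages to $\tfrac12\ell_0(\ell_0+1)\|f\|^2$ up to the geometric factor, and $\ell_0(\ell_0+1) \approx \ell_0^2$. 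This recovers the quadratic law rigorously to leading order. The exact $1-\cos$ form remains only approximate away from the local planar regime, and I would state it as such, consistent with the ``$\approx$'' in the statement.
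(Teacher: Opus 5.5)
Your argument is correct and is essentially the paper's own proof sketch: model the feature as $A\cos(\ell_0\theta)$, average the squared difference between truth and the displaced forecast over many wavelengths (the cross term gives $\tfrac{A^2}{2}\cos(\ell_0\Delta\theta)$) to obtain $A^2[1-\cos(\ell_0\Delta\theta)]$, and then Taylor-expand for small $\ell_0\Delta\theta$. Your final paragraph on rotations of $\Sphere$ and the generator $\partial_\xi$ goes beyond the paper, which stays entirely in the planar single-mode idealization; it correctly recovers the $\ell_0(\ell_0+1)\approx\ell_0^2$ scaling to leading order, and as you note, the isotropic direction-average changes only the prefactor relative to the paper's $\tfrac12$.
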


\begin{proof}[Proof sketch]
Model a localized feature as $A\cos(\ell_0\theta)$.
If the forecast places this feature at angular displacement $\Delta\theta$ from reality, averaging the squared error over a full wavelength and using orthogonality gives $\text{MSE} = A^2[1 - \cos(\ell_0\Delta\theta)]$.
For small displacements, the Taylor expansion $1 - \cos(x) \approx x^2/2$ yields the $\ell_0^2\Delta\theta^2$ scaling.
\end{proof}

The same 50\,km displacement error incurs $1{,}600\times$ the MSE penalty for a convective system ($\ell_0 = 200$) compared to a planetary wave ($\ell_0 = 5$), explaining the MSE optimizer's rational suppression of small-scale amplitude.
This scale-dependent penalty is the \emph{physical mechanism} underlying the spectral deficit in Theorem~\ref{thm:mse_bias}: the MSE optimizer faces a choice between predicting a small-scale feature at slightly the wrong location (incurring a large double penalty) or omitting it entirely (incurring only the feature's amplitude squared).
Whenever the position uncertainty exceeds approximately half the feature wavelength, suppression becomes the lower-loss strategy. This transition happens earlier for small-scale features because their wavelengths are shorter.
The spectral ratio plots (Fig.~\ref{fig:spectral_ratio}) visualize this effect. Consider a specific example: at day~5, the ratio drops below 0.5 near wavenumber $k \approx 80$, corresponding to a wavelength of roughly 500\,km. This means that by day~5, the typical positional uncertainty has grown large enough ($\gtrsim$250\,km) that the MSE optimizer rationally suppresses features at 500\,km and smaller scales. At day~15, the same 0.5 threshold has migrated down to $k \approx 30$ (wavelength $\sim$1{,}300\,km), reflecting the continued growth of positional uncertainty with lead time.

\subsection{Spectral Variance-Aware Loss Functions}

\begin{definition}[Modified Spherical Harmonic (MSH) Loss]
\label{def:msh_loss}
Following \citet{subich2025msh}, the MSH loss separates amplitude and phase:
\begin{multline}
\mathcal{L}_{\rm MSH} = \sum_{\ell=0}^{L} w_\ell \Bigl[\underbrace{\bigl(|\ahat_\ell^{\rm fcst}| - |\ahat_\ell^{\rm truth}|\bigr)^2}_{\text{amplitude error}} \\
+ \underbrace{\gamma_\ell\, \mathcal{L}_{\rm phase}(\ell)}_{\text{phase error}}\Bigr],
\label{eq:msh_loss}
\end{multline}
eliminating the double penalty by forcing correct spectral amplitude regardless of phase accuracy.
The MSH loss was shown by \citet{subich2025msh} to improve effective resolution from 1,250\,km to 160\,km on an otherwise identical GraphCast architecture, the largest single improvement from a loss function change in the literature.
This result provides perhaps the strongest single piece of evidence for the pipeline-dominance thesis: holding architecture and data constant, the loss function alone produced a factor-of-eight resolution improvement.
Similar conclusions emerge from the FastNet study \citep{daub2025fastnet}, which found that simplified architectures with optimized training procedures matched or exceeded more complex designs.
\end{definition}

The MSH loss addresses the double penalty for deterministic forecasts.
For probabilistic approaches, two alternatives learn the full conditional distribution rather than its mean:

\begin{definition}[Score-Matching Loss]
\label{def:score_loss}
A diffusion model trained with the score-matching objective \citep{song2021score}
\begin{equation}
\mathcal{L}_{\rm score} = \E_{t, \bm{u}} \norm{s_\theta(\bm{u}, t) - \nabla_{\bm{u}} \log p_t(\bm{u})}^2
\label{eq:score_loss}
\end{equation}
learns the full conditional distribution $p(\bm{u}(\tau) \mid \bm{u}(0))$, rather than collapsing it to the conditional mean.
Individual samples drawn from the learned distribution have spectral energy matching the true atmospheric distribution in expectation, because sampling from $p(\bm{u}(\tau) \mid \bm{u}(0))$ produces fields with the full variance $E_{\rm true}(\ell)$ at each degree $\ell$. The conditionally unpredictable variance $\mathrm{Var}_\ell(\tau)$ is realized as stochastic variability across samples rather than being averaged away.
This explains why Atlas produces forecasts with higher spectral energy at high wavenumbers relative to most MSE-trained models (Fig.~\ref{fig:spectral_ratio}): each Atlas forecast is a sample from the conditional distribution, not the mean of it.
The trade-off is that individual samples contain ``spectral noise,'' i.e., energy at high wavenumbers that is randomly phased rather than deterministically correct, which contributes to higher point-wise RMSE even as the spectral energy budget is better preserved.
\end{definition}

An intermediate approach between MSE's conditional mean and score matching's full distribution is the CRPS:

\begin{proposition}[CRPS Loss and Calibrated Spread]
\label{prop:crps}
A model trained with the Continuous Ranked Probability Score \citep[CRPS;][]{gneiting2007strictly}, which minimizes the expected absolute error minus half the expected pairwise spread among ensemble members,
preserves both the mean and the full spectral energy at optimality, because the spread-penalizing term rewards forecast variability, preventing the variance collapse that MSE induces.
In our evaluation, AIFS-ENS and FCN3 use CRPS-based losses and indeed maintain SFI~$> 0.94$ at day~5 (Table~\ref{tab:hmas_day5}), consistent with this prediction.
\end{proposition}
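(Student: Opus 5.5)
The plan is to reduce the statement to the classical fact that the CRPS is a strictly proper scoring rule \citep{gneiting2007strictly}, and then transfer that fact to the spectral energy in the same way Theorem~\ref{thm:mse_bias} transferred the conditional-mean property. Write the ensemble/kernel form of the score for a predictive distribution $F$ and scalar verification $y$:
\begin{equation}
\mathrm{CRPS}(F,y)=\E_F|X-y|-\tfrac12\E_F|X-X'|,
\end{equation}
with $X,X'$ i.i.d.\ from $F$. Apply it pointwise (or to each spectral coefficient $\ahat_\ell^m(\tau)$, or in its energy-score generalization to the whole field). The first step is to show that, conditional on $\bm{u}(0)$, the expected score $\E_{y\sim G}\mathrm{CRPS}(F,y)$ is minimized exactly at $F=G$, the true conditional law. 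For this I would use the identity
\begin{equation}
\E_G\mathrm{CRPS}(F,Y)-\E_G\mathrm{CRPS}(G,Y)=\int_{\mathbb{R}}\bigl(F(z)-G(z)\bigr)^2\,dz\;\geq 0,
\end{equation}
which holds with equality iff $F=G$. This identity is a direct computation from the threshold representation $\mathrm{CRPS}(F,y)=\int(F(z)-\mathbf{1}\{y\leq z\})^2dz$. Since the minimization is over all measurable maps from $\bm{u}(0)$ to distributions, the population optimum is therefore $F^\star(\cdot\mid\bm{u}(0))=p(\bm{u}(\tau)\mid\bm{u}(0))$.

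The second step reads off the two claimed properties. The mean of $F^\star$ is $\E[\bm{u}(\tau)\mid\bm{u}(0)]$, so the mean is preserved. For the spectral energy, a member $\tilde{\bm{u}}$ drawn from $F^\star$ has coefficients $\tilde{a}_\ell^m$ with the same conditional second moment as the truth. The total-variance computation in the proof of Theorem~\ref{thm:mse_bias} then gives
\begin{equation}
\sum_m\E|\tilde{a}_\ell^m|^2=\hat{E}_{\rm MSE}(\ell,\tau)+\mathrm{Var}_\ell(\tau)=E_{\rm true}(\ell).
\end{equation}
So the deficit $\Delta E$ of Eq.~\eqref{eq:missing_variance} vanishes. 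In this accounting, the spread term $-\tfrac12\E|X-X'|$ is exactly what restores the $\mathrm{Var}_\ell(\tau)$ contribution: dropping it leaves the absolute-error score, whose minimizer is a point mass at the conditional median, and that collapses the variance just as MSE does.

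The main obstacle is that CRPS is typically applied marginally, per grid point or per coefficient, and marginal propriety only pins down the marginal laws. The spectral energy $E(\ell)=\sum_m|\ahat_\ell^m|^2$ depends on second moments of linear functionals of the field. If the loss is applied in spectral coordinates, as in FCN3's spectral term, marginal matching of each $\ahat_\ell^m$ suffices, because $E(\ell)$ is a sum of per-coefficient second moments. If the loss is applied only on the grid, grid-point marginals do not determine cross-covariances, and hence do not determine the spectrum. I would handle this case by replacing CRPS with its multivariate energy-score analogue, which is strictly proper for the joint law, or by stating the spectral claim under the assumption that the loss includes spectral-coordinate terms. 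I would also note that for finite ensembles the fair/afCRPS correction is needed to keep the score unbiased. Finally, the claim holds only at population optimality: estimation and optimization error may still leave a residual deficit, which is consistent with the observed SFI being below one.
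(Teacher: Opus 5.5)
Your argument is correct, and it is a genuinely different route, because the paper gives no proof at all. Its only justification is the clause inside the statement (the spread term ``rewards forecast variability'') plus a table lookup. You should state that lookup explicitly to discharge the empirical sentence: at day~5 the SFI is $0.977$ for FCN3 and $0.946$ for AIFS-ENS in Table~\ref{tab:hmas_day5}.

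The paper's heuristic is one-sided. The term $-\tfrac12\E_F|X-X'|$ on its own would reward unbounded spread, so it cannot explain why the optimum has \emph{exactly} the conditional variance rather than more. Your identity $\E_G\mathrm{CRPS}(F,Y)-\E_G\mathrm{CRPS}(G,Y)=\int(F-G)^2\,dz$ supplies that balance and pins the minimizer to the true conditional (marginal) law. The total-variance bookkeeping of Theorem~\ref{thm:mse_bias} then closes the deficit.

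Your route also delimits the claim in ways the paper's unqualified wording does not. Marginal propriety yields the spectral statement only for a CRPS applied to the coefficients, or for a joint energy score. FCN3's spectral term is of the first kind, provided it is applied to the real and imaginary parts of $\ahat_\ell^m$ separately. If AIFS-ENS's afCRPS is applied grid-pointwise, as is standard, Parseval fixes only the total $\sum_\ell E(\ell)$, not its distribution over $\ell$. AIFS-ENS's high SFI is then an empirical finding rather than a consequence of the proposition. This fits the paper's own report of spatially incoherent divergence in AIFS-ENS. The same caveat applies to the later use of $R_{\rm CRPS}(\ell,\tau)\approx 1$ in Eq.~\eqref{eq:predicted_sfi}.

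One small correction: only the fair estimator is unbiased. For an $M$-member ensemble the expected afCRPS equals $\mathrm{CRPS}(F,y)+\frac{1-\alpha}{2M}\E_F|X-X'|$, so for $\alpha<1$ its optimum is slightly underdispersed. This is a further, small source of residual spectral deficit beyond estimation and optimization error.
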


Table~\ref{tab:loss_taxonomy} summarizes the four loss function families, their spectral properties, and the models in our evaluation that employ each.
The taxonomy predicts a clear partition of models into spectral-deficit (MSE) and spectral-preserving (CRPS, score matching) groups, a prediction confirmed by Figs.~\ref{fig:spectral_comparison}--\ref{fig:spectral_ratio}.

\begin{table}[h!]
\centering
\caption{Loss function taxonomy with spectral properties and corresponding models from our ten-model evaluation.}
\label{tab:loss_taxonomy}
\scriptsize
\setlength{\tabcolsep}{1.5pt}
\begin{tabular}{@{}lp{1.5cm}cp{1.9cm}@{}}
\toprule
\textbf{Loss} & \textbf{Target} & \textbf{Spectral} & \textbf{Models} \\
\midrule
MSE (wt.) & Cond.\ mean & Deficit & AIFS, Aurora, FuXi, GC, Pangu \\[2pt]
MSE & Cond.\ mean & Deficit & FengWu, SFNO \\[2pt]
CRPS & Calibrated distr. & Preserved & AIFS-ENS, FCN3 \\[2pt]
Score match. & Full $p(\bm{u}|\bm{u}_0)$ & Preserved & Atlas \\
\bottomrule
\end{tabular}
\end{table}

\section{The Learning Pipeline Error Decomposition}
\label{sec:pipeline}

\S\ref{sec:approx} established comparable approximation capacity; \S\ref{sec:loss} showed loss function determines spectral structure.
We now unify these into a single decomposition of the complete learning pipeline (the four-component system of architecture $\mathcal{A}$, loss function $\mathcal{L}$, data distribution $\mathcal{D}$, and training procedure $\mathcal{T}$).

\begin{proposition}[Pipeline Dominance at Operational Scales]
\label{prop:dominance}
The total forecast error decomposes as:
\begin{multline}
\E\norm{\Phi_\tau - \hat{\Phi}_\tau}^2 = \underbrace{\varepsilon_{\rm arch}(\mathcal{A})}_{\text{representation}} + \underbrace{\varepsilon_{\rm loss}(\mathcal{L})}_{\text{loss bias}} \\
+ \underbrace{\varepsilon_{\rm data}(\mathcal{D})}_{\text{data coverage}} + \underbrace{\varepsilon_{\rm train}(\mathcal{T})}_{\text{training procedure}}.
\label{eq:refined_decomp}
\end{multline}
At current operational scales ($\sim$0.25$^\circ$, $\sim$40 years ERA5, $>10^7$ parameters):
$\varepsilon_{\rm loss} + \varepsilon_{\rm data} + \varepsilon_{\rm train} \gg \varepsilon_{\rm arch}$
for all architecture families in Table~\ref{tab:convergence}.
\end{proposition}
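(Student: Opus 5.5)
The proof has two parts: an exact identity that defines the four terms, and a quantitative comparison at operational scales.

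\emph{Step 1: the identity.} Here I would make the decomposition an exact telescoping identity rather than try to derive it from first principles. I fix the $L^2(\mu)$ norm, where $\mu$ is the invariant measure restricted to a neighbourhood of $\mathcal{A}$. I then define a chain of reference operators. The first is $h^\ast_{\mathcal{A}}$, the best-in-class approximant attaining $\varepsilon_{\rm arch}$ in Eq.~\eqref{eq:error_decomp}. The second is $h^\ast_{\mathcal{A},\mathcal{L}}$, the population minimizer of $\mathcal{L}$ over $\mathcal{H}$. The third is $h^\ast_{\mathcal{A},\mathcal{L},\mathcal{D}}$, the empirical minimizer on $n$ samples from $\mathcal{D}$. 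The last is the trained $\hat{\Phi}_\tau$. Each $\varepsilon_\bullet$ is defined as the increment in squared error between consecutive members of the chain. The sum then equals the total error identically. The cross terms vanish where a Pythagorean relation holds (e.g.\ $h^\ast_{\mathcal{A}}$ is an orthogonal projection when $\mathcal{H}$ is linear or convex). Otherwise they are absorbed into the adjacent increment.

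This refines Eq.~\eqref{eq:error_decomp}: $\varepsilon_{\rm est}$ splits into $\varepsilon_{\rm loss}+\varepsilon_{\rm data}$, and $\varepsilon_{\rm train}$ plays the role of $\varepsilon_{\rm opt}$.

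\emph{Step 2: bounding $\varepsilon_{\rm arch}$ from above.} I would use Propositions~\ref{prop:sfno_approx} and \ref{prop:gnn_approx}, together with the unified rate $N^{-s/2}$. At $N\sim 10^6$ and $s\gtrsim 1$–$1.5$ (from the spectral slopes in Eq.~\eqref{eq:energy_spectrum}), this gives a relative $L^2$ error of order $10^{-3}$–$10^{-4.5}$ times $\|f\|_{H^s}$. I would assume $\delta_{\rm arch}$ is negligible for multi-scale meshes and for parameter counts above $10^7$.

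\emph{Step 3: bounding $\varepsilon_{\rm loss}$ from below.} For the MSE-trained models, Theorem~\ref{thm:mse_bias} gives a deterministic loss-induced floor. The point-wise error of the conditional mean is at least $\sum_\ell \mathrm{Var}_\ell(\tau)$. This floor is a fixed fraction of the climatological variance at medium range, because the Lyapunov growth of \S\ref{subsec:dyn_sys} saturates the high-$\ell$ variance within days. The deficit itself is at least that same sum, so it is $O(1)$ relative to $E_{\rm true}$ at $k\gtrsim 80$ by day 5.

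\emph{Step 4: bounding $\varepsilon_{\rm data}$ from below.} I would give a generalization bound via Rademacher complexity \citep{bartlett2002rademacher} with effective sample size $n_{\rm eff}\sim$ (40 yr)/(decorrelation time), i.e.\ about $10^3$–$10^4$ independent states. This yields a rate like $\sqrt{d_{\mathcal{A}}/n_{\rm eff}}$ that is much larger than $10^{-3}$.

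Comparing the bounds from Steps 2–4 gives the claimed $\gg$.

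\emph{Main obstacle.} The hardest step is making Step 4 rigorous. Rademacher bounds for networks with $>10^7$ parameters are vacuous or only upper bounds, so they do not give the lower bound on $\varepsilon_{\rm data}+\varepsilon_{\rm train}$ that is needed. I would first try to avoid this by relying on the loss term alone: Step 3 is a genuine lower bound, independent of architecture, and it already dominates Step 2 at medium lead times. That argument covers only MSE-trained models and $\tau$ beyond the first few days. For CRPS/score-matching models and short leads, I would fall back on the empirical lower bound from the multi-model analyses: the tight inter-architecture RMSE spread versus its level, and the high error consensus ratio. Consequently the statement would end up proved rigorously only in the MSE medium-range regime and supported semi-empirically elsewhere.
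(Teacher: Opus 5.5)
Your outline follows the paper's own support for this proposition, which the paper does not actually prove. That support consists of the $\mathcal{O}(N^{-s/2})$ heuristic for $\varepsilon_{\rm arch}$, the identification of $\varepsilon_{\rm loss}$ with the missing variance of Theorem~\ref{thm:mse_bias}, and the ten-model aggregate evidence, together with an explicit admission that $\varepsilon_{\rm arch}$ and $\varepsilon_{\rm train}$ cannot be separated. The genuine gap is that once Step~1 is made precise, Step~3, the part you call rigorous, no longer holds.

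Write $m=\E[\bm{u}(\tau)\mid\bm{u}(0)]$. By the tower property, $\E\norm{\bm{u}(\tau)-h(\bm{u}(0))}^2=\E\norm{\bm{u}(\tau)-m}^2+\norm{m-h}^2$ for every $h$. Minimizing the population MSE over $\mathcal{H}$ is therefore the same problem as finding the best-in-class approximant. So $h^\ast_{\mathcal{A},\mathcal{L}}=h^\ast_{\mathcal{A}}$, and $\varepsilon_{\rm loss}(\mathrm{MSE})=0$ in your chain. For a weighted MSE whose weights differ from the evaluation norm's, the same argument gives $\varepsilon_{\rm loss}\le(\kappa-1)\,\varepsilon_{\rm arch}$, with $\kappa$ the ratio of norm-equivalence constants. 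In that case the loss term is controlled by the architecture term rather than dominating it.

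The quantity $\sum_\ell\mathrm{Var}_\ell(\tau)$, averaged over $\bm{u}(0)$, is the Bayes risk $\E\norm{\bm{u}(\tau)-m}^2$. This is the smallest squared error \emph{any} forecast can have. There are two cases.
\begin{itemize}
\item If $\Phi_\tau$ is a deterministic function of $\bm{u}(0)$, the Bayes risk is zero and Theorem~\ref{thm:mse_bias} gives no deficit.
\item If error is measured against the random $\bm{u}(\tau)$, the Bayes risk is a pipeline-independent floor inside $\inf_{h\in\mathcal{H}}$. Then $\varepsilon_{\rm arch}\ge\sum_\ell\mathrm{Var}_\ell(\tau)$, which is the very quantity you wanted on the other side of $\gg$.
\end{itemize}
The theorem describes a deficit in forecast \emph{variance}, not an excess in forecast \emph{error}. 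In a squared-error decomposition the missing variance does appear as a loss-induced term, but only for the sample-based models. A calibrated single CRPS or score-matching member has error $\sum_\ell\mathrm{Var}_\ell(\tau)$ against $m$, and $2\sum_\ell\mathrm{Var}_\ell(\tau)$ against $\bm{u}(\tau)$. So the regime you declare rigorous (MSE, medium range) is exactly where your loss term vanishes, and Step~3 applies only to the models you excluded. To use the theorem as a lower bound you would need an error functional that penalizes lost spectral amplitude, such as an energy-spectrum discrepancy or a proper scoring rule. The left-hand side would then no longer be $\E\norm{\Phi_\tau-\hat{\Phi}_\tau}^2$.

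The other steps do not close the gap.
\begin{itemize}
\item Propositions~\ref{prop:sfno_approx}--\ref{prop:gnn_approx} bound the truncation of a single field $f\in\Hs$. They do not bound $\inf_{h\in\mathcal{H}}\norm{\Phi_\tau-h}$ for a nonlinear operator under a fixed parameter budget. All of the operator-approximation content sits in $\delta_{\rm arch}$, so assuming it negligible assumes the architecture half of the claim.
\item Step~4, as you note, gives only upper bounds.
\item In your chain only $\varepsilon_{\rm loss}$ is guaranteed nonnegative, because $h^\ast_{\mathcal{A}}$ is optimal in $\mathcal{H}$. $\varepsilon_{\rm data}$ and $\varepsilon_{\rm train}$ can be negative. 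For example, implicit regularization can make the trained network beat the exact empirical minimizer, or an under-dispersed CRPS model can lie closer to $m$ than the exact CRPS minimizer. A lower bound on one increment therefore does not transfer to the sum.
\item The empirical fallback does not separate the terms. A large shared error component and tight RMSE clustering are what any pipeline-independent floor (Bayes risk, shared initial-condition error) produces, whatever the relative sizes of $\varepsilon_{\rm arch}$ and the pipeline terms.
\end{itemize}
With total error defined as a squared norm, the statement is an empirically motivated conjecture, not a theorem in any regime. That is also all the paper's own argument supports, and your write-up should present it that way.
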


\paragraph{Physical meaning of each term.}
$\varepsilon_{\rm loss}$ is the spectral bias from Theorem~\ref{thm:mse_bias}: the missing variance $\sum_\ell \mathrm{Var}_\ell(\tau)$ for MSE-trained models.
$\varepsilon_{\rm data}$ reflects two distinct limitations: (i)~finite sampling of the atmospheric distribution by ERA5, i.e., how thoroughly the $\sim$40-year reanalysis covers the space of possible atmospheric states, particularly rare extremes and unusual flow regimes; and (ii)~systematic biases inherited from the data sources themselves, including ERA5's own model biases and, for multi-source pretraining (as in Aurora, which uses ERA5 + GFS + CMIP6), the structural biases of each contributing model and/or dataset.
Multi-source pretraining can reduce sampling limitations by exposing the model to a broader range of states, but may introduce additional systematic biases from the contributing datasets.
$\varepsilon_{\rm train}$ accounts for training procedure choices: autoregressive rollout fine-tuning, learning rate schedule, batch size, number of epochs.
$\varepsilon_{\rm arch} = \mathcal{O}(N^{-s/2})$, negligible at current grid sizes ($N \sim 10^6$).

\paragraph{Caveat on architecture--training confounding.}
An important limitation of our empirical evaluation is that each model uses its own pretrained weights with its own combination of loss function, data, and training procedure.
We cannot isolate $\varepsilon_{\rm arch}$ from $\varepsilon_{\rm train}$ for any individual model; this would require controlled ablation studies training the same data with the same loss on different architectures (as partially done in \citealt{daub2025fastnet}).
Our evidence for pipeline dominance comes instead from the \emph{aggregate pattern} across ten models: the fact that architecturally diverse models cluster tightly (Figs.~\ref{fig:rmse}--\ref{fig:acc}), that error is predominantly shared across architectures (Fig.~\ref{fig:error_consensus}), and that changing the loss function has a larger effect than changing the architecture (\citealt{subich2025msh}).

\subsection{Statistical Methodology: Inter-Initialization Confidence Intervals}
\label{subsec:bootstrap}

To quantify uncertainty, we average all metrics across 30 initialization dates spanning all four seasons (8 DJF, 7 MAM, 8 JJA, 7 SON), including four extreme events embedded among the routine dates.
For each model and variable at each lead time, we compute the mean across all available initializations and construct 90\% confidence intervals using the inter-initialization-date standard error:
\begin{equation}
\text{CI}_{90\%} = \bar{x} \pm 1.645 \cdot \frac{s}{\sqrt{n}},
\label{eq:ci}
\end{equation}
where $\bar{x}$ is the mean metric across initialization dates, $s$ is the inter-initialization standard deviation, and $n$ is the number of dates contributing data at that lead time.
This approach treats each initialization date as an independent realization of the atmospheric state, with the inter-date spread capturing both temporal sampling variability and flow-regime dependence.
The 90\% CIs appear as shaded envelopes around the multi-initialization mean in Figs.~\ref{fig:rmse}--\ref{fig:acc}.

\paragraph{Empirical validation: RMSE analysis.}
Figure~\ref{fig:rmse} displays the global area-weighted RMSE versus lead time for three key verification variables (Z500, T850, T2M), averaged over 30 initialization dates with inter-initialization-date 90\% CIs (see Supplementary Fig.~\ref{fig:rmse_all_vars} for all six variables).
The central empirical finding is the \emph{tight clustering} of architecturally diverse models: despite spanning five fundamentally different architecture families (GNNs, vision transformers, spherical Fourier operators, U-Transformers, diffusion transformers), the inter-model RMSE spread is remarkably narrow relative to the total error magnitude.
For Z500, the day~5 RMSE range across all ten models is approximately 24--39\,m (a spread of $\sim$15\,m around a mean of $\sim$30\,m), while by day~10 the range widens to $\sim$65--89\,m.
This tight clustering is the direct empirical signature of $\varepsilon_{\rm arch} \ll \varepsilon_{\rm est}$ (Proposition~\ref{prop:dominance}): if architecture were the dominant source of error, we would expect models from different families to separate into distinct error tiers, with spectral methods grouping separately from graph-based or patch-based approaches.
Instead, the models interleave across lead times, with rank changes occurring throughout the forecast range (see Fig.~\ref{fig:scorecard}).

The inter-initialization confidence intervals provide a second line of evidence for pipeline dominance.
The 90\% CIs are narrow relative to the inter-model spread throughout the forecast range, indicating that the observed model differences are statistically robust across the 30 initialization dates and not artifacts of temporal sampling from a particular flow regime.
At extended range (days 10--15), where fewer initialization dates contribute complete forecasts, the CIs widen slightly, but the qualitative ordering of models remains consistent.

The variable-dependent error structure also supports the framework's predictions.
Z500 (a smooth, large-scale field with high effective Sobolev regularity $s$) shows the tightest inter-model clustering, consistent with the approximation theory result that smoother fields are approximated at the same rate by all architectures (\S\ref{sec:approx}).
T2M (a rough, boundary-layer-influenced field with much lower effective $s$) shows greater inter-model spread, suggesting that architecture and training procedure contribute more to skill differences for low-regularity fields. This nuance is consistent with the pipeline-dominance thesis, which asserts that architecture matters less at \emph{current scales} for typical fields, without claiming it is entirely irrelevant for all fields.

\begin{figure*}[!t]
\centering
\includegraphics[width=\textwidth]{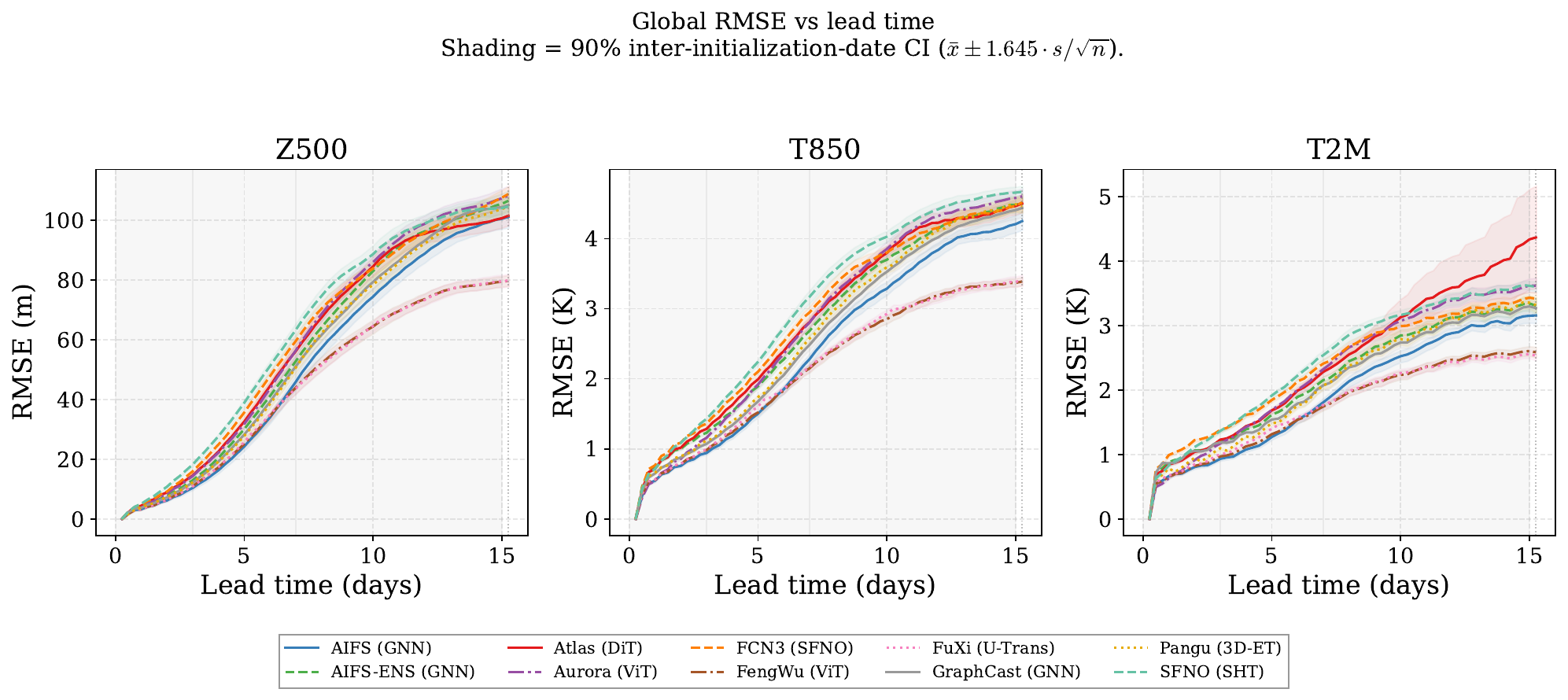}
\caption{Global area-weighted RMSE vs.\ lead time for Z500, T850, and T2M, averaged over 30 initialization dates spanning all four seasons, with inter-initialization-date 90\% confidence intervals (shaded bands).
The tight clustering of architecturally diverse models confirms $\varepsilon_{\rm arch} \ll \varepsilon_{\rm est}$ (Proposition~\ref{prop:dominance}).
See Supplementary Fig.~\ref{fig:rmse_all_vars} for all six verification variables.}
\label{fig:rmse}
\end{figure*}

\paragraph{Empirical validation: ACC analysis.}
The Anomaly Correlation Coefficient \citep[ACC;][]{murphy1989skill} provides a complementary view to RMSE by measuring the pattern correlation between forecast and observed anomalies (relative to climatology), weighted by $\cos\phi$ to account for grid convergence.
Unlike RMSE, which penalizes all errors equally, ACC rewards forecasts that correctly capture the spatial pattern of anomalies even if the absolute magnitudes are imperfect.
The traditional threshold ACC$= 0.6$ defines ``useful'' forecast skill in operational meteorology \citep{murphy1989skill}; below this threshold, the forecast pattern correlation with reality is considered too weak to inform decision-making.

Figure~\ref{fig:acc} reveals several features relevant to the pipeline-dominance framework.
For Z500, all models maintain ACC $> 0.6$ through day~7; the best-performing models (FengWu, FuXi, AIFS) sustain useful skill to approximately day~10, while the majority cross below the 0.6 threshold between days~7 and~10.
The useful skill horizon varies by only $\sim$3 days across architectures, a narrow spread given the architectural diversity.
The ACC decay is approximately exponential, consistent with the information-theoretic prediction of Theorem~\ref{thm:info_decay}: as mutual information $I(\bm{u}(0); \bm{u}(\tau))$ decays at rate $h_{\rm KS}$, the forecast's ability to capture anomaly patterns diminishes accordingly.
The decay rate is faster for T2M than for Z500, consistent with the higher effective Lyapunov exponents associated with boundary-layer dynamics compared to mid-tropospheric geopotential.

Notably, the model ranking by ACC differs from the ranking by RMSE at certain lead times.
A model can achieve low RMSE through aggressive smoothing (which reduces point-wise errors) while simultaneously achieving lower ACC (because smoothing washes out the anomaly pattern).
This divergence between RMSE and ACC rankings is itself evidence for the multi-dimensional nature of forecast quality that motivates our holistic assessment (\S\ref{sec:holistic}).

\begin{figure*}[!t]
\centering
\includegraphics[width=\textwidth]{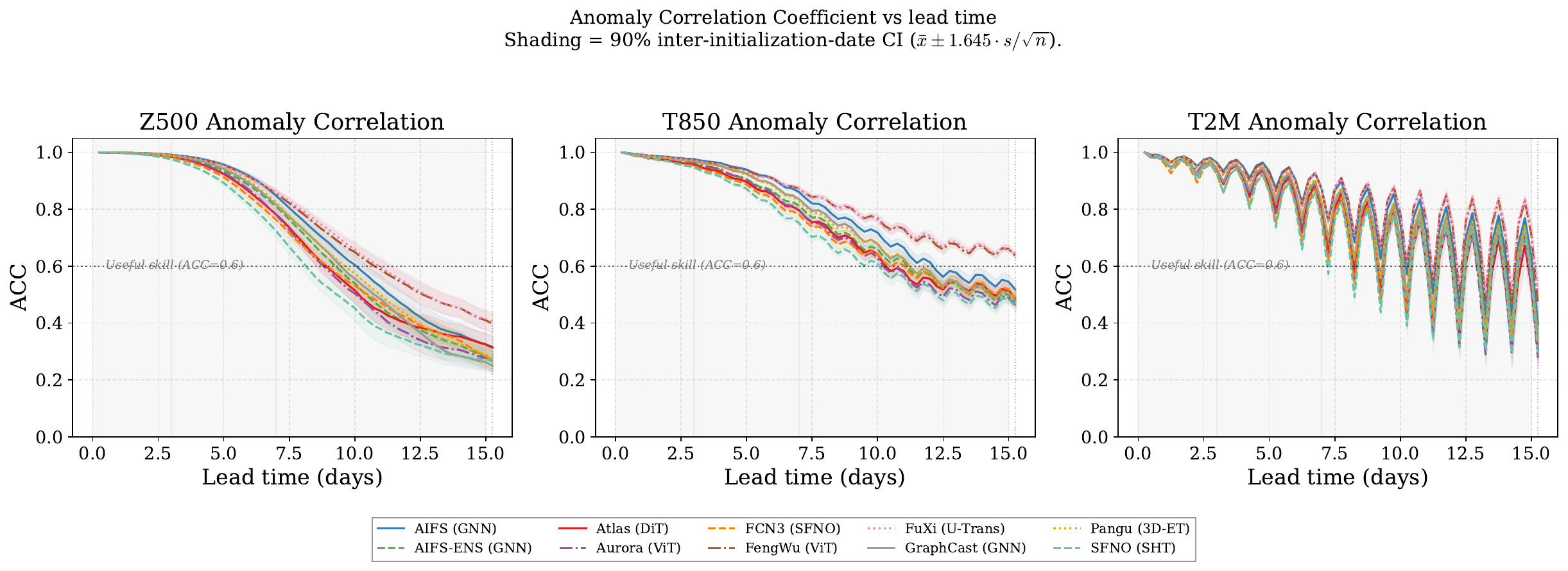}
\caption{Anomaly Correlation Coefficient vs.\ lead time for Z500, T850, and T2M, with inter-initialization-date 90\% CIs.
Most models sustain useful skill (ACC $> 0.6$) for Z500 through $\sim$7 days; the best models extend to $\sim$10 days.}
\label{fig:acc}
\end{figure*}

\paragraph{Empirical validation: Scorecard analysis.}
The model scorecard (Fig.~\ref{fig:scorecard}) translates RMSE into lead-time-resolved rankings, revealing a pattern that is central to the pipeline-dominance argument: \emph{the top-performing models span multiple architecture families}.
AIFS (GNN), FengWu (ViT), and FuXi (U-Transformer), three fundamentally different architectures, occupy the top three positions at most lead times.
If architecture were the primary determinant of skill, we would expect models from the same family to cluster together; instead, the top tier contains one representative from each of three distinct paradigms.
While the ordering within tiers shifts modestly across lead times (e.g., FuXi overtakes FengWu at days~7--13), the key observation is that \emph{architecture family does not predict rank tier}.

This rank instability has a clear physical interpretation.
At short lead times, when the atmosphere is still highly predictable, forecast skill depends primarily on the model's ability to resolve fine-scale features and avoid phase errors in rapidly propagating systems.
Here, the training procedure (timestep, rollout strategy) and loss function details (latitude weighting, pressure-level weighting) are decisive.
At extended lead times, when small-scale predictability has been exhausted and the forecast must capture the evolution of large-scale patterns, the model's representation of slow dynamics, energy conservation properties (Autoregressive Stability Index, ASI; defined in \S\ref{sec:spectral}), and resistance to autoregressive error accumulation become dominant.
These are influenced more by architectural properties (e.g., Pangu's 3D attention across pressure levels enabling better vertical coupling, GraphCast's multi-scale icosahedral mesh providing natural scale separation) and training methodology (loss weighting, inference strategy) than by raw representational capacity.

The scorecard thus provides direct evidence that the \emph{relative importance} of pipeline components shifts with lead time, supporting the multi-dimensional assessment framework developed in \S\ref{sec:holistic}.

\begin{figure*}[!t]
\centering
\includegraphics[width=\textwidth]{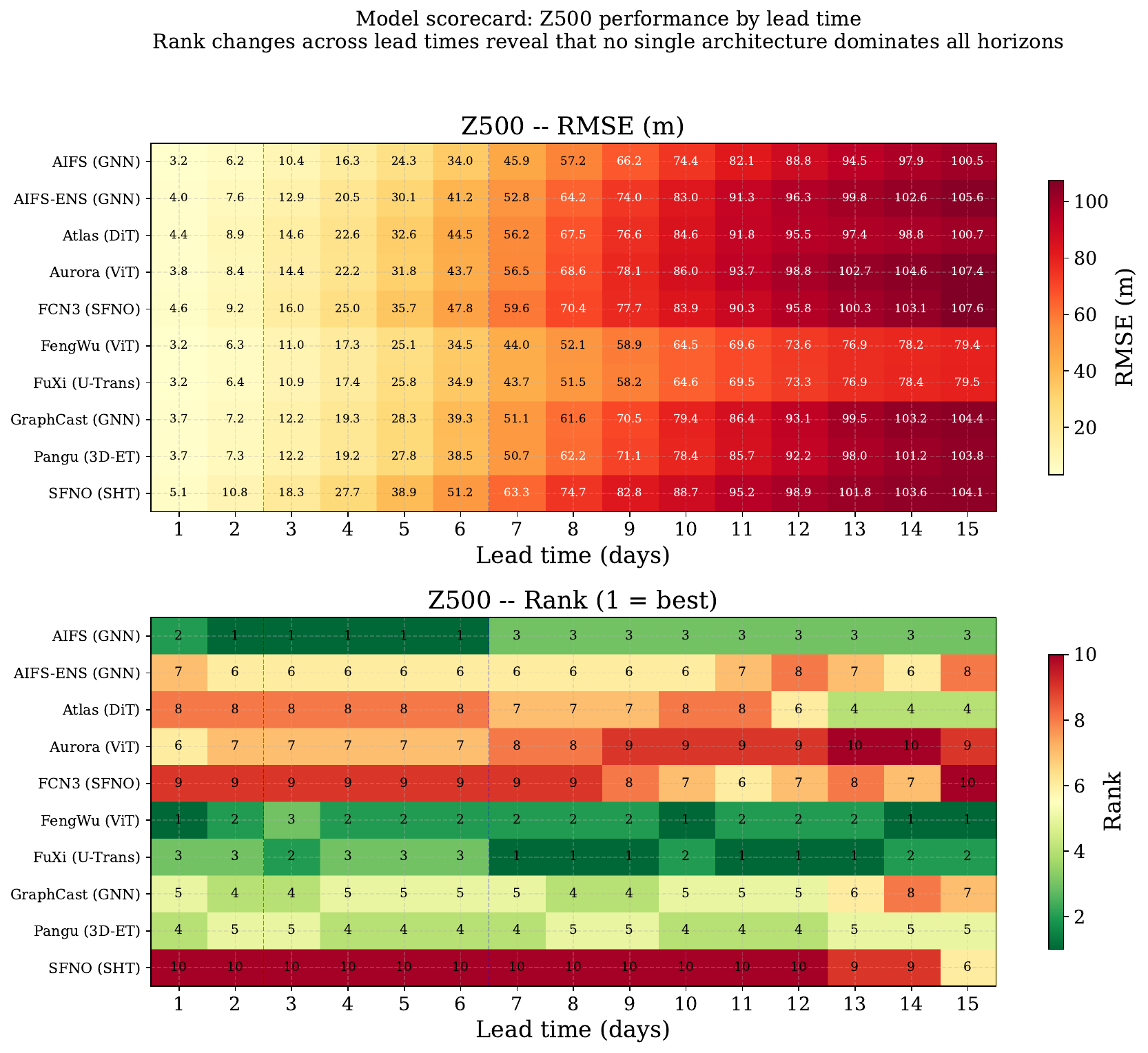}
\caption{Model scorecard for Z500: RMSE (top) and rank (bottom, 1=best) by lead time.
Rank changes confirm that no single architecture dominates all horizons.
See Supplementary Figs.~\ref{fig:scorecard_t2m}--\ref{fig:scorecard_t850} for T2M and T850.}
\label{fig:scorecard}
\end{figure*}

\paragraph{Empirical validation: Regional RMSE decomposition.}
Global RMSE averages mask important latitude-dependent skill variations driven by the distinct dynamical regimes of the tropics, extratropics, and polar regions.
We decompose RMSE into three latitude bands: Tropics ($|\phi| < 20^\circ$), Extratropics ($20^\circ \leq |\phi| < 60^\circ$), and Polar ($|\phi| \geq 60^\circ$), using area-weighted RMSE within each band.

Figure~\ref{fig:regional_rmse} reveals several physically important patterns for Z500.
The tropical result is unsurprising: Z500 RMSE in the tropics remains low ($\sim$10--22\,m) and nearly flat through day~10, reflecting the low geopotential variance where dynamics are governed by the slowly varying Hadley circulation rather than by baroclinic instability (see Supplementary Figs.~\ref{fig:regional_t2m}--\ref{fig:regional_t850} for T2M and T850, where tropical variance is larger and the decomposition is more informative).
The extratropical and polar bands are more revealing:
(i)~\emph{Polar amplification}: Polar RMSE grows fastest with lead time, reaching $\sim$100--140\,m by day~10 compared to $\sim$70--100\,m in the extratropics, consistent with the higher effective Lyapunov exponents at high latitudes driven by baroclinic instability, stratospheric sudden warmings, and polar vortex dynamics.
(ii)~\emph{Comparable inter-model spread}: Both the extratropical and polar bands show similar relative model spread (coefficient of variation $\sim$11\% at day~10), indicating that pipeline differences---loss weighting, timestep, autoregressive rollout---manifest comparably across both dynamically active latitude bands.

\begin{figure*}[!t]
\centering
\includegraphics[width=\textwidth]{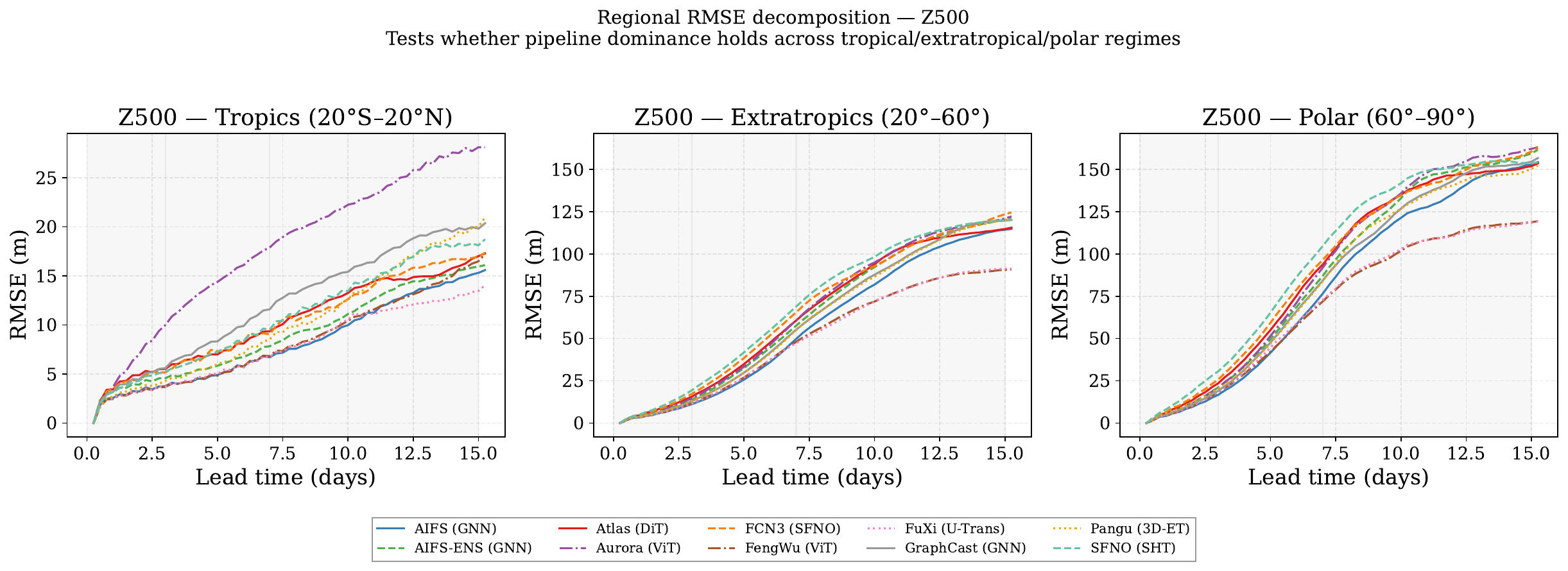}
\caption{Regional RMSE decomposition for Z500 by latitude band: Tropics ($|\phi| < 20^\circ$), Extratropics ($20^\circ \leq |\phi| < 60^\circ$), Polar ($|\phi| \geq 60^\circ$).
Note that y-axis ranges differ across panels to resolve intra-band model spread; cross-panel comparison of absolute RMSE values requires reading the axis labels.
Polar errors amplify fastest (reaching $\sim$100--140\,m by day~10 vs.\ $\sim$70--100\,m in the extratropics); extratropical and polar bands show comparable relative inter-model spread.
See Supplementary Figs.~\ref{fig:regional_t2m}--\ref{fig:regional_t850} for T2M and T850.}
\label{fig:regional_rmse}
\end{figure*}

\section{Dynamical Predictability Theory}
\label{sec:spectral}

\subsection{The Spectral Transfer Operator}

The true atmospheric dynamics in spectral space (linearized about a reference state) are \citep{vallis2017atmospheric}:
\begin{equation}
\ahat_\ell^m(t+\Delta t) = \sum_{\ell',m'} \mathcal{T}_{\ell\ell'}^{mm'}\, \ahat_{\ell'}^{m'}(t) + \text{nonlinear},
\label{eq:spectral_transfer}
\end{equation}
where $\mathcal{T}_{\ell\ell'}^{mm'}$ is the spectral transfer operator encoding wave interactions, advection, and dissipation.
A learned model $\hat{\Phi}_\tau$ implicitly approximates this operator, and its fidelity at each scale $\ell$ can be quantified by comparing forecast and verification spectra.
We introduce two complementary metrics for this purpose:

\begin{definition}[Spectral Fidelity Index (SFI)]
\label{def:SFI}
For a model $\hat{\Phi}$ with forecast spectrum $\hat{E}_f(\ell)$ and ERA5 verification spectrum $E_a(\ell)$:
\begin{equation}
\text{SFI} = 1 - \frac{1}{2}\, \frac{1}{|\mathcal{K}|}\sum_{\ell \in \mathcal{K}} \bigl|\log_{10}\bigl(\hat{E}_f(\ell)/E_a(\ell)\bigr)\bigr|,
\label{eq:SFI}
\end{equation}
where $\mathcal{K} = \{\ell : 1 \leq \ell \leq 200, E_a(\ell) > 0, \hat{E}_f(\ell) > 0\}$.
SFI = 1 indicates perfect spectral fidelity.
The logarithmic ratio $\log_{10}(E_f/E_a)$ symmetrically penalizes both spectral deficit (MSE smoothing) and spectral excess (diffusion model noise).
SFI provides an integrated measure across wavenumbers; the following complementary metric identifies the \emph{cutoff scale} at which a model's spectrum breaks down:
\end{definition}

\begin{definition}[Effective Resolution]
\label{def:eff_res}
The effective resolution $\ell_{\rm eff}$ is the highest wavenumber at which the spectral energy ratio exceeds a threshold \citep{skamarock2004evaluating}:
\begin{equation}
\ell_{\rm eff} = \max\bigl\{\ell : \hat{E}_f(\ell)/E_a(\ell) \geq 0.5\bigr\}.
\label{eq:eff_res}
\end{equation}
We normalize to $[0,1]$ by $\ell_{\rm eff}/300$.
This concept derives from the NWP practice of defining effective resolution as the scale at which the model's energy spectrum falls significantly below the observed spectrum \citep{skamarock2004evaluating}.
Both SFI and $\ell_{\rm eff}$ are computed for each model at each lead time and initialization date; the multi-initialization averages appear in Tables~\ref{tab:hmas_day3}--\ref{tab:hmas_day15} and on the SFI and $\ell_{\rm eff}$ axes of the HMAS radar chart (Fig.~\ref{fig:radar}).
At day~5, the SFI--$\ell_{\rm eff}$ correlation is $\rho = 0.73$: the two metrics are related but not redundant, as SFI integrates over the full spectral range while $\ell_{\rm eff}$ captures the specific wavenumber at which fidelity collapses.
\end{definition}

\paragraph{Caveat: one-sided threshold pathology.}
An important caveat applies at extended lead times: because $\ell_{\rm eff}$ uses a one-sided threshold ($E_f/E_a \geq 0.5$), it does not penalize spectral energy \emph{excess}.
A model experiencing severe energy redistribution or noise amplification can push $E_f(\ell)/E_a(\ell)$ above 0.5 at all wavenumbers, yielding $\ell_{\rm eff} = 1.0$ despite poor spectral fidelity.
This pathology manifests in FengWu at day~15: despite catastrophic energy drift (ASI $\approx 0.01$) and very low SFI ($= 0.078$), its $\ell_{\rm eff}$ saturates at 1.0 because the inflated spectral energy exceeds the half-power threshold everywhere.
SFI, which symmetrically penalizes both deficit and excess via the log-ratio, correctly diagnoses this as poor spectral fidelity.
The $\ell_{\rm eff}$ values in Table~\ref{tab:hmas_day15} should therefore be interpreted with caution for models exhibiting spectral inflation at extended range.

\subsection{Scale-Dependent Error Growth and Lyapunov Analysis}

The atmospheric error at spectral degree $\ell$ grows according to the scale-dependent framework originating with \citet{lorenz1969predictability} (see also \citealt{vallis2017atmospheric}, \S8.6):
\begin{equation}
\varepsilon_\ell(t) \approx \varepsilon_\ell(0)\, e^{\sigma_\ell\, t} + \int_0^t e^{\sigma_\ell(t-t')} S_\ell(t')\,dt',
\label{eq:error_growth}
\end{equation}
with growth rate $\sigma_\ell \propto [\ell^2\, E(\ell)]^{1/2}$.
The first term represents exponential amplification of initial errors at rate $\sigma_\ell$, while the second captures the upscale cascade of energy from smaller scales via the source term $S_\ell(t')$.
The scale-dependence of $\sigma_\ell$ through the energy spectrum $E(\ell)$ has a direct consequence for predictability:

\begin{proposition}[Predictability Horizon by Spectral Regime]
\label{prop:predict_horizon}
In the $\ell^{-3}$ regime: $\sigma_\ell \propto \ell^{-1/2}$, yielding unlimited predictability absent upscale cascade.
In the $\ell^{-5/3}$ regime: $\sigma_\ell \propto \ell^{1/6}$, yielding finite predictability $T^* \sim 14$ days.
\end{proposition}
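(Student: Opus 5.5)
The plan is to substitute the two power laws of Eq.~\eqref{eq:energy_spectrum} into the growth-rate scaling $\sigma_\ell \propto [\ell^2 E(\ell)]^{1/2}$ from Eq.~\eqref{eq:error_growth}, and then turn each resulting scaling of $\sigma_\ell$ into a statement about the predictability time. This time is the sum, over a downscale-to-upscale sequence of scales, of the e-folding or saturation times $\sim 1/\sigma_\ell$ needed for error to propagate from the smallest resolved scale up to the synoptic scale. This is the classical \citet{lorenz1969predictability} argument.

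\emph{Step 1 (exponents).} For $E(\ell)\propto \ell^{-3}$ the scaling gives $\ell^2 E(\ell)\propto \ell^{-1}$, so $\sigma_\ell\propto \ell^{-1/2}$. For $E(\ell)\propto \ell^{-5/3}$ it gives $\ell^2E(\ell)\propto \ell^{1/3}$, so $\sigma_\ell\propto\ell^{1/6}$. This is routine algebra.

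\emph{Step 2 (horizon as a scale sum).} Partition wavenumbers into octaves $\ell_j = 2^j \ell_0$. Model the upscale transfer by assuming that error saturating at octave $j+1$ seeds octave $j$ through the source term $S_\ell$. Each stage then takes time $t_j \sim c/\sigma_{\ell_j}$, and the total time for error initially confined to $\ell_J$ to contaminate $\ell_0$ is $T(J)=\sum_{j=0}^{J} t_j$.

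\emph{Step 3a ($\ell^{-3}$ regime).} Here $t_j\propto \ell_j^{1/2}=2^{j/2}\ell_0^{1/2}$. The sum diverges as $J\to\infty$, so pushing the initial error to ever smaller scales buys ever more lead time, and predictability is unlimited. If the cascade is switched off ($S_\ell=0$), the first term of Eq.~\eqref{eq:error_growth} alone confines error to its initial scale, which supports the same conclusion.

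\emph{Step 3b ($\ell^{-5/3}$ regime).} Here $t_j\propto \ell_j^{-1/6}= 2^{-j/6}\ell_0^{-1/6}$, a convergent geometric series. $T(\infty)$ is therefore finite no matter how small the initial error, which is the finite-predictability statement. To obtain $T^*\sim 14$ days, calibrate the prefactor using the dimensional growth rate $\sigma_\ell \sim [\ell^2 E(\ell)]^{1/2}/a$ with observed energies at the transition $\ell_{\rm meso}\approx 40$--$80$. Combine this with $\lambda_1\approx0.35$--$0.5$ day$^{-1}$ from \S\ref{subsec:dyn_sys} for the synoptic stages. The geometric factor $1/(1-2^{-1/6})\approx 9$ multiplies a transition-scale time of order $1$--$1.5$ days, giving roughly two weeks.

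The main obstacle is making Step 2 rigorous. Equation~\eqref{eq:error_growth} is a heuristic closure, and the stage-time summation assumes locality of transfer. The claim "unlimited predictability absent upscale cascade" should be stated honestly as conditional on this closure. The numerical value $T^*\sim14$ days is an order-of-magnitude calibration rather than a derived constant. I would present it as such, citing Lorenz's original estimate as the benchmark the scaling reproduces.
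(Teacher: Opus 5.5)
Your proposal follows the paper's own route: substitute the two power laws into $\sigma_\ell\propto[\ell^2E(\ell)]^{1/2}$, then argue that the upscale cascade time diverges in the $\ell^{-3}$ regime and stays finite in the $\ell^{-5/3}$ regime; in both versions the $14$-day value is an order-of-magnitude assertion rather than a derived constant. Your octave sums are a tighter execution than the paper's sketch, which has two problems: it says ``$\sigma_\ell\to 0$ as $\ell\to 0$'' where it should be $\ell\to\infty$, and it bounds the mesoscale cascade time by $\ell_{\rm meso}^{-1/6}\ln(\ell_{\max}/\ell_{\rm meso})$, which grows without bound as $\ell_{\max}\to\infty$ and so cannot by itself distinguish the two regimes the way your convergent geometric series does.
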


\begin{proof}
For $E(\ell) = C\ell^{-3}$: $\sigma_\ell \propto [\ell^2 \cdot \ell^{-3}]^{1/2} = \ell^{-1/2}$.
Since $\sigma_\ell \to 0$ as $\ell \to 0$, cascade time diverges, yielding unlimited predictability.
For $E(\ell) = C\ell^{-5/3}$: $\sigma_\ell \propto \ell^{1/6}$.
Cascade time from $\ell_{\max}$ to $\ell_{\rm meso}$ is finite: $\propto \ell_{\rm meso}^{-1/6}\ln(\ell_{\max}/\ell_{\rm meso}) \sim 14$ days.
\end{proof}

We quantify error growth through two diagnostics (Fig.~\ref{fig:error_growth}).
The effective Lyapunov exponent $\lambda_{\rm eff}$ is estimated from the slope of log-RMSE versus lead time during the exponential growth phase (before saturation).
The \emph{error doubling time} $\tau_d$ converts this growth rate into a physically interpretable timescale:
\begin{equation}
\tau_d = \frac{\ln 2}{\lambda_{\rm eff}},
\label{eq:error_doubling}
\end{equation}
normalized to $[0,1]$ by $\tau_d / (2 \times 24\;\text{hours})$, where the 2-day reference corresponds to the theoretical maximum error doubling time for synoptic-scale weather \citep{lorenz1969predictability}.
Longer doubling times indicate slower error growth and more skillful forecasts; the $\tau_d$ values in Tables~\ref{tab:hmas_day3}--\ref{tab:hmas_day15} range from 0.69 (SFNO, fastest growth) to 0.77 (GraphCast and FCN3, slowest growth), corresponding to raw doubling times of $\sim$33--37 hours, consistent with classical NWP estimates.

A complementary diagnostic tests whether the model conserves global kinetic energy during autoregressive rollout.
The \emph{Autoregressive Stability Index} (ASI) is defined through the area-weighted global kinetic energy ratio $E_{\rm KE}(\tau)/E_{\rm KE}(0)$:
\begin{equation}
\text{ASI} = 1 - \frac{|\gamma|\, T}{\ln 2},
\label{eq:ASI}
\end{equation}
where $\gamma$ is the exponential drift rate of area-weighted global KE and $T$ is the forecast window length.
ASI $= 1$ indicates perfect energy conservation; ASI $= 0$ indicates severe energy drift (either dissipation or inflation) over the forecast window.
In our evaluation (Fig.~\ref{fig:error_growth}, right panel), most MSE-trained models maintain ASI~$> 0.84$, while FuXi and FengWu show catastrophic energy collapse (ASI~$\approx 0.02$ and $\approx 0.01$ respectively).
The severity of this collapse suggests a fundamental instability in the autoregressive rollout.
While both FuXi and FengWu use 6-hour inference timesteps (requiring more autoregressive steps to reach extended range), Aurora also uses 6-hour steps without similar degradation, indicating that timestep alone is not the cause.
The instability more likely reflects an interaction between training methodology (single-step vs.\ rollout fine-tuning, loss weighting) and model architecture, with Aurora's LoRA-based rollout fine-tuning \citep{bodnar2025aurora} potentially mitigating the error accumulation that affects FuXi and FengWu.
This instability is particularly consequential because it occurs despite both models achieving competitive RMSE at shorter lead times, highlighting the diagnostic value of ASI as a metric that captures failure modes invisible to traditional skill scores.

\paragraph{Empirical validation: Error growth and energy stability.}
Figure~\ref{fig:error_growth} presents two complementary views of dynamical predictability.
The left panel shows Z500 RMSE on a logarithmic scale versus lead time.
The approximately linear growth in log-RMSE over the first 5--8 days confirms exponential error growth, consistent with the Lyapunov-based prediction of Proposition~\ref{prop:predict_horizon}: errors at each scale $\ell$ grow at rate $\sigma_\ell$, and the globally averaged RMSE inherits the dominant growth rate from the most energetic scales.
The estimated effective Lyapunov exponents cluster tightly across models ($\lambda_{\rm eff} \approx 0.3$--$0.5\;\text{day}^{-1}$), corresponding to error doubling times of $\sim$1.5--2.3 days, consistent with classical estimates from NWP \citep{lorenz1969predictability}.
Beyond day~8, the log-RMSE curves begin to flatten as errors approach climatological variance (the saturation regime), reflecting the transition from the exponential-growth phase to the predictability-limited phase predicted by the information-theoretic bound (Theorem~\ref{thm:info_decay}).

The right panel reveals a physically important diagnostic: global kinetic energy stability, measured as $E_{\rm KE}(\tau)/E_{\rm KE}(0)$.
A perfect forecast would maintain this ratio at unity throughout the forecast window.
Deviations indicate either energy dissipation (ratio $< 1$) or energy inflation (ratio $> 1$), both of which are physically undesirable.
Most MSE-trained models exhibit systematic energy \emph{dissipation}, with $E_{\rm KE}$ declining by 5--15\% over 15 days.
This is a direct physical consequence of MSE-induced spectral smoothing: by suppressing high-wavenumber energy (Theorem~\ref{thm:mse_bias}), the MSE optimizer removes kinetic energy from the forecast, violating energy conservation.
FCN3 is a notable exception, showing energy \emph{inflation}: as a CRPS-trained model, its loss function preserves (and can slightly overestimate) spectral energy at high wavenumbers, leading to net kinetic energy injection rather than the dissipation characteristic of MSE-trained models.
FuXi and FengWu show the most severe energy drift (ASI $\approx 0.02$ and $\approx 0.01$ respectively), suggesting that their autoregressive rollout is particularly unstable at extended lead times.
We note that the FuXi evaluated here is the medium-range deterministic model; FuXi-S2S \citep{chen2024fuxi_s2s}, which achieves 42-day predictions, is a distinct model with different architecture and training methodology specifically designed for subseasonal timescales and may exhibit different energy stability properties.
More broadly, all ten models in our study operate at similar resolutions ($\sim$0.25$^\circ$), which limits their ability to resolve small-scale features and may contribute to energy dissipation through unresolved scale interactions; models designed for longer rollouts typically adopt coarser resolutions and spectral regularization to maintain stability.
This energy drift diagnostic is orthogonal to RMSE: a model can have reasonable RMSE while losing energy systematically, producing forecasts that are ``right on average'' but physically inconsistent.
Indeed, both FengWu and FuXi achieve competitive Z500 RMSE at several lead times (Fig.~\ref{fig:scorecard}) despite catastrophic energy dissipation, a paradox that underscores why single-metric evaluation is insufficient.
FengWu's RMSE success despite ASI~$\approx 0$ arises because RMSE is dominated by the large-scale pattern (which carries most of the error weight), while the energy drift manifests primarily at smaller scales; the two diagnostics probe different aspects of forecast quality, confirming the value of multi-dimensional assessment.

\begin{figure*}[!t]
\centering
\includegraphics[width=\textwidth]{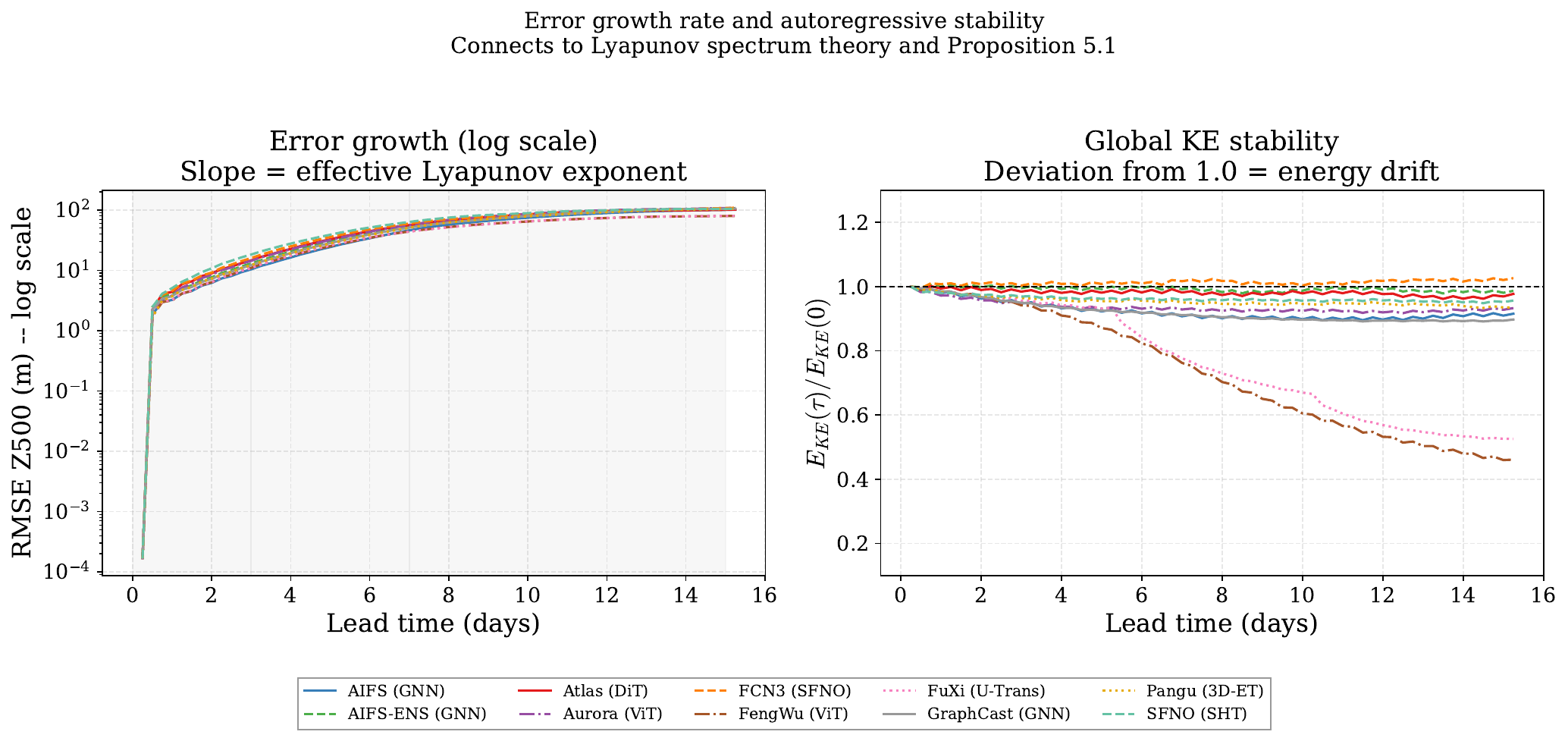}
\caption{\textbf{Left}: Z500 RMSE on log scale vs.\ lead time.
The approximately linear growth in log-RMSE confirms exponential error growth consistent with Proposition~\ref{prop:predict_horizon}.
\textbf{Right}: Global kinetic energy stability $E_{\rm KE}(\tau)/E_{\rm KE}(0)$.
Deviation from unity indicates energy drift.
FCN3 (CRPS-trained) shows energy inflation; most MSE-trained models show dissipation, consistent with MSE-induced smoothing.}
\label{fig:error_growth}
\end{figure*}

\section{Multi-Balance Physical Consistency}
\label{sec:physical_consistency}

A physically meaningful forecast should respect the dynamical balance relationships of the atmosphere.
Rather than relying on a single geostrophic balance diagnostic, we construct a \emph{four-component Physical Consistency Score} (PCS) that tests four independent dynamical constraints:

\subsection{The Four Balance Components}

\paragraph{(1) Geostrophic balance.}
The equilibrium between pressure gradient force and Coriolis force in midlatitudes \citep{holton2013introduction,vallis2017atmospheric} constrains the large-scale wind to be approximately geostrophic: $\bm{v}_g = (1/f)\,\hat{k} \times \nabla\Phi$, where $f = 2\Omega\sin\phi$.
The ageostrophic wind ratio $\text{AGR}_g = \sqrt{\langle|\bm{v} - \bm{v}_g|^2\rangle_{\rm mid} / \langle|\bm{v}|^2\rangle_{\rm mid}}$ quantifies departure from this balance.

\paragraph{(2) Non-divergence.}
For large-scale balanced flow, the horizontal divergence $\nabla \cdot \bm{v}$ should be much smaller than the vorticity $\zeta$ \citep{holton2013introduction}.
The non-divergence ratio $\text{NDR} = \langle|\nabla \cdot \bm{v}|^2\rangle_{\rm mid} / \langle|\zeta|^2\rangle_{\rm mid}$ quantifies this constraint.
A well-balanced midlatitude flow has NDR $\ll 1$ (divergence is order Rossby number $\text{Ro} \sim 0.1$ smaller than vorticity).

\paragraph{(3) Thermal wind balance.}
The vertical shear of the geostrophic wind is proportional to the horizontal temperature gradient \citep{holton2013introduction,vallis2017atmospheric}: $f\,\partial \bm{v}_g / \partial \ln p = -R\,\hat{k} \times \nabla T$, where $R$ is the dry gas constant.
We compute the thermal wind from the forecast temperature field and compare with the actual wind shear between 500 and 850\,hPa.

\paragraph{(4) Hydrostatic consistency.}
The hydrostatic relation connects geopotential thickness between pressure levels to the virtual temperature \citep{holton2013introduction}: $\Delta\Phi = -R\,\bar{T}_v \,\ln(p_{\rm top}/p_{\rm bot})$.
We compute the expected geopotential thickness from the forecast temperature and compare with the actual forecast geopotential difference.

\subsection{Composite Physical Consistency Score}

The composite PCS is the equally weighted mean of the four sub-scores:
\begin{equation}
\text{PCS}_{\rm composite} = \frac{1}{4}\sum_{i=1}^{4} \text{PCS}_i,
\label{eq:PCS_composite}
\end{equation}
where each $\text{PCS}_i = \max(0, 1 - R_i/R_{i,\rm max})$ normalizes the respective ratio to $[0,1]$, with $R_{i,\rm max}$ chosen so that PCS$_i = 0$ corresponds to severe imbalance.
Equal weighting is used because each balance constraint is physically fundamental, and weighting would introduce arbitrary choices.
We verify empirically that the equal-weighting choice is not consequential: the HMAS weight sensitivity analysis (Supplementary Fig.~\ref{fig:hmas_sensitivity}) shows Kendall's $W = 0.97$ concordance across five weighting schemes, confirming that model rankings are robust to weight perturbations.
We report the composite score alongside the sub-score breakdown (Fig.~\ref{fig:physical_consistency}, right panel) precisely because the composite may mask compensating effects: a model could improve one sub-score while degrading another, leaving the composite unchanged.
The composite is useful for inclusion in the multi-metric HMAS framework (\S\ref{sec:holistic}), where a single PCS dimension is needed, while the sub-score decomposition is essential for diagnosing the physical mechanisms behind each model's balance properties.

\paragraph{Empirical results.}
Figure~\ref{fig:physical_consistency} presents both the composite PCS over time (left panel) and the sub-score breakdown (right panel).
A critical baseline observation is that ERA5 itself achieves composite PCS $\approx 0.55$; the real atmosphere is \emph{not} perfectly balanced at any instant.
Ageostrophic motions (jet streaks, frontal circulations, gravity waves) and diabatic processes continuously drive departures from geostrophic and thermal wind balance.
This sets an upper bound on what any model should achieve: a PCS significantly exceeding $\sim$0.55 would indicate overly constrained dynamics, not superior physical realism.

The sub-score breakdown reveals the structure of balance fidelity across models.
The hydrostatic sub-score is consistently highest across all models ($\sim$0.89--0.90), because the hydrostatic relation is a strong, quasi-exact constraint linking temperature and geopotential thickness. Even models that violate other balance relations tend to maintain hydrostatic consistency, likely because the hydrostatic relationship is implicitly encoded in the vertical structure of ERA5 training data.
The non-divergence sub-score shows the greatest inter-model spread and provides the most discriminating diagnostic.
AIFS-ENS scores particularly poorly ($\sim$0.11) on non-divergence, suggesting that the Gaussian noise injected into the transformer processor to generate ensemble spread introduces spatially incoherent divergent motions. While the CRPS loss ensures calibrated probabilistic skill, it does not explicitly constrain the noise-driven perturbations to be non-divergent, producing unphysical local convergence and divergence patterns.

The most theoretically informative PCS finding concerns Atlas (DiT, score matching), which degrades to composite PCS $\approx 0.40$ by day~15, well below the ERA5 baseline of $\sim$0.55, driven by deterioration in both geostrophic balance and non-divergence sub-scores (Supplementary Fig.~\ref{fig:pcs_day15_subscore}).
While FuXi and FengWu reach comparably low PCS values at day~15 (discussed below), their degradation stems from energy drift in the autoregressive rollout.
Atlas's degradation is distinct and more informative because it occurs \emph{despite} excellent spectral energy preservation (high SFI).
This reveals a tension present across all AI weather models but \emph{heightened} in diffusion-based systems: while the score-matching loss preserves spectral energy (high SFI, Figs.~\ref{fig:spectral_comparison}--\ref{fig:spectral_ratio}), it does not enforce \emph{inter-variable dynamical relationships}.
Each sample from the diffusion model contains the correct amount of spectral energy in each field individually, but the spatial correlations between wind and geopotential (geostrophic balance), between wind components (non-divergence), and between temperature and wind shear (thermal wind) progressively degrade.
This is because the diffusion model samples from the marginal energy distribution of each field, but the joint distribution across fields becomes increasingly poorly approximated at extended lead times.

FuXi and FengWu exhibit the most severe PCS degradation among all models: FuXi drops from PCS~$= 0.61$ at day~1 to PCS~$= 0.34$ at day~15 (the steepest decline of any model), while FengWu degrades from PCS~$= 0.67$ at day~5 to PCS~$= 0.46$ at day~15.
This PCS collapse is physically coupled to the catastrophic energy drift documented in Fig.~\ref{fig:error_growth}: as kinetic energy dissipates over repeated autoregressive steps (ASI~$\approx 0$ for both models), the wind fields progressively lose the amplitude needed to maintain dynamical balance, leading to simultaneous deterioration of geostrophic balance, non-divergence, and thermal wind consistency.
The coupling between energy instability and balance degradation underscores the diagnostic complementarity of ASI and PCS: ASI captures the \emph{energetic} consequence of autoregressive drift, while PCS captures its \emph{dynamical} consequence.

In contrast, Pangu maintains high composite PCS throughout the forecast, consistent with its pressure-weighted training implicitly enforcing inter-level balance, though at the cost of spectral fidelity, completing the trade-off cycle identified in the HMAS cross-correlation analysis (Supplementary Fig.~\ref{fig:hmas_correlation}).

\begin{figure*}[!tb]
\centering
\includegraphics[width=\textwidth]{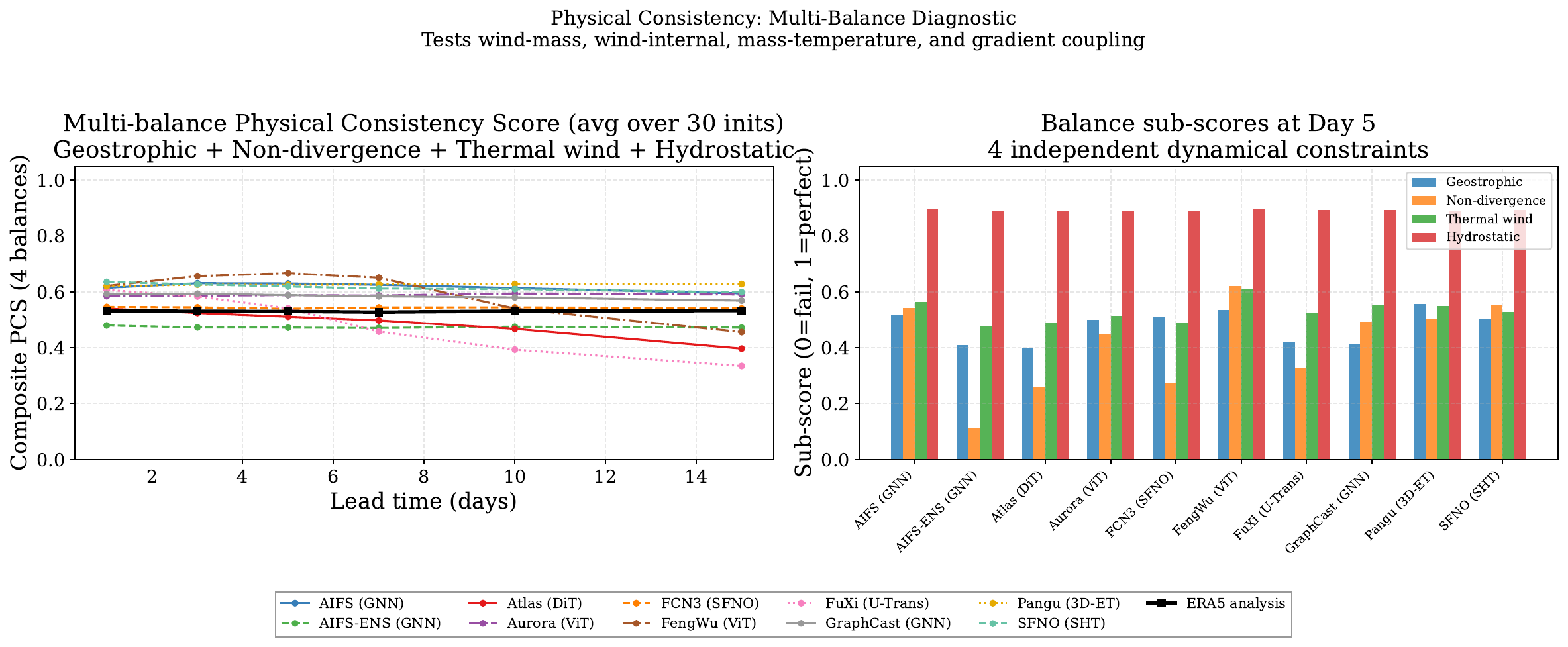}
\caption{Four-component Physical Consistency Score.
\textbf{Left}: Composite PCS (mean of four sub-scores) vs.\ lead time.
ERA5 analysis (black) maintains PCS $\approx 0.55$.
\textbf{Right}: Sub-score breakdown at day~5 for all ten models, showing geostrophic balance (geo.), non-divergence (n-div.), thermal wind (t.w.), and hydrostatic (hyd.) components.
Hydrostatic consistency is uniformly high; non-divergence shows the greatest inter-model spread.
See Supplementary Fig.~\ref{fig:pcs_day15_subscore} for the day~15 sub-score breakdown.}
\label{fig:physical_consistency}
\end{figure*}

\begin{remark}[The Physical Consistency--Accuracy Trade-off]
\label{rem:pcs_tradeoff}
The PCS results reveal a fundamental tension: maintaining physical balance (high PCS) does not imply low RMSE, and competitive RMSE does not guarantee physical consistency.
Pangu's hierarchical temporal aggregation inference strategy and pressure-weighted MSE loss produce well-balanced but less spectrally detailed forecasts (PCS~$\approx 0.63$ stable through day~15), while FuXi, despite achieving competitive Z500 RMSE, suffers the steepest PCS decline of any model (PCS~$= 0.61 \to 0.34$ from day~1 to day~15), driven by the same energy instability that produces its near-zero ASI.
This trade-off is not captured by any single metric and motivates our holistic assessment framework (\S\ref{sec:holistic}).
\end{remark}

\section{Information-Theoretic Predictability Bounds}
\label{sec:info}

\subsection{Information Content of Weather Forecasts}

The mutual information \citep{cover2006elements} between the initial state $\bm{u}(0)$ and the future state $\bm{u}(\tau)$ quantifies the theoretical information available for prediction:
\begin{equation}
I\bigl(\bm{u}(0);\, \bm{u}(\tau)\bigr) = H\bigl(\bm{u}(\tau)\bigr) - H\bigl(\bm{u}(\tau) \mid \bm{u}(0)\bigr),
\label{eq:mutual_info}
\end{equation}
where $H$ denotes the Shannon entropy.
The first term is the marginal entropy of the future state (the total uncertainty about $\bm{u}(\tau)$ without any knowledge of the initial conditions), and the second is the conditional entropy (the residual uncertainty \emph{given} the initial state).
As $\tau$ increases, the conditional entropy grows (the initial conditions become less informative about the future) and $I$ decreases toward zero.
The rate of this decrease is governed by the atmosphere's chaotic dynamics:

\begin{theorem}[Pesin-type Information Decay]
\label{thm:info_decay}
For an ergodic chaotic system with Lyapunov exponents $\{\lambda_i\}$, the mutual information decays as \citep{pesin1977characteristic}
\begin{equation}
I\bigl(\bm{u}(0);\, \bm{u}(\tau)\bigr) \leq I_0 - h_{\rm KS}\,\tau + \mathcal{O}(\tau^2),
\label{eq:info_decay}
\end{equation}
where $I_0 = H(\bm{u}(0))$ and $h_{\rm KS} = \sum_{\lambda_i > 0}\lambda_i$.
The forecast becomes uninformative when $I \to 0$, yielding $\tau^* \approx I_0 / h_{\rm KS}$.
\end{theorem}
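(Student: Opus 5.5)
The plan is to interpret \eqref{eq:info_decay} at a fixed observational resolution. For a deterministic flow, $H(\bm{u}(\tau)\mid\bm{u}(0))=0$ exactly, so the statement only has content once the state is coarse-grained. Fix a finite measurable partition $\xi_\varepsilon$ of the attractor $\mathcal{A}$ at scale $\varepsilon$, chosen to be generating for $\Phi_\tau$. Let $\xi_0$ and $\xi_\tau$ be the cells containing $\bm{u}(0)$ and $\bm{u}(\tau)$. All entropies are computed with respect to the invariant (SRB) measure $\mu$ on $\mathcal{A}$, and we set $I_0 = H(\xi_0)$. Since $\mu$ is $\Phi_\tau$-invariant, $H(\xi_\tau)=H(\xi_0)=I_0$. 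Inserting this into \eqref{eq:mutual_info} reduces the theorem to a single lower bound on the conditional entropy:
\begin{equation}
H(\xi_\tau \mid \xi_0) \;\geq\; h_{\rm KS}\,\tau - \mathcal{O}(\tau^2).
\end{equation}

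\textbf{Step 1: local volume growth along unstable directions.} I would use Pesin theory for this. For $\mu$-a.e.\ point, Oseledets' theorem supplies the splitting into unstable and stable directions. Along the unstable manifolds, the linearized flow stretches $\varepsilon$-balls by the factor $\exp(\tau\sum_{\lambda_i>0}\lambda_i)$. The SRB property (absolute continuity of conditional measures on unstable leaves) then says that $\mu$ restricted to a single cell $\xi_0$, once pushed forward by $\Phi_\tau$, spreads roughly uniformly across about $e^{h_{\rm KS}\tau}$ cells of $\xi_\varepsilon$. This is Pesin's identity \eqref{eq:KS_entropy}, read as a volume-growth statement in the spirit of Ledrappier--Young and Brin--Katok.

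\textbf{Step 2: from volume growth to entropy.} An approximately uniform distribution over $N\approx e^{h_{\rm KS}\tau}$ cells has entropy close to $\ln N$. Averaging over $\xi_0$ gives $H(\xi_\tau\mid\xi_0)\gtrsim h_{\rm KS}\tau$. The $\mathcal{O}(\tau^2)$ remainder collects three corrections:
\begin{itemize}[leftmargin=*,itemsep=1pt]
\item second-order (curvature) terms in the Taylor expansion of $\Phi_\tau$ about the linearization on an $\varepsilon$-cell;
\item fluctuations of the finite-time Lyapunov exponents about their means, which by a variance/Jensen argument change the exponent only at second order for small $\tau$;
\item boundary effects of the partition, which are controlled because a generating partition can be taken to have $\mu$-negligible boundaries.
\end{itemize}
Finally, once $H(\xi_\tau\mid\xi_0)$ reaches $I_0$, mutual information is nonnegative and the bound saturates. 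Solving $I_0-h_{\rm KS}\tau^*=0$ gives $\tau^*\approx I_0/h_{\rm KS}$.

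\textbf{Main obstacle.} The hard step is Step 1 for the \emph{single-time} conditional entropy $H(\xi_\tau\mid\xi_0)$. The standard Kolmogorov--Sinai machinery controls the \emph{joint} refinement instead: $H(\xi_1,\dots,\xi_\tau\mid\xi_0)\to\tau h_{\rm KS}$. Monotonicity of conditioning only gives $H(\xi_\tau\mid\xi_0)\le H(\xi_1,\dots,\xi_\tau\mid\xi_0)$, which points in the wrong direction.

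I would first try to close this gap with the Ledrappier--Young characterization. Expanding unstable leaves are not folded back onto themselves within a short time horizon, so for $\tau\ll\tau^*$ the intermediate cells $\xi_1,\dots,\xi_{\tau-1}$ are nearly determined by $(\xi_0,\xi_\tau)$. The information lost by discarding them should then be higher order in $\tau$, which is exactly where the $\mathcal{O}(\tau^2)$ correction enters.

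If that argument fails, the fallback is to add a small observation noise of size $\varepsilon$ to $\bm{u}(0)$. The noisy conditional density then evolves by the linearized flow, and the Jacobian determinant restricted to the unstable subspace yields the entropy increase $\sum_{\lambda_i>0}\lambda_i\,\tau$ directly.
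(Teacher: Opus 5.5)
\paragraph{Verdict: genuine gap.} You follow the paper's route, and the step you flag as the main obstacle is left open both by you and by the paper. The paper's sketch also stretches a small ball of initial conditions by the positive exponents and invokes Pesin's identity. It then simply asserts $H(\bm{u}(\tau)\mid\bm{u}(0))\approx h_{\rm KS}\tau$. Your coarse-graining and the invariance step $H(\xi_\tau)=I_0$ make explicit what the paper leaves implicit. But neither of your two proposed closures fills the gap. By the chain rule,
\[
H(\xi_\tau\mid\xi_0)=H(\xi_1,\dots,\xi_\tau\mid\xi_0)-H(\xi_1,\dots,\xi_{\tau-1}\mid\xi_0,\xi_\tau).
\]
Whenever the symbolic process of the generating partition is Markov, the first term equals $\tau h_{\rm KS}$ exactly. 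In that case the general inequality runs the \emph{other} way, $I\ge I_0-h_{\rm KS}\tau$. The claimed bound then holds only when the bridge entropy $H(\xi_1,\dots,\xi_{\tau-1}\mid\xi_0,\xi_\tau)$ is negligible. So ``intermediate cells are nearly determined by the endpoints'' is not a technical lemma you can extract from Ledrappier--Young. It is the entire content of the theorem, and it is false in general.

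\paragraph{Counterexample: deterministic diffusion.} On $\mathbb{R}/N\mathbb{Z}$ let $F(x)=3x-2\lfloor x\rfloor-1$. Then $\{F(x)\}=\{3\{x\}\}$ and $\lfloor F(x)\rfloor=\lfloor x\rfloor+d_1-1$, where $d_1$ is the first ternary digit of $\{x\}$.
\begin{itemize}
\item Lebesgue measure is invariant and mixing, with $h_{\rm KS}=\lambda=\ln 3$.
\item The $3N$ intervals of length $1/3$ form a generating Markov partition, so $I_0=\ln(3N)$.
\item The integer part performs a random walk with i.i.d.\ uniform steps in $\{-1,0,1\}$. Hence $H(\xi_n\mid\xi_0)=\ln 3+H(S_{n-1})$, where $S_{n-1}$ is a sum of $n-1$ such steps.
\item At $n=3$ this gives $H(\xi_3\mid\xi_0)\approx 2.62<3\ln 3\approx 3.30$. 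So for every $N\ge 10$ (where $I_0-3h_{\rm KS}>0$) the mutual information exceeds $I_0-3h_{\rm KS}$ by about $0.67$.
\item For $1\ll n\ll N^2$, $H(\xi_n\mid\xi_0)=\ln 3+\tfrac12\ln n+\mathcal{O}(1)$. Thus $I$ stays within $\mathcal{O}(\ln n)$ of $I_0$ until $n\sim N^2$, not $n\approx\log_3(3N)$.
\end{itemize}
The deficit relative to $h_{\rm KS}n$ is first order in $n$, so no genuinely higher-order remainder absorbs it. Refining to cells of length $3^{-m}$ only postpones the failure: then $H(\xi_n\mid\xi_0)=m\ln 3+H(S_{n-m})<n\ln 3$ for $m+2\le n<\tau^*=m+\log_3 N$.

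\paragraph{The noise fallback.} This does not escape the problem. Pushing an $\varepsilon$-smoothed density forward changes its differential entropy by $\E\ln|\det D\Phi_\tau|$. Averaged over the attractor, this is $\tau$ times the sum of \emph{all} exponents, which is $\le 0$ for volume-preserving or dissipative dynamics. Keeping only the unstable Jacobian means coarse-graining the contracted directions at scale $\varepsilon$. That reintroduces exactly the cell-counting problem above.

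\paragraph{What is needed.} The missing idea must enter as an explicit hypothesis: the image of an $\varepsilon$-cell meets about $e^{h_{\rm KS}\tau}$ distinct cells, with roughly uniform mass and no self-overlap, up to the horizon. The result should then be stated as an approximation valid in that window, not as a consequence of Pesin's identity.
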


\begin{proof}[Proof sketch]
Consider a small ball $B_0$ of initial conditions.
Under the flow, this ball is stretched by the Lyapunov exponents into an ellipsoid with volume growing at rate $\sum_i \lambda_i$.
By Pesin's theorem \citep{pesin1977characteristic}, $h_{\rm KS} = \sum_{\lambda_i > 0} \lambda_i$ for smooth ergodic systems.
Each positive exponent adds $\lambda_i \tau \log_2 e$ bits of positional uncertainty per unit time.
The conditional entropy grows as $H(\bm{u}(\tau) \mid \bm{u}(0)) \approx h_{\rm KS}\, \tau\, \log_2 e$, giving $I \leq I_0 - h_{\rm KS}\tau$.
\end{proof}

\paragraph{Connection to empirical predictability limits.}
Theorem~\ref{thm:info_decay} provides the information-theoretic underpinning for the empirical observations in Figs.~\ref{fig:rmse}--\ref{fig:acc}.
The linear decay of mutual information $I(\bm{u}(0); \bm{u}(\tau))$ at rate $h_{\rm KS}$ implies that the information available for prediction decreases at a rate determined by the atmosphere's positive Lyapunov exponents. This is a property of the \emph{atmosphere}, not of any model.
When $I$ approaches zero, no model of any kind (AI, physics-based NWP, hybrid, or subseasonal-to-seasonal) can produce forecasts more skillful than climatology, because this ceiling is a property of the atmospheric dynamics, not of any particular prediction methodology.
The empirical ACC decay (Fig.~\ref{fig:acc}) provides a proxy for this information loss: the approximately exponential ACC decline toward $\sim$0.6 at 7--10 days, and the tight clustering of all models along similar decay curves, is consistent with all models extracting a similar fraction of the available mutual information from the initial conditions.
The fact that $h_{\rm KS}$ is a sum over \emph{all positive} Lyapunov exponents, rather than just the largest, explains why the predictability limit is finite even for large-scale variables: although the individual growth rate $\sigma_\ell$ for low-$\ell$ modes may be small, the cumulative information loss from all unstable modes eventually exhausts the available information.

\subsection{The Vonich--Hakim Predictability Extension}

\citet{vonich2024,vonich2025} showed that by optimizing initial conditions via backpropagation through GraphCast, skillful prediction extends to 33+ days.

\begin{theorem}[Constrained Predictability Extension]
\label{thm:constrained_predict}
Consider a subspace $\mathcal{V} \subset \mathcal{X}$ spanned by the first $m$ modes ordered by decreasing decorrelation time.
The effective KS entropy on $\mathcal{V}$ satisfies:
\begin{equation}
h_{\rm KS}^{(\mathcal{V})} = \sum_{\mu_i > 0} \mu_i \leq h_{\rm KS},
\end{equation}
and the extended predictability is $\tau_\mathcal{V}^* = I_0^{(\mathcal{V})} / h_{\rm KS}^{(\mathcal{V})} \geq \tau^*$.
\end{theorem}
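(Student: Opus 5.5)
The plan is to treat Theorem~\ref{thm:constrained_predict} as a statement about restricting the tangent dynamics to the slow subspace $\mathcal{V}$, and to obtain both inequalities from the same two ingredients used for Theorem~\ref{thm:info_decay}: Pesin's identity~\eqref{eq:KS_entropy} and an entropy-growth argument. Concretely, I would define the restricted exponents $\mu_1 \ge \mu_2 \ge \dots \ge \mu_m$ as the Lyapunov exponents of the compressed cocycle $P_{\mathcal{V}} D\Phi_\tau|_{\mathcal{V}}$. Here $P_{\mathcal{V}}$ is the orthogonal projection onto $\mathcal{V}$, taken in $L^2$ spherical-harmonic coordinates. I would then define $h_{\rm KS}^{(\mathcal{V})} := \sum_{\mu_i>0}\mu_i$. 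By the multiplicative ergodic theorem, the sum of the top $k$ exponents of any cocycle is the growth rate of the $k$-dimensional volume element. Equivalently, it is the growth rate of $\log\|\Lambda^k D\Phi_\tau\|$.

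The first inequality should follow from a Courant--Fischer/Ky Fan type argument. For each $k\le m$, the $k$-volume growth of the compressed cocycle is bounded by that of the full cocycle, because compression by an orthogonal projection cannot increase singular values. This gives $\sigma_j(P D\Phi P) \le \sigma_j(D\Phi)$, and hence $\sum_{i\le k}\mu_i \le \sum_{i\le k}\lambda_i$ for all $k$. Taking $k$ to be the number of positive $\mu_i$ gives
\[
\sum_{\mu_i>0}\mu_i \;\le\; \sum_{i\le k}\lambda_i \;\le\; \sum_{\lambda_i>0}\lambda_i = h_{\rm KS}.
\]
The last step uses the fact that adding nonpositive exponents only decreases a partial sum, and that dropping positive ones is already accounted for because the partial sum is capped by all positive ones.

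For the horizon, I would rerun the proof sketch of Theorem~\ref{thm:info_decay} on the projected variables $P_{\mathcal{V}}\bm{u}$. The conditional entropy grows as $h_{\rm KS}^{(\mathcal{V})}\tau$, so $I(P_{\mathcal{V}}\bm{u}(0);P_{\mathcal{V}}\bm{u}(\tau)) \lesssim I_0^{(\mathcal{V})} - h_{\rm KS}^{(\mathcal{V})}\tau$, which gives $\tau^*_{\mathcal{V}} = I_0^{(\mathcal{V})}/h_{\rm KS}^{(\mathcal{V})}$. The comparison $\tau^*_{\mathcal{V}}\ge\tau^*$ then reduces to
\[
\frac{I_0^{(\mathcal{V})}}{I_0} \;\ge\; \frac{h^{(\mathcal{V})}_{\rm KS}}{h_{\rm KS}}.
\]
This is where I expect the main obstacle, since the denominator inequality alone is not enough: $I_0^{(\mathcal{V})} = H(P_{\mathcal{V}}\bm{u}(0)) \le I_0$ by data processing, which pushes in the wrong direction. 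My first attempt would use the ordering of $\mathcal{V}$ by decreasing decorrelation time. The slow modes carry a share of the initial entropy at least proportional to their share of the entropy production, because fast modes are precisely those with large positive $\mu_i$ and comparatively little persistent information.

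I would formalize this as an assumption or lemma. Writing $\bm{u} = (\bm{u}_{\mathcal{V}}, \bm{u}_{\perp})$, the per-mode ratio of initial information to exponent is nonincreasing in decorrelation time rank. Then a mediant (weighted-average) inequality gives the desired ratio bound. If this cannot be proved from first principles, I would state it explicitly as the hypothesis under which the theorem holds, as a regime condition. That approach parallels how the paper treats $\ell^{-3}$ versus $\ell^{-5/3}$ in Proposition~\ref{prop:predict_horizon}.
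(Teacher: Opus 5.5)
Your route matches the paper's. Its sketch also restricts the tangent dynamics via $P_{\mathcal V}$ and invokes Rayleigh--Ritz to get $\mu_i\le\lambda_i$, and it says nothing further about $\tau^*_{\mathcal V}\ge\tau^*$. Moving from eigenvalues to singular values is the right instinct, but the ``hence'' in your first step does not hold.

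\textbf{The first inequality.} Write $P=P_{\mathcal V}$. The bound $\sigma_j(PAP)\le\sigma_j(A)$ is a one-step statement. The exponents of your compressed cocycle, however, are growth rates of \emph{products of compressions} $PA_nP\cdots PA_1P$, and these are not compressions of $A_n\cdots A_1$. All you can conclude is $\norm{\Lambda^k(PA_nP\cdots PA_1P)}\le\prod_t\norm{\Lambda^k A_t}$. For non-normal tangent maps, which are generic in atmospheric tangent-linear dynamics, the right side can exceed $\norm{\Lambda^k(A_n\cdots A_1)}$ by an exponential factor. In fact the inequality itself fails. Take $\mathcal V=\mathrm{span}(e_1)$ and $D\Phi_\tau\equiv A$ with
\[
A=\begin{pmatrix}2&2\\-2&-1.9\end{pmatrix},\qquad \operatorname{tr}A=0.1,\quad \det A=0.2 .
\]
Both eigenvalues have modulus $\sqrt{0.2}<1$, so $h_{\rm KS}=0$. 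Yet $PAP|_{\mathcal V}=2$, which gives $\mu_1=\tau^{-1}\log 2>0$. The same $A$ also breaks the paper's Rayleigh--Ritz step, since interlacing requires self-adjointness.

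One fix is to compress the propagator instead, setting $\mu_i:=\limsup_n\frac{1}{n\tau}\log\sigma_i(PD\Phi_{n\tau}P)$. This makes $\mu_i\le\lambda_i$ immediate. But these exponents ignore how the unconditioned fast components drive the slow ones through $PD\Phi(I-P)$, so they do not control $H(P\bm u(\tau)\mid P\bm u(0))$.

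The repair that serves both halves of your argument is an explicit decoupling hypothesis, $PD\Phi(\bm u)(I-P)=0$: fast modes do not force slow ones at linear order. Under it the tangent cocycle is block-triangular, and products of compressions coincide with compressions of products. The compressed cocycle is then the quotient cocycle on $\mathcal X/\mathcal V^{\perp}$, whose spectrum is a sub-multiset of $\{\lambda_i\}$. This gives $h^{(\mathcal V)}_{\rm KS}\le h_{\rm KS}$, and in fact $h_{\rm KS}=h^{(\mathcal V)}_{\rm KS}+h^{(\perp)}_{\rm KS}$. The same hypothesis is exactly what licenses rerunning the entropy-growth sketch on $P\bm u$ alone.

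\textbf{The horizon.} Here you have found a real hole that the paper's sketch passes over in silence. The bound $h^{(\mathcal V)}_{\rm KS}\le h_{\rm KS}$ cannot give $\tau^*_{\mathcal V}\ge\tau^*$, because $I_0^{(\mathcal V)}$ can be smaller than $I_0$. Ordering modes by decorrelation time does not supply the missing comparison, so an extra hypothesis is unavoidable, and stating it explicitly is correct.

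You can state it at block level rather than per mode. By the chain rule, $I_0=I_0^{(\mathcal V)}+I_0^{(\perp)}$ with $I_0^{(\perp)}=H\bigl((I-P)\bm u(0)\mid P\bm u(0)\bigr)$. Combined with the exponent splitting above, the mediant inequality gives, for positive block entropies,
\[
\tau^*_{\mathcal V}\ge\tau^*\iff\frac{I_0^{(\mathcal V)}}{h^{(\mathcal V)}_{\rm KS}}\ge\frac{I_0^{(\perp)}}{h^{(\perp)}_{\rm KS}} .
\]
Your per-mode monotonicity implies this condition. However, it additionally needs the modes to be independent, so that entropy is additive over modes, and aligned with the Oseledets directions, so that each mode carries its own exponent. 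Both are far stronger than what the theorem requires. The block condition above is the precise regime assumption the theorem needs.
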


\begin{proof}[Proof sketch]
Restricting the tangent dynamics to $\mathcal{V}$ via projection $P_\mathcal{V}$ gives a reduced tangent linear equation.
By the Rayleigh--Ritz variational principle \citep{parlett1998symmetric}, eigenvalues of the projected operator are bounded by those of the full operator: $\mu_i \leq \lambda_i$.
IC optimization finds the $\mathcal{V}$ minimizing forecast error, effectively projecting out the fastest-growing perturbation directions.
\end{proof}

\paragraph{Implications and empirical context.}
This theorem formalizes a deep insight: the 14-day deterministic predictability limit is not an absolute ceiling but rather applies to \emph{full-state} prediction.
By restricting attention to a subspace $\mathcal{V}$ that excludes the directions of fastest error growth, the effective KS entropy is reduced, and the predictability horizon extends proportionally.
\citet{vonich2024,vonich2025} demonstrated this empirically by using backpropagation through GraphCast to find initial condition perturbations that minimize the 33-day forecast error, effectively discovering the subspace $\mathcal{V}$ of maximally predictable modes.
The resulting extended skill is consistent with our Theorem~\ref{thm:constrained_predict}: the optimized initial conditions project the forecast onto slowly decorrelating modes (large-scale planetary wave patterns, stratospheric state), reducing $h_{\rm KS}^{(\mathcal{V})}$ relative to $h_{\rm KS}$.
We note that this empirical demonstration has so far been conducted only with GraphCast; whether other architectures yield similar extended predictability under IC optimization remains an open question, though our theorem predicts they should, since the subspace $\mathcal{V}$ is a property of the atmospheric dynamics rather than the model.
This result also connects to our error consensus analysis (\S\ref{sec:consensus}): the high ECR values at extended range suggest that the shared component of forecast error is dominated by the fast-decorrelating modes that Vonich--Hakim optimization would project away, while the model-specific residuals correspond to the predictable subspace $\mathcal{V}$ where architecture differences still matter.

\section{Error Consensus and Predictability Limits}
\label{sec:consensus}

A central prediction of our framework is that when $\varepsilon_{\rm arch} \ll \varepsilon_{\rm est}$, the dominant forecast errors should be shared across architectures.
Intuitively, if all models are limited by the same factors (loss function bias, data coverage, atmospheric predictability) rather than by their architecture, then their errors should look similar to each other.
This section introduces two metrics to test this prediction: the Error Consensus Ratio (ECR), which measures what fraction of total error variance is shared across all models, and the Model Error Divergence (MED), which measures how similar the error patterns are at each spatial scale.

\subsection{Error Consensus Ratio}

\begin{definition}[Error Consensus Ratio (ECR)]
\label{def:ECR}
Given $M$ models with error fields $e_m = \hat{u}_m(\tau) - u_{\rm ERA5}(\tau)$, let $\bar{e} = M^{-1}\sum_m e_m$ and $\varepsilon_m = e_m - \bar{e}$.
The ECR is:
\begin{equation}
\text{ECR}(\tau) = \frac{\langle\|\bar{e}\|^2\rangle}{\langle\|\bar{e}\|^2\rangle + M^{-1}\sum_m \langle\|\varepsilon_m\|^2\rangle}.
\label{eq:ECR}
\end{equation}
ECR $\to 1$ indicates predictability-limited errors; ECR $\to 0$ indicates architecture-limited errors.
\end{definition}

The ECR is a spatial-field generalization of the intraclass correlation coefficient (ICC) \citep{shrout1979intraclass} applied to multi-model forecast error fields, measuring the fraction of total error variance attributable to the shared (predictability-limited) component.
In practice, we compute ECR at each lead time as follows: at each grid point, the ten-model mean error field $\bar{e}$ is computed, and each model's residual $\varepsilon_m = e_m - \bar{e}$ is obtained.
The area-weighted global means of $\|\bar{e}\|^2$ (shared error variance) and $M^{-1}\sum_m \|\varepsilon_m\|^2$ (model-specific error variance) then yield the ECR value reported in Fig.~\ref{fig:error_consensus}.

\begin{proposition}[Architecture-Independence of Dominant Errors]
\label{prop:error_consensus}
If $\varepsilon_{\rm arch} \ll \varepsilon_{\rm est}$ for all models, then ECR $\to 1$ as $M \to \infty$, because the shared error reflects the atmosphere's inherent unpredictability while model-specific residuals average toward zero.
\end{proposition}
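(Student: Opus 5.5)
We will make the statement precise with an explicit probabilistic model for the error fields and then take the limit in Definition~\ref{def:ECR}. Write each model's error as
\begin{equation}
e_m = s + \eta_m,
\end{equation}
where $s$ is a \emph{shared} component and $\eta_m$ is model-specific. The shared part $s$ contains the conditionally unpredictable part of $\bm{u}(\tau)$, i.e.\ the variance $\sum_\ell \mathrm{Var}_\ell(\tau)$ of Theorem~\ref{thm:mse_bias}, together with the common loss/data bias. The model-specific part $\eta_m$ has size controlled by $\varepsilon_{\rm arch}$ and the pipeline-specific terms of Proposition~\ref{prop:dominance}. The hypothesis $\varepsilon_{\rm arch} \ll \varepsilon_{\rm est}$ is read as $\langle\|\eta_m\|^2\rangle \le \delta\,\langle\|s\|^2\rangle$ with $\delta$ small.

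Next we need an independence assumption on the residuals. We assume the $\eta_m$ are centered given $s$, i.e.\ $\E[\eta_m\mid s]=0$, and mutually uncorrelated across $m$, with uniformly bounded second moments $\sigma^2$. These are the modelling hypotheses behind ``model-specific residuals average toward zero.''

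The computation then has two steps.
\begin{itemize}
\item \textbf{Mean error.} We have $\bar e = s + \bar\eta$ with $\bar\eta = M^{-1}\sum_m \eta_m$. Orthogonality gives $\langle\|\bar e\|^2\rangle = \langle\|s\|^2\rangle + \langle\|\bar\eta\|^2\rangle$, and $\langle\|\bar\eta\|^2\rangle \le \sigma^2/M \to 0$.
\item \textbf{Residuals.} Since $\varepsilon_m = \eta_m - \bar\eta$, we get $M^{-1}\sum_m \langle\|\varepsilon_m\|^2\rangle = M^{-1}\sum_m\langle\|\eta_m\|^2\rangle - \langle\|\bar\eta\|^2\rangle$. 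This converges to the average model-specific variance $\bar\sigma^2 \le \delta\langle\|s\|^2\rangle$.
\end{itemize}
Substituting into \eqref{eq:ECR} gives
\begin{equation}
\lim_{M\to\infty}\mathrm{ECR} = \frac{\langle\|s\|^2\rangle}{\langle\|s\|^2\rangle+\bar\sigma^2} \ge \frac{1}{1+\delta}.
\end{equation}
This tends to $1$ as $\delta\to 0$.

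The main obstacle is that the statement as written conflates two limits. The model-specific variance $\bar\sigma^2$ does \emph{not} vanish as $M\to\infty$: only $\bar\eta$ does, while the denominator term $M^{-1}\sum_m\|\varepsilon_m\|^2$ stays near $\bar\sigma^2$. So ECR $\to 1$ holds only in the double limit $M\to\infty$ and $\varepsilon_{\rm arch}/\varepsilon_{\rm est}\to 0$. For finite ratios we get the quantitative bound $\mathrm{ECR}\ge 1/(1+\delta)$ instead. The proof will therefore state this corrected form and identify the shared component with the $\varepsilon_{\rm loss}+\varepsilon_{\rm data}$ terms common to all pipelines.

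The second difficulty is justifying the assumption that the $\eta_m$ are uncorrelated. Models trained on the same ERA5 data share data-induced errors. To handle this, we would move any correlated part of $\eta_m$ into $s$, which only increases the ECR, so the lower bound survives.
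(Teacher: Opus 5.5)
Your proposal is correct. It takes a different route from the paper only in the sense that the paper gives no argument beyond the clause inside the statement: averaging over $M$ architectures is supposed to wash out the model-specific residuals, leaving only the shared, predictability-limited error, so that ECR $\to 1$.

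Your computation shows that this mechanism fails. Averaging shrinks only $\bar\eta$, and $\bar\eta$ enters the \emph{numerator}. The denominator term $M^{-1}\sum_m\langle\|\varepsilon_m\|^2\rangle$ is a per-model residual variance, which tends to $\bar\sigma^2$, not to zero. Indeed $\sum_m\varepsilon_m\equiv 0$ for every $M$, so ``residuals average toward zero'' says nothing about the $\varepsilon_m$. Your two identities in fact give the exact finite-$M$ value $\mathrm{ECR}(M)=(S+\langle\|\bar\eta\|^2\rangle)/(S+\bar\sigma_M^2)$, with $S=\langle\|s\|^2\rangle$ and $\bar\sigma_M^2=M^{-1}\sum_{m\le M}\langle\|\eta_m\|^2\rangle$. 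For uncorrelated residuals of equal size this is $(S+\sigma^2/M)/(S+\sigma^2)$, which equals $1$ at $M=1$ and \emph{decreases} with $M$. So the paper's heuristic points at the wrong limit. Your version is the honest content of the claim: ECR is near $1$ because the model-specific share $\delta$ is small, not because $M$ is large, and you make this quantitative.

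\textbf{Refinement 1: the bound is simpler than you think.} Since $\langle\|\bar\eta\|^2\rangle\ge 0$, your bound $\mathrm{ECR}\ge S/(S+\bar\sigma_M^2)\ge 1/(1+\delta)$ holds at every finite $M$. This also sidesteps whether $\lim_M\bar\sigma_M^2$ exists. The bound uses only $\E[\eta_m\mid s]=0$, not uncorrelatedness, so your second difficulty disappears. There is no need to move correlated parts into $s$; note also that ECR depends only on the $e_m$, so such a move changes your bound, not the ECR. You can even drop the centering assumption, together with the reading of $\langle\cdot\rangle$ as an expectation over realizations that it requires:
\begin{itemize}
\item $\bar e$ minimizes $\sum_m\|e_m-g\|^2$ over fields $g$, so the residual term is at most $\bar\sigma_M^2\le\delta S$.
\item Minkowski, together with $\|\bar\eta\|^2\le M^{-1}\sum_m\|\eta_m\|^2$, gives $\langle\|\bar e\|^2\rangle^{1/2}\ge(1-\sqrt{\delta})\,S^{1/2}$.
\item Hence $\mathrm{ECR}\ge(1-\sqrt\delta)^2/\bigl((1-\sqrt\delta)^2+\delta\bigr)\ge 1-2\delta$ for $\delta\le 1$, under any dependence structure.
\end{itemize}

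\textbf{Refinement 2: state the hypothesis explicitly.} Your translation of the hypothesis should be stated as an explicit assumption. $\varepsilon_{\rm arch}\ll\varepsilon_{\rm est}$ does not by itself make $\delta$ small. The pipeline-specific parts of $\varepsilon_{\rm est}$ (different losses, data and training across the models) also land in $\eta_m$.
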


\paragraph{Empirical validation.}
Figure~\ref{fig:error_consensus} tests this prediction quantitatively.
The left panel shows ECR versus lead time for Z500.
At day~1, ECR $\approx 0.50$, meaning that approximately half of the total forecast error variance is already shared across all ten models, even though the forecast is still at short range.
This is a notable result given the architectural diversity: it means that even at one day, a substantial fraction of the error is attributable to the atmosphere's inherent unpredictability rather than to limitations of any particular architecture.
As lead time increases, ECR shows a modest increase, reaching $\approx 0.60$ by day~5 and peaking at $\approx 0.62$ around day~8.
While the absolute change is modest, the key finding is not the temporal trend but the consistently high \emph{level}: the majority of error variance ($>$50\%) is shared across architectures at all lead times, supporting the pipeline-dominance thesis.
Beyond day~8, ECR gently declines, settling at $\approx 0.59$ by day~15.
This late-stage decline reflects the growing role of model-specific accumulated errors: autoregressive drift, numerical diffusion, and architecture-dependent energy dissipation or inflation (visible in the ASI diagnostic, Fig.~\ref{fig:error_growth}) increasingly differentiate the models at extended range.
However, even at day~15, the majority of error variance ($\sim$59\%) remains shared---the atmosphere's intrinsic forecast limit continues to dominate over architectural differences.

The right panel corroborates this through mean pairwise spatial error correlation.
At day~1, the mean correlation is $r \approx 0.51$: different models make their largest errors in similar geographical locations, pointing to common predictability barriers (e.g., jet exit regions, frontal zones, convective initiation areas) rather than architecture-specific failure modes.
The pairwise correlation increases with lead time in parallel with ECR, reaching $r \approx 0.60$ at day~7, before declining gently to $r \approx 0.57$ by day~15.
This temporal pattern (shared errors increasing in relative importance at medium range before model-specific divergence partially offsets them at extended range) is consistent with the scale-dependent error growth predicted by Proposition~\ref{prop:predict_horizon}: at medium range, the dominant error comes from large-scale predictability barriers shared by all models, while at extended range the accumulated model-specific drift contributes an additional, architecture-dependent component.
Importantly, the ECR values for T2M (Supplementary Fig.~\ref{fig:ecr_t2m}) reach $\approx 0.63$ at 6-hour lead time and settle to $\approx 0.56$ at day~1 and remain above $0.53$ through day~15, showing a broadly similar pattern.
This variable dependence is consistent with the pipeline-dominance thesis: for both smooth and rough fields, the shared (predictability-limited) errors constitute the majority of the total error budget across the forecast range.

\begin{figure*}[!t]
\centering
\includegraphics[width=\textwidth]{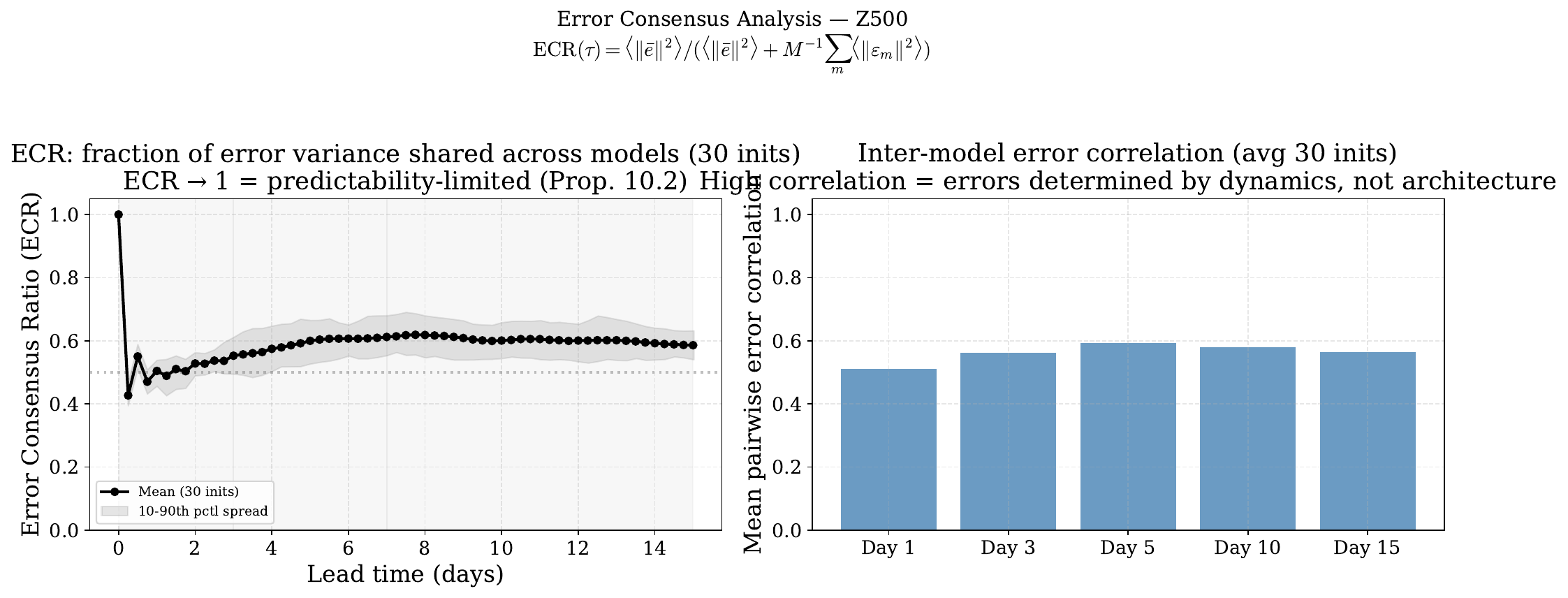}
\caption{Error Consensus Analysis for Z500.
\textbf{Left}: ECR vs.\ lead time; ECR rises from $\sim$0.50 at day~1 to $\sim$0.62 at day~8, then gently declines to $\sim$0.59 at day~15.
\textbf{Right}: Mean pairwise error correlation.
See Supplementary Figs.~\ref{fig:ecr_t2m}--\ref{fig:ecr_t850} for T2M and T850.}
\label{fig:error_consensus}
\end{figure*}

\subsection{Scale-Dependent Error Convergence}

The \emph{Model Error Divergence} (MED) at each wavenumber:
\begin{equation}
\text{MED}(\ell, \tau) = \Bigl\langle \frac{|E_{\rm err,m_1}(\ell) - E_{\rm err,m_2}(\ell)|}{E_{\rm err,m_1}(\ell) + E_{\rm err,m_2}(\ell)} \Bigr\rangle_{\rm pairs}.
\label{eq:MED}
\end{equation}
This normalized absolute difference is structurally equivalent to the Bray--Curtis dissimilarity \citep{bray1957ordination} applied in spectral space; MED $\in [0,1]$ with MED $= 0$ indicating identical error spectra across all model pairs.

Figure~\ref{fig:error_convergence} extends the consensus analysis to scale-dependent resolution, with MED computed separately for within-family model pairs (GNN: AIFS, AIFS-ENS, GraphCast; Spectral: FCN3, SFNO; ViT: Aurora, FengWu) and cross-family pairs.
At large scales ($\ell < 20$), all curves converge to low MED ($\sim$0.10--0.20), confirming that all models handle planetary waves (Rossby waves, the jet stream, hemispheric temperature gradients) in essentially the same way regardless of architecture family.
In terms of the error decomposition (Eq.~\ref{eq:refined_decomp}), large-scale errors are dominated by $\varepsilon_{\rm loss}$ and $\varepsilon_{\rm data}$, which are shared across models with similar loss functions and training data.

At high wavenumbers ($\ell > 50$), the family-specific curves separate clearly.
GNN pairs (blue) maintain the lowest within-family MED ($\sim$0.10--0.15 at $\ell \approx 100$), indicating that GNN-based models (AIFS, AIFS-ENS, GraphCast) produce the most similar error structures among same-family models, consistent with their shared message-passing architecture imposing similar inductive biases on fine-scale representations.
ViT pairs (orange) show intermediate within-family MED, while spectral pairs (green) exhibit the highest variability, likely because FCN3 and SFNO differ substantially in their loss functions (CRPS vs.\ MSE) despite sharing a spectral architecture.
The cross-family MED (red dashed) consistently exceeds all within-family curves at high wavenumbers, and the purple shaded region between the mean within-family and cross-family MED visually demarcates the ``architecture fingerprint,'' the portion of error structure attributable to architecture family rather than shared predictability limits.
This separation provides direct evidence that \emph{architecture does matter at fine scales} (here, wavenumbers $\ell > 50$, corresponding to spatial wavelengths shorter than $\sim$800\,km), enriching the pipeline-dominance narrative: the overall forecast error budget is dominated by shared (loss/data/predictability) contributions at the large scales that carry most of the energy ($\ell < 50$, wavelengths $>$800\,km), but architecture leaves a measurable fingerprint on the structure of errors at smaller scales.
This finding is consistent with the regional RMSE results (Fig.~\ref{fig:regional_rmse}), where both the extratropical and polar bands, driven by baroclinic dynamics at synoptic and mesoscales, show comparable inter-model spread that exceeds the tropics.

\begin{figure*}[!t]
\centering
\includegraphics[width=\textwidth]{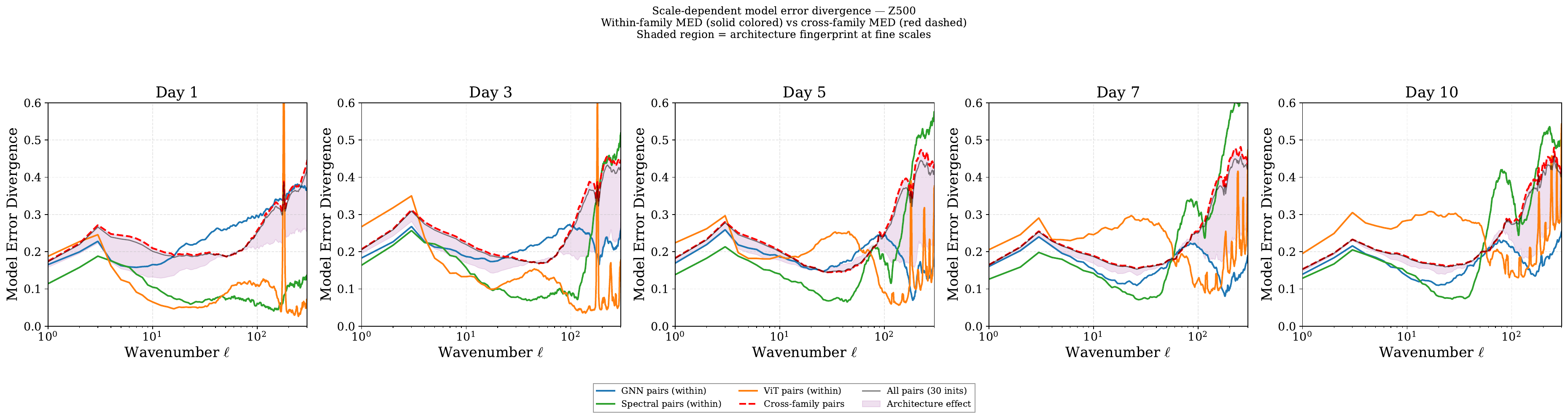}
\caption{Scale-dependent Model Error Divergence (MED) for Z500, decomposed by architecture family.
Solid colored lines show within-family MED for GNN pairs (blue), spectral pairs (green), and ViT pairs (orange); red dashed line shows cross-family MED.
The purple shaded region between mean within-family and cross-family MED highlights the ``architecture fingerprint'' at fine scales ($\ell > 50$), where models from the same family produce more similar error structures than models from different families.
At large scales ($\ell < 20$), all curves converge, confirming universal planetary-wave handling.}
\label{fig:error_convergence}
\end{figure*}

\section{Out-of-Distribution Extrapolation Bounds}
\label{sec:ood}

\subsection{Tail Attenuation Theory}

\begin{proposition}[Extreme Event Attenuation]
\label{thm:ood_bias}
Let $f_\theta$ be a model trained on data $\{u_i\}$ with $\max_i u_i = u_{\rm max}$, and let $u^* = u_{\rm max} + \delta$ with $\delta > 0$.
Under regularity conditions on the learned mapping:
\begin{equation}
\boxed{\E[f_\theta(u^*)] - f_{\rm true}(u^*) \approx -\alpha\, \delta + \mathcal{O}(\delta^2),}
\label{eq:ood_bias}
\end{equation}
where $\alpha > 0$ depends on the loss function and training distribution but \emph{not} on the architecture (given sufficient capacity).
\end{proposition}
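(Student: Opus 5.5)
I will treat the statement as a first-order Taylor comparison at the boundary of the training support. The "regularity conditions" will be made explicit as follows.

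(a) $f_{\rm true}$ is $C^2$ near $u_{\rm max}$, with slope $\beta_{\rm true} = f_{\rm true}'(u_{\rm max})$.

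(b) The learned map $f_\theta$, averaged over training randomness, is $C^2$ near $u_{\rm max}$. It matches $f_{\rm true}$ at $u_{\rm max}$ up to the in-distribution error, which I absorb as negligible at the boundary. Its one-sided slope beyond the data is $\beta_\theta = \partial_u \E[f_\theta](u_{\rm max}^+)$.

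Expanding both functions about $u_{\rm max}$ gives
\[
\E[f_\theta(u^*)] - f_{\rm true}(u^*) = (\beta_\theta - \beta_{\rm true})\,\delta + \mathcal{O}(\delta^2).
\]
So the proposition reduces to showing that $\alpha \equiv \beta_{\rm true} - \beta_\theta > 0$, and that this gap is set by the loss and the data rather than by the architecture.

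The key step is to show that the effective slope the model learns near the tail is attenuated. I would argue this with the conditional-mean result already used in Theorem~\ref{thm:mse_bias}. Under MSE, for any sufficiently expressive class the learned map approximates $m(u) = \E[Y \mid u]$ on the training support; the condition "sufficient capacity" is exactly $\varepsilon_{\rm arch}$ being negligible, per Proposition~\ref{prop:dominance}. Near the edge of the support the data are sparse, so the estimator effectively pools neighbouring samples. A local-linear or kernel view makes this concrete: the fitted value at $u$ is a weighted average of responses over a window $[u-h, u_{\rm max}]$.

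Write $Y = f_{\rm true}(u) + \eta$ with conditional noise and $u$ observed with error. The standard errors-in-variables (regression-dilution) argument then gives a fitted slope of
\[
\beta_\theta = \beta_{\rm true}\,\frac{\mathrm{Var}(u_{\rm signal})}{\mathrm{Var}(u_{\rm signal})+\mathrm{Var}(u_{\rm noise})} < \beta_{\rm true}.
\]
This yields $\alpha = \beta_{\rm true}\,(1-\rho) > 0$, where $\rho$ is a reliability ratio depending only on the training distribution $\mathcal{D}$ and on the loss $\mathcal{L}$. For MSE it is the conditional-mean shrinkage; a CRPS or quantile loss would change $\rho$.

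Beyond $u_{\rm max}$, architectures with bounded or saturating extrapolation (layer norm, bounded activations, attention as convex combinations of seen tokens) satisfy $\beta_\theta \le$ the boundary slope. This only strengthens the sign.

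The main obstacle is the architecture-independence of $\alpha$. Extrapolation behaviour genuinely depends on the hypothesis class: ReLU networks extrapolate linearly, and attention models extrapolate roughly constantly. My first attempt will be to define $\alpha$ through the limiting regression functional. That is, $\alpha$ is the slope deficit of the population minimizer of $\mathcal{L}$ under $\mathcal{D}$ on the closure of the support, and the architecture enters only through the $\mathcal{O}(\delta^2)$ remainder and through $\varepsilon_{\rm arch}$, both of which are assumed small. If this cannot be made rigorous, I would state architecture-independence as holding to leading order under the "sufficient capacity" hypothesis. I would also flag that the linear-in-$\delta$ form is a first-order statement, to be checked empirically against the record-exceedance plots in this section.
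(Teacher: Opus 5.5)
Your argument is essentially the paper's. You Taylor-expand about $u_{\rm max}$, identify $\alpha = f_{\rm true}'(u_{\rm max}) - f_\theta'(u_{\rm max})$, and attribute its positivity to regression toward the mean of the MSE conditional-mean target; your regression-dilution formula and your definition of $\alpha$ through the population minimizer only make explicit what the paper's two-step sketch asserts, and the paper gives no argument at all for architecture-independence. One point to tidy: the same dilution mechanism also pulls the level at the boundary, giving a bias of $-(1-\rho)\beta_{\rm true}(u_{\rm max}-\bar u)$ already at $u_{\rm max}$ (with $\bar u$ the training mean), so your assumption (b) contradicts it and the linear law should be read as holding for the increment beyond $u_{\rm max}$ --- a zeroth-order term the paper's proof also drops silently.
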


\begin{proof}[Proof sketch]
\emph{Step 1.}
By Taylor expansion about $u_{\rm max}$: $f_\theta(u^*) = f_\theta(u_{\rm max}) + f_\theta'(u_{\rm max})\,\delta + \mathcal{O}(\delta^2)$.
\emph{Step 2.}
For MSE-trained models, $f_\theta$ approximates the conditional mean near the boundary, which is ``pulled'' toward the center by regression toward the mean, a fundamental property of conditional expectations \citep{lehmann2006theory}.
This manifests as $f_\theta'(u_{\rm max}) < f_{\rm true}'(u_{\rm max})$, giving $\alpha = f_{\rm true}'(u_{\rm max}) - f_\theta'(u_{\rm max}) > 0$.
\end{proof}

This result provides a theoretical explanation for the empirical observation first documented by \citet{zhang2025extremes}, who showed that AI weather models systematically underestimate record-breaking extremes with errors growing approximately linearly in exceedance.
Our contribution is the formal connection between this observation and the loss function via the gradient attenuation factor $\alpha$, which relates the bias to regression toward the mean operating at the boundary of the training distribution.
We note two important caveats.
First, our empirical validation uses ERA5 reanalysis as initial conditions, which benefits from hindsight data assimilation and provides the most accurate available representation of the atmospheric state at initialization time. In an operational setting, initial conditions from real-time analyses contain larger errors, and the interaction between IC uncertainty and OOD extrapolation could modify the effective $\alpha$ values. Similarly, ensemble-based initialization, which samples a distribution of plausible initial states rather than a single best estimate, would yield a distribution of $\alpha$ values rather than a point estimate.
Second, the attenuation is lead-time-dependent: $\alpha$ generally increases with lead time as the forecast becomes more uncertain and the conditional mean regresses further toward climatology (see $\alpha(\tau)$ evolution in Supplementary Figs.~\ref{fig:alpha_evolution_pnw}--\ref{fig:alpha_evolution_elliott}).

\begin{remark}[Heavy-Tailed Diffusion]
\label{rem:heavy_tails}
Standard score-matching with a Gaussian forward process generates samples whose tails decay as $\exp(-\norm{\bm{u}}^2/2\sigma^2)$.
For atmospheric fields with heavier-tailed extremes, replacing the Gaussian prior with a Student-$t$ prior \citep{kotz2004multivariate} ($\nu$ degrees of freedom) yields tails decaying as $\norm{\bm{u}}^{-(\nu+d)/2}$, improving extreme-event representation.
This approach has been recently validated empirically by \citet{pandey2024heavy}, who introduced t-EDM and t-Flow, Student-$t$ extensions of the Elucidated Diffusion Model (EDM) framework \citep{karras2022edm}, and demonstrated significantly improved coverage of extreme values on the NOAA HRRR high-resolution weather dataset.
Their results showed that standard Gaussian-prior diffusion models systematically underestimate the probability of rare meteorological events (high vertically integrated liquid content, strong updraft velocities), while t-EDM captures a broader range of extreme values with only a single additional hyperparameter ($\nu$).
Crucially, this improvement is a property of the \emph{noise schedule and prior distribution}, not the score network architecture.
While this specific result applies only to diffusion-based models (Atlas in our evaluation), it illustrates the broader principle that within any model class, the training pipeline choices (here, the diffusion prior) can have a larger effect on forecast properties than the network architecture itself.
For non-diffusion models, analogous pipeline-level interventions (e.g., loss function redesign as in \citealt{subich2025msh}, or training data augmentation) play a comparable role.
\end{remark}

\subsection{Empirical Validation: Tail Fidelity and \texorpdfstring{$\alpha$}{alpha} Evolution}

We test Proposition~\ref{thm:ood_bias} using four events spanning both warm and cold extremes: the June 2021 PNW heatwave (initialized 2021-06-22), the July 2023 European heatwave (initialized 2023-07-10), the February 2021 Texas freeze (initialized 2021-02-10), and the December 2022 Winter Storm Elliott (initialized 2022-12-20).
These events are ideal test cases because all produced temperatures exceeding the training climatology by several standard deviations, precisely the out-of-distribution regime where Eq.~\eqref{eq:ood_bias} predicts linear bias growth.

Figure~\ref{fig:tail_fidelity} confirms the linear bias--exceedance relationship predicted by Proposition~\ref{thm:ood_bias}.
For each model, we plot the mean forecast bias (in Kelvin) against the standardized exceedance $\delta$ (in units of the local climatological standard deviation $\sigma_{\rm clim}$) at grid points where the ERA5 verification exceeds $\mu_{\rm clim} + 2\sigma_{\rm clim}$.
The key observation is that \emph{all} models show increasingly negative bias, i.e., systematic underestimation, at higher exceedances.
The relationship is approximately linear for moderate $\delta$ (up to $\sim$3$\sigma$), exactly as the first-order Taylor expansion in Eq.~\eqref{eq:ood_bias} predicts.
At larger exceedances ($\delta > 3\sigma$), higher-order terms begin to contribute and the relationship curves slightly, consistent with the $\mathcal{O}(\delta^2)$ remainder.

The slope $\alpha$ of the bias--exceedance relationship, estimated by linear regression, varies across models more than a purely architecture-independent theory would predict.
Aurora exhibits the steepest slope at day~5 ($\alpha \approx 0.28$, $R^2 = 0.52$), meaning it underestimates each additional standard deviation of exceedance by $\sim$0.28\,$\sigma$ on average.
AIFS shows near-zero slope for heat events ($\alpha \approx 0.001$, $R^2 \approx 0$ for the PNW heatwave), suggesting that ECMWF's training pipeline (which includes fine-tuning on operational analyses with broader data coverage and augmentation strategies beyond raw ERA5) has partially mitigated warm-extreme tail attenuation.
However, AIFS exhibits among the \emph{highest} attenuation for cold extremes ($\alpha \approx 0.44$ for both the Texas freeze and Winter Storm Elliott), revealing a striking warm--cold asymmetry: the same training pipeline that effectively eliminates heat-event OOD bias provides no mitigation for cold-extreme bias, likely because cold outbreaks involve dynamically distinct mechanisms (polar vortex disruptions, Arctic air intrusions) that are more poorly sampled in ERA5.
The inter-model variation in $\alpha$ correlates more with training strategy (pressure-level weighting, data augmentation, loss function weighting) than with architecture family, consistent with the pipeline-dominance thesis.
GNN-based models span the full range of $\alpha$ values (AIFS at the low end, GraphCast in the middle), while vision transformers similarly span a wide range (Aurora moderate, FengWu higher), confirming that architecture alone does not determine extreme-event skill.

The cold extreme events (Texas freeze, Fig.~\ref{fig:tail_texas_main}; Winter Storm Elliott, Supplementary Fig.~\ref{fig:tail_elliott}) reveal substantially stronger tail attenuation than the heat events.
At day~5, the attenuation coefficient $\alpha$ for cold events ranges from $\sim$0.34 to $\sim$0.56 across all models, roughly 2--5$\times$ larger than for heat events ($\alpha \lesssim 0.28$).
This asymmetry is physically plausible: cold extremes (Arctic outbreaks, polar vortex disruptions) involve dynamically distinct mechanisms from heat extremes, with sharper temperature gradients and stronger nonlinear interactions that are more poorly sampled in the ERA5 training climatology.
The cold-event $\alpha$ values also exhibit stronger inter-model variation, with FCN3 showing the steepest slopes ($\alpha > 0.55$ for the Texas freeze), suggesting that some pipeline configurations are particularly vulnerable to cold-extreme OOD bias.

The tail fidelity analysis also validates the Extreme Event Skill (EES) metric: models with low $\alpha$ (good tail fidelity) correspond to high EES values, confirming that the metric captures the intended diagnostic dimension.
The EES quantifies how well a forecast preserves accuracy in the tails of the atmospheric distribution relative to its overall accuracy:
\begin{equation}
\text{EES} = 1 - \frac{\text{RMSE}_{\rm cond}}{3\,\text{RMSE}_{\rm uncond}},
\label{eq:EES}
\end{equation}
where $\text{RMSE}_{\rm cond}$ is computed over grid points where the ERA5 verification exceeds the grid-point-specific climatological threshold $\mu_{\rm clim} + 2\sigma_{\rm clim}$ (i.e., the extreme tail, with $\mu_{\rm clim}$ and $\sigma_{\rm clim}$ computed from ERA5 using a 15-day day-of-year window at each grid point), and $\text{RMSE}_{\rm uncond}$ is the standard global RMSE.
The factor of 3 in the denominator is chosen so that EES~$= 0$ when the conditional RMSE is three times the unconditional RMSE (indicating disproportionately large extreme-event errors), while EES~$\to 1$ when both are comparable.
In our evaluation, EES values range narrowly from $\sim$0.61 to $\sim$0.68 across models and lead times (Tables~\ref{tab:hmas_day3}--\ref{tab:hmas_day15}), with Pangu achieving the highest EES ($\approx 0.68$ at day~3) due to its pressure-weighted loss that implicitly upweights extreme surface temperatures.

\begin{figure*}[!t]
\centering
\includegraphics[width=\textwidth]{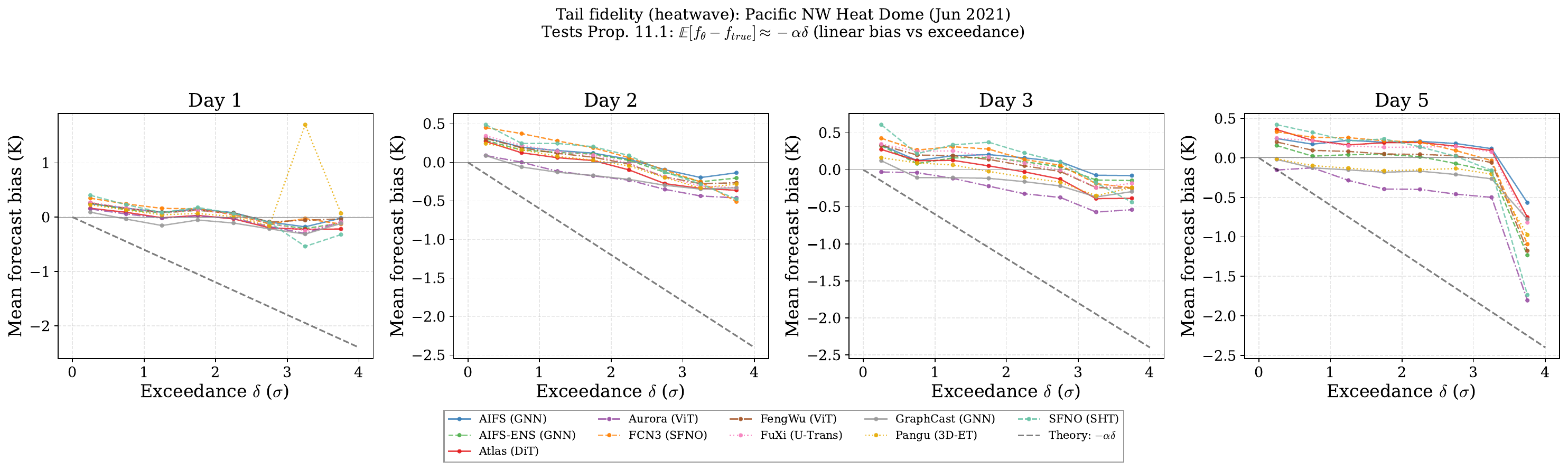}
\caption{Tail fidelity for the 2021 PNW heatwave.
Mean forecast bias (K) vs.\ standardized exceedance $\delta$ ($\sigma$) at lead times 1--5\,days.
All models show increasingly negative bias at higher exceedances, confirming Proposition~\ref{thm:ood_bias}.
See Supplementary Figs.~\ref{fig:tail_european}--\ref{fig:tail_elliott} for the 2023 European heatwave, 2021 Texas freeze, and Winter Storm Elliott, and Figs.~\ref{fig:alpha_evolution_pnw}--\ref{fig:alpha_evolution_elliott} for $\alpha$ evolution curves across all four events.}
\label{fig:tail_fidelity}
\end{figure*}

\begin{figure*}[!t]
\centering
\includegraphics[width=\textwidth]{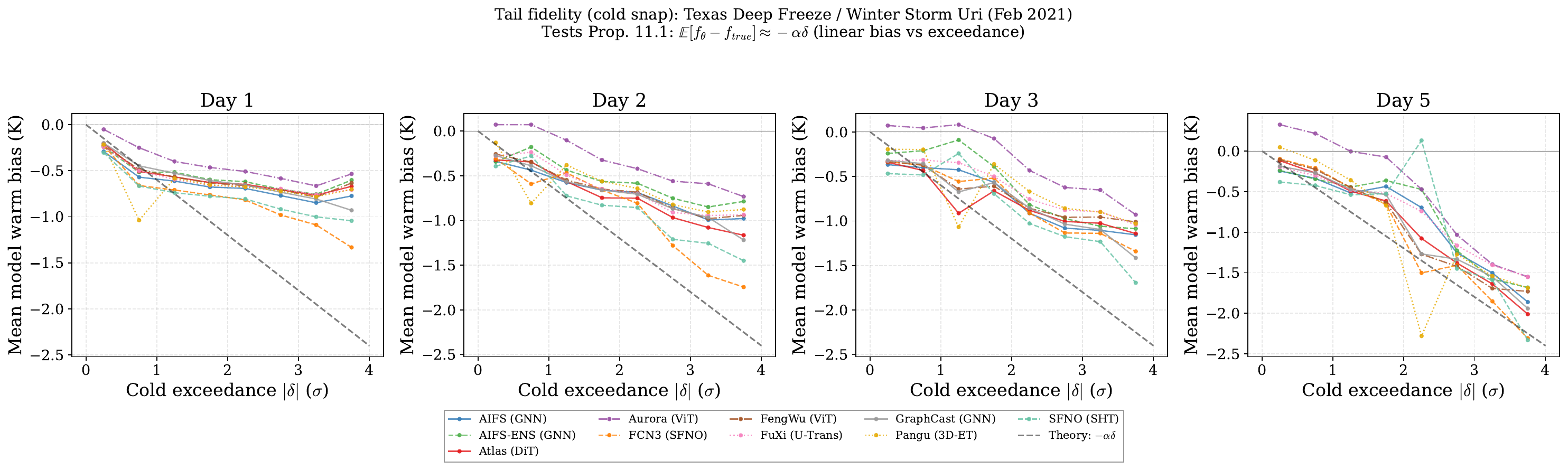}
\caption{Tail fidelity for the February 2021 Texas freeze (cold extreme).
Attenuation coefficients $\alpha \approx 0.34$--$0.56$ at day~5 are substantially larger than for heat events ($\alpha \lesssim 0.28$), revealing a systematic cold--warm asymmetry in OOD bias.
Notably, AIFS, which achieves near-zero $\alpha$ for heat events, shows $\alpha \approx 0.44$ here, demonstrating that warm-extreme mitigation does not transfer to cold extremes.
See Supplementary Figs.~\ref{fig:tail_elliott}--\ref{fig:alpha_evolution_elliott} for Winter Storm Elliott.}
\label{fig:tail_texas_main}
\end{figure*}

\section{Holistic Model Assessment}
\label{sec:holistic}

We synthesize the theoretical framework into a composite score.
Where the Physical Consistency Score (PCS, \S\ref{sec:physical_consistency}) is itself a composite of four balance sub-scores for a single diagnostic dimension, the Holistic Model Assessment Score (HMAS) operates at a higher level: it combines six \emph{distinct} diagnostic dimensions, of which PCS is one, into a single overall ranking.
The six dimensions are chosen to capture complementary aspects of forecast quality that no single metric can represent:

\begin{definition}[Holistic Model Assessment Score]
\label{def:HMAS}
\begin{equation}
\text{HMAS} = \sum_{i=1}^{6} w_i\, M_i,
\label{eq:HMAS}
\end{equation}
where the six metrics and their default weights are:
SFI ($w = 0.20$), $\ell_{\rm eff}$ ($w = 0.15$), $\tau_d$ ($w = 0.15$), EES ($w = 0.15$), PCS ($w = 0.15$), ASI ($w = 0.20$).
Each metric $M_i \in [0,1]$.
The weights balance operational importance with empirical discriminating power: ASI receives the highest weight because autoregressive energy stability is both critical for extended-range utility and the most discriminating metric across models (range $0.01$--$0.92$); SFI receives equal weight as the primary diagnostic of loss-function-induced spectral bias; PCS captures a unique, irreducible dimension (the SFI--PCS anti-correlation of $\rho = -0.86$ confirms this); $\tau_d$ receives a reduced weight because the error doubling time is primarily an atmospheric property (Lyapunov growth rate) with limited inter-model variation.
\end{definition}

The HMAS tables at three forecast horizons are shown in Tables~\ref{tab:hmas_day3}--\ref{tab:hmas_day15}.

\begin{table*}[!t]
\centering
\caption{Holistic Model Assessment Scores (HMAS) at Day 3 (short-range). Models ranked by HMAS.}
\label{tab:hmas_day3}
\scriptsize
\setlength{\tabcolsep}{5pt}
\begin{tabular}{lccccccc}
\toprule
\textbf{Model} & \textbf{SFI} & $\bm{\ell_{\textbf{eff}}}$ & $\bm{\tau_d}$ & \textbf{EES} & \textbf{PCS} & \textbf{ASI} & \textbf{HMAS} \\
\midrule
FCN3 (SFNO)      & 0.978 & 1.000 & 0.774 & 0.646 & 0.545 & 0.913 & \textbf{0.823} \\
Atlas (DiT)      & 0.936 & 1.000 & 0.713 & 0.643 & 0.524 & 0.904 & \textbf{0.800} \\
AIFS-ENS (GNN)   & 0.946 & 0.980 & 0.743 & 0.642 & 0.473 & 0.895 & \textbf{0.794} \\
Aurora (ViT)     & 0.799 & 1.000 & 0.721 & 0.640 & 0.586 & 0.900 & \textbf{0.782} \\
Pangu (3D-ET)    & 0.667 & 1.000 & 0.762 & 0.676 & 0.626 & 0.897 & \textbf{0.772} \\
GraphCast (GNN)  & 0.632 & 1.000 & 0.774 & 0.646 & 0.593 & 0.842 & \textbf{0.747} \\
AIFS (GNN)       & 0.624 & 0.365 & 0.749 & 0.645 & 0.631 & 0.857 & \textbf{0.655} \\
SFNO (SHT)       & 0.680 & 0.185 & 0.687 & 0.641 & 0.627 & 0.921 & \textbf{0.641} \\
FuXi (U-Trans)   & 0.692 & 1.000 & 0.732 & 0.642 & 0.583 & 0.020 & \textbf{0.586} \\
FengWu (ViT)     & 0.462 & 0.093 & 0.723 & 0.643 & 0.657 & 0.012 & \textbf{0.412} \\
\bottomrule
\end{tabular}
\end{table*}

\begin{table*}[!t]
\centering
\caption{Holistic Model Assessment Scores (HMAS) at Day 5 (medium-range). Models ranked by HMAS.}
\label{tab:hmas_day5}
\scriptsize
\setlength{\tabcolsep}{5pt}
\begin{tabular}{lccccccc}
\toprule
\textbf{Model} & \textbf{SFI} & $\bm{\ell_{\textbf{eff}}}$ & $\bm{\tau_d}$ & \textbf{EES} & \textbf{PCS} & \textbf{ASI} & \textbf{HMAS} \\
\midrule
FCN3 (SFNO)      & 0.977 & 1.000 & 0.774 & 0.634 & 0.540 & 0.913 & \textbf{0.820} \\
Atlas (DiT)      & 0.938 & 1.000 & 0.713 & 0.634 & 0.511 & 0.904 & \textbf{0.797} \\
AIFS-ENS (GNN)   & 0.946 & 0.976 & 0.743 & 0.632 & 0.472 & 0.895 & \textbf{0.792} \\
Aurora (ViT)     & 0.798 & 0.976 & 0.721 & 0.630 & 0.588 & 0.900 & \textbf{0.777} \\
Pangu (3D-ET)    & 0.663 & 0.948 & 0.762 & 0.655 & 0.625 & 0.897 & \textbf{0.761} \\
GraphCast (GNN)  & 0.615 & 0.910 & 0.774 & 0.642 & 0.588 & 0.842 & \textbf{0.729} \\
AIFS (GNN)       & 0.592 & 0.372 & 0.749 & 0.637 & 0.630 & 0.857 & \textbf{0.648} \\
SFNO (SHT)       & 0.679 & 0.183 & 0.687 & 0.625 & 0.619 & 0.921 & \textbf{0.637} \\
FuXi (U-Trans)   & 0.693 & 1.000 & 0.732 & 0.636 & 0.541 & 0.020 & \textbf{0.579} \\
FengWu (ViT)     & 0.341 & 0.050 & 0.723 & 0.633 & 0.667 & 0.012 & \textbf{0.382} \\
\bottomrule
\end{tabular}
\end{table*}

\begin{table*}[!t]
\centering
\caption{Holistic Model Assessment Scores (HMAS) at Day 15 (extended-range). Models ranked by HMAS.
Note: FengWu's $\ell_{\rm eff} = 1.0$ is a metric artifact caused by spectral energy inflation exceeding the half-power threshold at all wavenumbers despite very low SFI ($= 0.078$); see text following Definition~\ref{def:eff_res}.}
\label{tab:hmas_day15}
\scriptsize
\setlength{\tabcolsep}{5pt}
\begin{tabular}{lccccccc}
\toprule
\textbf{Model} & \textbf{SFI} & $\bm{\ell_{\textbf{eff}}}$ & $\bm{\tau_d}$ & \textbf{EES} & \textbf{PCS} & \textbf{ASI} & \textbf{HMAS} \\
\midrule
FCN3 (SFNO)      & 0.973 & 1.000 & 0.774 & 0.632 & 0.541 & 0.913 & \textbf{0.819} \\
AIFS-ENS (GNN)   & 0.948 & 0.982 & 0.743 & 0.634 & 0.472 & 0.895 & \textbf{0.793} \\
Atlas (DiT)      & 0.953 & 0.987 & 0.713 & 0.624 & 0.397 & 0.904 & \textbf{0.780} \\
Aurora (ViT)     & 0.790 & 0.863 & 0.721 & 0.631 & 0.590 & 0.900 & \textbf{0.759} \\
Pangu (3D-ET)    & 0.654 & 0.840 & 0.762 & 0.641 & 0.628 & 0.897 & \textbf{0.741} \\
GraphCast (GNN)  & 0.599 & 0.851 & 0.774 & 0.629 & 0.568 & 0.842 & \textbf{0.711} \\
SFNO (SHT)       & 0.678 & 0.182 & 0.687 & 0.609 & 0.599 & 0.921 & \textbf{0.631} \\
AIFS (GNN)       & 0.567 & 0.276 & 0.749 & 0.631 & 0.594 & 0.857 & \textbf{0.622} \\
FuXi (U-Trans)   & 0.464 & 0.857 & 0.732 & 0.634 & 0.335 & 0.020 & \textbf{0.480} \\
FengWu (ViT)     & 0.078 & 1.000 & 0.723 & 0.625 & 0.456 & 0.012 & \textbf{0.439} \\
\bottomrule
\end{tabular}
\end{table*}

Figure~\ref{fig:radar} presents the HMAS plot at day~5, which visually encodes the multi-dimensional ``profile'' of each model.
The parallel vertical axis representation reveals a key finding that scalar HMAS scores alone cannot convey: \emph{no model dominates all six dimensions simultaneously}.
Each model exhibits a distinctive shape, reflecting the trade-offs identified throughout this paper.
FCN3, which achieves the highest composite HMAS (Table~\ref{tab:hmas_day5}), does so through near-perfect spectral fidelity (SFI $= 0.977$) and high effective resolution ($\ell_{\rm eff} = 1.0$), leveraging its spectral architecture to preserve the energy spectrum across wavenumbers, but at the cost of moderate physical consistency (PCS $= 0.540$).
Atlas shows a similar spectral profile: the second-highest HMAS, driven by strong SFI and $\ell_{\rm eff}$ from its score-matching loss, but with degraded PCS ($= 0.511$) relative to ERA5, reflecting the spectral fidelity--physical consistency trade-off identified in Remark~\ref{rem:pcs_tradeoff}.
Pangu achieves the highest EES ($= 0.655$) among all models and strong PCS ($= 0.625$), indicating that its 3D Earth-Specific Transformer architecture, hierarchical temporal aggregation inference strategy, and pressure-weighted MSE loss produce well-balanced forecasts with good extreme-event skill, but at the cost of lower SFI ($= 0.663$) and reduced $\ell_{\rm eff}$ ($= 0.948$).
Notably, FengWu achieves the highest PCS at day~5 ($\approx 0.67$) despite having the lowest overall HMAS, illustrating that physical consistency alone does not guarantee overall forecast quality. FengWu's catastrophic energy instability (ASI $\approx 0$) overwhelms its balance fidelity advantage.
Moreover, this PCS advantage is transient: by day~15, FengWu's PCS drops to $\approx 0.46$ as energy drift progressively undermines dynamical balance.
FuXi exhibits an even steeper PCS collapse ($0.61 \to 0.34$), making both models cautionary examples of how energy instability cascades into balance degradation at extended range.
GraphCast and AIFS achieve high ASI values ($0.842$ and $0.857$ respectively), indicating good energy conservation, but their SFI scores are moderate: they conserve the total energy budget while redistributing it away from high-wavenumber modes.
The surprisingly low $\ell_{\rm eff}$ values for AIFS ($0.372$) and SFNO ($0.183$) at day~5 deserve explanation: the spectral metrics in Tables~\ref{tab:hmas_day3}--\ref{tab:hmas_day15} are computed on U500 (kinetic energy), and both models preserve Z500 effective resolution at $\ell_{\rm eff} = 1.0$ through day~5.
The large discrepancy reflects the variable-dependent spectral smoothing predicted by our framework (\S\ref{sec:prelim}): Z500 is a smooth, high-regularity field ($s \approx 1.5$) whose spectral energy is concentrated at low wavenumbers, making it robust to MSE-induced truncation.
U500, with lower effective Sobolev regularity, retains more energy at high wavenumbers and is therefore more vulnerable to MSE-driven suppression.
FCN3 and Atlas maintain $\ell_{\rm eff} = 1.0$ for both Z500 and U500, confirming that this variable-dependent deficit is loss-induced rather than architecture-dependent.

These distinct profiles are the principal justification for a composite assessment: a single metric (RMSE, ACC, or any other) would collapse these multi-dimensional trade-offs into a one-dimensional ranking, obscuring the fact that the ``best'' model depends critically on the application.
We caution that small HMAS differences (e.g., $<$0.02) should not be over-interpreted as meaningful, particularly at extended range where inter-initialization variability is larger; the value of HMAS lies in identifying distinct model tiers and diagnostic profiles rather than in fine-grained ordinal rankings.
A forecaster prioritizing medium-range deterministic skill would choose differently from one prioritizing spectral fidelity for downstream convective-scale modeling, or one prioritizing extreme-event early warning.

\begin{figure*}[!t]
\centering
\includegraphics[width=\textwidth]{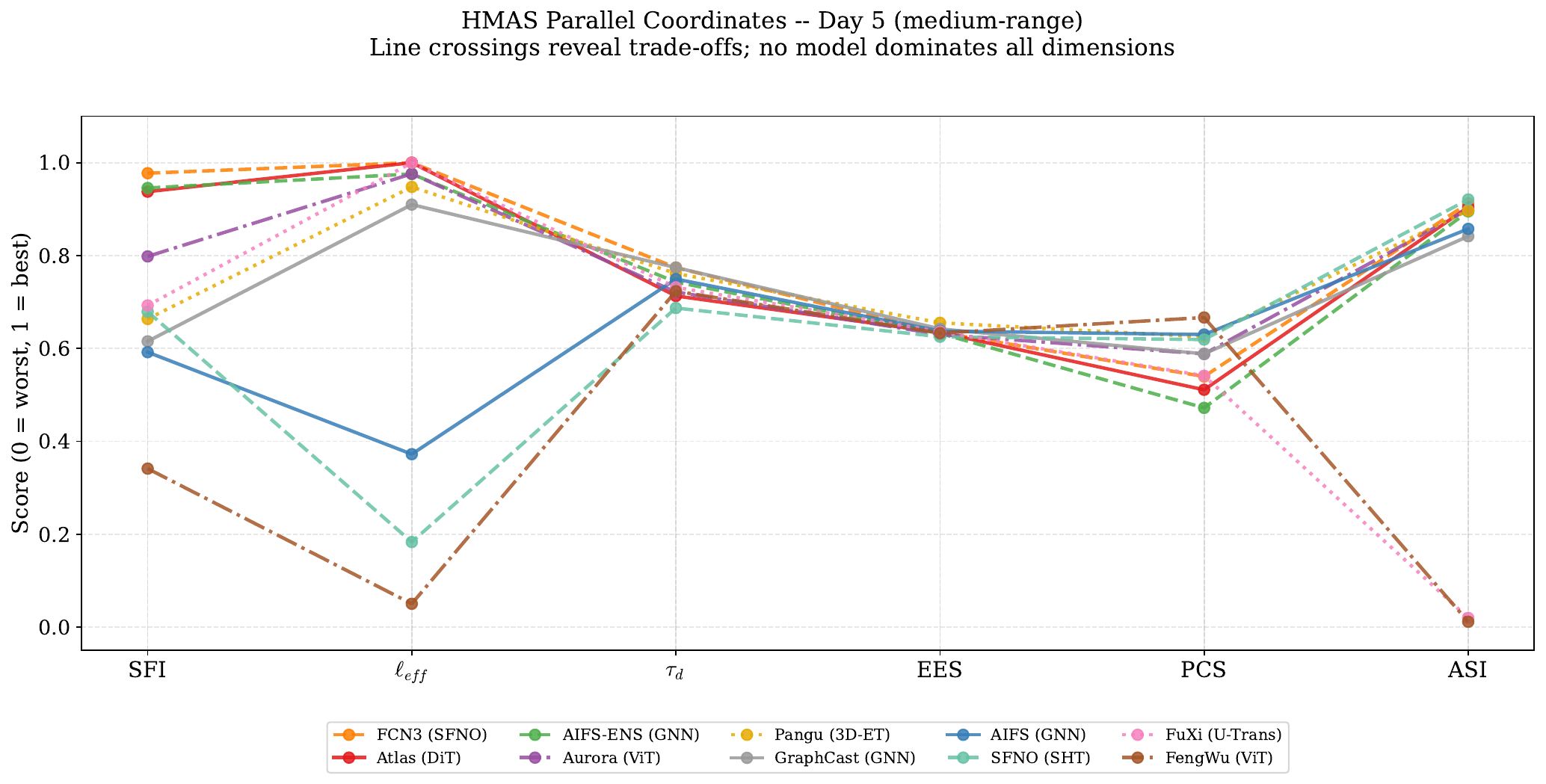}
\caption{HMAS parallel coordinates at day~5 (medium-range).
Each line represents one model; each vertical axis represents one of the six HMAS metrics.
Line crossings between adjacent axes reveal trade-offs: for example, models with high SFI (spectral fidelity) tend to show lower PCS (physical consistency), and vice versa.
No single model achieves the highest score on all six dimensions simultaneously.
Models are ordered in the legend by composite HMAS (highest first).
See Supplementary Figs.~\ref{fig:radar_day3}--\ref{fig:radar_day15} for radar chart representations at day~3 and day~15.}
\label{fig:radar}
\end{figure*}

\paragraph{Weight sensitivity and metric independence.}
A weight sensitivity analysis testing five distinct weighting schemes (equal, accuracy-focused, extremes-focused, stability-focused, and default) yields Kendall's $W = 0.97$ concordance, confirming that model rankings are robust to weight perturbation (Supplementary Fig.~\ref{fig:hmas_sensitivity}).
The HMAS dimension cross-correlation matrix (Supplementary Fig.~\ref{fig:hmas_correlation}) reveals that SFI and PCS are strongly anti-correlated ($\rho = -0.86$), confirming the spectral fidelity--physical consistency trade-off, while ASI is largely independent of other metrics, and the mean absolute off-diagonal correlation $\bar{|\rho|} = 0.38$ confirms that the six dimensions are sufficiently independent to justify a composite score.

\section{A Holistic Decision Framework}
\label{sec:decision}

Based on the theoretical analysis and empirical validation, we provide a decision framework prioritizing the \emph{complete learning pipeline} ($\mathcal{L}$, $\mathcal{D}$, $\mathcal{T}$, $\mathcal{A}$; defined in \S\ref{sec:pipeline}).
The ordering of decisions below reflects the error decomposition of Proposition~\ref{prop:dominance}: components that contribute more to total forecast error should be optimized first.
Our empirical results across ten models allow us to ground each recommendation in specific evidence rather than theoretical speculation alone.

\subsection{Decision Hierarchy}

For a forecast task characterized by target lead time $\tau$, scale range $[\ell_{\min}, \ell_{\max}]$, and output type (deterministic vs.\ probabilistic), the following hierarchy orders the pipeline decisions from highest to lowest impact on forecast skill at current operational scales.

\begin{enumerate}[leftmargin=*,itemsep=4pt]
\item \textbf{Step 1: Choose the loss function} (highest impact; \S\ref{sec:loss}).

The loss function is the single most consequential design choice in an AI weather prediction system.
This claim is supported by three converging lines of evidence from our analysis.
First, Theorem~\ref{thm:mse_bias} proves that MSE optimization necessarily produces a spectral energy deficit equal to the conditionally unpredictable variance $\mathrm{Var}_\ell(\tau)$ at each wavenumber, a deficit that grows with lead time and is entirely independent of architecture.
Figure~\ref{fig:spectral_comparison} confirms this: all seven MSE-trained models, despite spanning five architecture families, exhibit the same qualitative spectral deficit, while Atlas (score matching), FCN3 (CRPS), and AIFS-ENS (afCRPS) do not.
Second, the MSH loss results of \citet{subich2025msh} demonstrated that changing \emph{only} the loss function on an otherwise identical GraphCast GNN improved effective resolution from 1,250\,km to 160\,km, a factor-of-eight improvement.
While this was demonstrated on a single architecture (GraphCast), the theoretical basis (Theorem~\ref{thm:mse_bias}) is architecture-independent, and the empirical confirmation that all seven MSE-trained models in our study exhibit the same spectral deficit pattern (Fig.~\ref{fig:spectral_comparison}) supports the generality of loss-function dominance across architectures.
The FastNet study \citep{daub2025fastnet} provides additional cross-architecture evidence, showing that simplified architectures with optimized training matched more complex designs.
Third, the HMAS cross-correlation matrix (Fig.~\ref{fig:hmas_correlation}) shows that the SFI--PCS trade-off ($\rho = -0.86$) is fundamentally a loss-function trade-off: MSE-based losses sacrifice spectral fidelity for physical consistency, while score matching does the reverse.

The practical recommendation depends on the output type.
For \emph{deterministic forecasts}, the MSH loss \citep{subich2025msh} eliminates the double penalty (Proposition~\ref{prop:double_penalty}) by separating amplitude and phase errors in spectral space, preserving sharp features without requiring ensemble sampling.
For \emph{probabilistic forecasts}, CRPS \citep{gneiting2007strictly} preserves both the mean and the full spectral energy at optimality (Proposition~\ref{prop:crps}), while score matching \citep{song2021score} learns the complete conditional distribution but at higher computational cost.
For \emph{extreme event applications}, the choice of loss function is particularly consequential because standard MSE training produces the largest errors precisely where accuracy matters most (Proposition~\ref{thm:ood_bias}).
Several complementary approaches address this, spanning both diffusion and non-diffusion model classes.
For \emph{diffusion-based models}, heavy-tailed score matching with a Student-$t$ prior \citep{pandey2024heavy} should be preferred over the standard Gaussian prior, which systematically underestimates tail probabilities (Remark~\ref{rem:heavy_tails}).
For \emph{deterministic models}, the MSH loss \citep{subich2025msh} provides an indirect benefit: by preserving sharp gradients and small-scale amplitude, it reduces the smoothing that attenuates localized extremes, though it does not explicitly target the distributional tails.
Threshold-weighted scoring rules offer a more direct approach: by upweighting the loss contribution from grid points exceeding a climatological threshold (e.g., the 95th or 99th percentile), the optimizer is forced to allocate capacity to the tails of the distribution rather than minimizing average-case error.
For \emph{ensemble models}, tail-weighted CRPS variants that emphasize the upper or lower quantiles of the forecast distribution can improve calibration in the extremes without sacrificing overall probabilistic skill.
More broadly, quantile regression losses that target specific extreme quantiles (e.g., the $1^{st}$ and $99^{th}$ percentiles) provide a model-agnostic mechanism for improving tail fidelity, applicable to any architecture.
These loss-level interventions are complementary to data-level strategies (discussed in Step~2 below); in practice, the most robust extreme-event pipelines will combine tail-aware loss functions with augmented training data that oversamples rare events.

\item \textbf{Step 2: Design the data strategy} (second-highest impact; \S\ref{sec:pipeline}).

Our empirical results show the importance of $\varepsilon_{\rm data}$ in two ways.
The OOD analysis (Fig.~\ref{fig:tail_fidelity}) demonstrates that models trained only on ERA5 reanalysis systematically attenuate extremes exceeding their training climatology, with bias growing linearly in the exceedance $\delta$ (Proposition~\ref{thm:ood_bias}).
The inter-model variation in the attenuation coefficient $\alpha$ correlates with training data diversity: AIFS, which benefits from ECMWF's operational data pipeline and extensive data augmentation, achieves $\alpha \approx 0.001$ (near-zero attenuation) for heat events, while models trained on raw ERA5 alone show $\alpha$ up to $\sim$0.28 for heat events and $\sim$0.34--0.56 for cold events.
Aurora's multi-source pretraining strategy (ERA5 reanalysis + GFS operational analyses + CMIP6 climate simulations) similarly reduces $\varepsilon_{\rm data}$ by exposing the model to multiple data modalities spanning reanalysis, operational forecasts, and climate projections, each with distinct bias characteristics.

The recommendation is to maximize the diversity and coverage of training data before investing in architectural refinement.
For medium-range forecasting, multi-source pretraining on multiple reanalysis products and operational analyses reduces distribution shift.
For extreme event applications, augmenting training data with synthetic extremes or reweighting the loss toward tail events is essential to mitigate the OOD attenuation predicted by Proposition~\ref{thm:ood_bias}.

\item \textbf{Step 3: Select the training methodology} (moderate impact).

Training methodology encompasses the autoregressive rollout strategy, learning rate schedule, fine-tuning protocol, and timestep.
Our scorecard analysis (Fig.~\ref{fig:scorecard}) provides direct evidence that these choices matter: model rankings change across lead-time horizons, and the rank changes correlate with training methodology rather than architecture.
Pangu's hierarchical temporal aggregation inference strategy achieves the highest EES and strong PCS among all models (Tables~\ref{tab:hmas_day3}--\ref{tab:hmas_day15}) through excellent physical consistency, but at the cost of spectral fidelity. This is a consequence of its combined pipeline choices (pressure-level weighting in the MSE loss, 24-hour timestep, and the 3D Earth-Specific Transformer architecture), which together favor balanced, smooth forecasts over high-wavenumber fidelity.
FengWu's 6-hour timestep achieves better short-range accuracy but suffers catastrophic energy drift at extended range (ASI $\approx 0$, Fig.~\ref{fig:error_growth}).
Since Aurora also uses a 6-hour timestep without similar degradation, the instability is not attributable to timestep alone but rather to the interaction between timestep and training methodology (Aurora's LoRA-based rollout fine-tuning appears to mitigate the accumulation of autoregressive errors).
These are training methodology trade-offs, not architecture trade-offs: both Pangu and FengWu use transformer-based architectures, but their different pipeline choices (loss weighting, timestep, rollout strategy) produce fundamentally different forecast ``profiles.''

For iterative medium-range models, autoregressive rollout fine-tuning, where the model is fine-tuned on multi-step forecasts rather than single-step predictions, is critical for controlling error accumulation.
Parameter-efficient fine-tuning (LoRA) can adapt pretrained models to specific forecast tasks without full retraining.
Aurora is a notable example: its rollout fine-tuning stage uses LoRA on all self-attention layers \citep{bodnar2025aurora}, enabling multi-step autoregressive training on a 1.3B parameter model at manageable memory cost. This technique is not yet widely adopted by the other models in our evaluation but is readily transferable.
The choice of timestep represents a stability--fidelity trade-off: shorter timesteps capture sub-daily dynamics but require more autoregressive steps (amplifying accumulated error), while longer timesteps are more stable but miss diurnal and sub-synoptic variability.

\item \textbf{Step 4: Select the architecture} (lowest marginal impact at current scales; \S\ref{sec:approx}).

Architecture selection is deliberately placed last because the theoretical and empirical evidence consistently indicates that it contributes least to total forecast error at current operational scales.
The unified convergence rate $\mathcal{O}(N^{-s/2})$ (Propositions~\ref{prop:sfno_approx}--\ref{prop:gnn_approx}) shows that all major architectures achieve comparable approximation error at $N \sim 10^6$ grid points.
The ECR analysis (Fig.~\ref{fig:error_consensus}) confirms that $>$58\% of forecast error variance is shared across architectures even at day~15---meaning the majority of error is predictability-limited, not architecture-limited.

This does not mean architecture is irrelevant at all scales.
The MED analysis (Fig.~\ref{fig:error_convergence}) reveals that architecture leaves its fingerprint at fine scales ($\ell > 50$), and specific architectural properties become important for secondary criteria: spherical equivariance (SFNO, FCN3) provides built-in rotational consistency, which may contribute to the relatively high PCS scores of spectral models; global attention mechanisms (transformers) enable efficient representation of teleconnections (e.g., ENSO patterns, stratosphere--troposphere coupling); graph-based message passing (GraphCast, AIFS) naturally respects irregular mesh structures and achieves good energy conservation (ASI $\approx 0.84$--$0.86$); and diffusion backbones (Atlas) are required for sampling the full conditional distribution.
The recommendation is to choose architecture based on these secondary properties (stability, equivariance, probabilistic output capability) after the loss function, data strategy, and training methodology have been determined.
\end{enumerate}

\paragraph{Putting it together: task-specific recommendations.}
Table~\ref{tab:recommendations} synthesizes the decision hierarchy into concrete recommendations for four forecast tasks spanning different timescales and objectives.
The ``Primary determinant'' column identifies which pipeline component matters most for each task, based on the empirical evidence from our ten-model evaluation.
Short-range forecasting is primarily limited by the training method (timestep, rollout strategy), as all models have adequate spectral fidelity and data coverage at 1--3 day lead times.
Medium-range forecasting is limited by the loss function and data coverage, as MSE-induced spectral bias becomes the dominant error source by day~5 (Fig.~\ref{fig:spectral_ratio}) and data distribution gaps increasingly affect the tails of the forecast distribution.
Extreme event forecasting is limited by training data coverage and OOD extrapolation bounds (Proposition~\ref{thm:ood_bias}), requiring both data augmentation and loss function modifications (heavy-tailed diffusion).
Climate emulation prioritizes stability over short-range accuracy, as the model must remain on the atmospheric attractor over $\mathcal{O}(10^5)$ autoregressive steps. Here, the training methodology (spectral regularization, energy conservation constraints) and architecture (parsimonious to avoid overfitting) become the primary determinants.

\begin{table*}[t]
\centering
\caption{Holistic pipeline assessment across forecast timescales.}
\label{tab:recommendations}
\scriptsize
\setlength{\tabcolsep}{3pt}
\resizebox{\textwidth}{!}{%
\begin{tabular}{@{}lllllp{2.5cm}@{}}
\toprule
\textbf{Task} & \textbf{Loss function} & \textbf{Data strategy} & \textbf{Training method} & \textbf{Architecture} & \textbf{Primary determinant}\\
\midrule
Short-range (1--3\,d) & MSH or CRPS & ERA5 + oper.\ analyses & AR rollout fine-tune & Any hybrid & Training method\\[2pt]
Medium-range (3--15\,d) & MSH + ensemble & Multi-source & AR rollout + LoRA & Hybrid GNN--transformer & Loss + data\\[2pt]
Extremes & Heavy-tailed score match. & + synthetic extremes & Augmented OOD training & Any + physics constraints & Training data + OOD bounds\\[2pt]
\bottomrule
\end{tabular}%
}
\end{table*}

\section{From Diagnostic to Prescriptive: Multi-Objective Pipeline Optimality}
\label{sec:prescriptive}

The preceding sections have established a comprehensive \emph{diagnostic} framework: given a trained model, our metrics and theorems characterize its strengths and failure modes.
However, the more consequential question for advancing AI paradigms in atmospheric science is \emph{prescriptive}: given a forecast task, can we determine whether a proposed pipeline configuration (a specific $\{\mathcal{L}, \mathcal{D}, \mathcal{T}, \mathcal{A}\}$ tuple) will achieve adequate skill \emph{before} committing to training?
We argue that the mathematical machinery developed in this paper contains the essential ingredients for such a pre-training suitability theory, and we formalize the first steps here.

\subsection{From Single-Objective to Multi-Objective Optimality}

\citet{beucler2025pareto} recently proposed that Pareto-optimal model hierarchies, defined within an error-complexity plane, can guide the development of data-driven atmospheric models and distill the added value of machine learning.
Their framework operates in a two-dimensional space: a single error metric (e.g., MSE) on one axis and model complexity (typically parameter count) on the other.
The Pareto front identifies models achieving the minimum error for each level of complexity.

Our HMAS analysis (\S\ref{sec:holistic}) demonstrates that this two-dimensional formulation needs to be extended for weather prediction systems.
Forecast quality is inherently multi-dimensional (at minimum six-dimensional in the HMAS metric space) and the dimensions are not redundant.
The cross-correlation matrix (Fig.~\ref{fig:hmas_correlation}) reveals that SFI and PCS are anti-correlated at $\rho = -0.86$: no single pipeline configuration \emph{among those evaluated here} can simultaneously maximize spectral fidelity and physical consistency.
Whether this trade-off is fundamental or merely reflects the current state of the art is an open question.
Approaches such as physics-informed loss terms that explicitly penalize balance violations, multi-objective reinforcement learning, or hybrid architectures that combine generative sampling with dynamical constraints could potentially reduce this anti-correlation in future systems.
This anti-correlation is not a measurement artifact but a fundamental consequence of the loss function taxonomy (Table~\ref{tab:loss_taxonomy}): MSE-based losses sacrifice spectral fidelity for inter-variable balance (high PCS, low SFI), while score matching preserves the marginal energy spectrum of each field but degrades the joint dynamical relationships across fields (high SFI, low PCS).

We therefore extend the Pareto front concept to a \emph{multi-objective} setting where the ``complexity'' axis is replaced by the pipeline configuration space.

\begin{definition}[Pipeline Pareto Surface]
\label{def:pareto_surface}
Let $\mathbf{m}(\mathbf{p}) = (\text{SFI}, \text{RMSE}^{-1}, \text{PCS}, \text{EES}, \text{ACC}, \text{ASI})$ be the metric vector achieved by pipeline configuration $\mathbf{p} = (\mathcal{L}, \mathcal{D}, \mathcal{T}, \mathcal{A})$ at a given lead time $\tau$.
A configuration $\mathbf{p}^*$ is \emph{Pareto-optimal} if there exists no $\mathbf{p}'$ such that $m_i(\mathbf{p}') \geq m_i(\mathbf{p}^*)$ for all $i$ with strict inequality for at least one $i$.
The set of all Pareto-optimal configurations forms the \emph{Pipeline Pareto Surface} $\mathcal{P}(\tau)$.
\end{definition}

Unlike the single-objective Pareto front of \citet{beucler2025pareto}, the Pipeline Pareto Surface is a multi-dimensional manifold: movement along the surface represents trade-offs between competing forecast quality dimensions, while movement \emph{toward} the surface represents genuine improvement.
The practical implication is that a pipeline designer must first decide which region of the Pareto surface they are targeting (spectral fidelity for downstream convective modeling, physical consistency for coupled Earth system applications, or extreme event skill for hazard warning) before the framework can recommend a pipeline.

\subsection{Pre-Computable Bounds on the Pareto Surface}

A central advantage of our theoretical framework is that several quantities bounding the Pareto surface are computable from the atmosphere alone, without training any model.
These define an \emph{outer envelope} $\mathcal{E}(\tau)$ such that $\mathcal{P}(\tau) \subseteq \mathcal{E}(\tau)$.

\paragraph{(1) Atmospheric predictability ceiling.}
The mutual information decay (Theorem~\ref{thm:info_decay}) establishes an absolute upper bound on forecast skill at each lead time $\tau$.
The KS entropy $h_{\rm KS} = \sum_{\lambda_i > 0} \lambda_i$ is a property of the atmosphere and can be estimated from ERA5 via the effective Lyapunov exponents (Fig.~\ref{fig:error_growth}).
No pipeline configuration, regardless of loss, data, or architecture, can exceed this ceiling: $\text{ACC}(\tau) \leq \text{ACC}_{\max}(\tau)$ where $\text{ACC}_{\max}$ decays at rate $h_{\rm KS}$.

\paragraph{(2) Loss-induced spectral budget.}
Theorem~\ref{thm:mse_bias} provides the exact spectral energy that \emph{any} MSE-trained model will achieve at each wavenumber and lead time: $\hat{E}_{\rm MSE}(\ell, \tau) = E_{\rm true}(\ell) - \mathrm{Var}_\ell(\tau)$.
The conditional variance $\mathrm{Var}_\ell(\tau)$ is a property of the atmospheric dynamics, estimable from ERA5 by computing the spread among analysis states originating from similar initial conditions.
This means that for a given loss function choice $\mathcal{L}$, the achievable SFI is predictable before training:
\begin{equation}
\text{SFI}_{\rm predicted}(\mathcal{L}, \tau) = 1 - \frac{1}{2|\mathcal{K}|}\sum_{\ell \in \mathcal{K}} \bigl|\log_{10}\bigl(R_\mathcal{L}(\ell,\tau)\bigr)\bigr|,
\label{eq:predicted_sfi}
\end{equation}
where $R_{\rm MSE}(\ell,\tau) = 1 - \mathrm{Var}_\ell(\tau)/E_{\rm true}(\ell)$, $R_{\rm CRPS}(\ell,\tau) \approx 1$, and $R_{\rm score}(\ell,\tau) \approx 1 + \sigma_{\rm sample}^2(\ell)/E_{\rm true}(\ell)$ for score-matching losses (where $\sigma_{\rm sample}^2$ is the sampling noise variance).
This provides a pre-training prediction of whether a proposed loss function will satisfy a spectral fidelity requirement for the target application.

\paragraph{(3) Data coverage and OOD bound.}
Proposition~\ref{thm:ood_bias} predicts the extreme-event bias for any model trained on a dataset $\mathcal{D}$ with climatological maximum $u_{\rm max}$: the bias grows linearly as $-\alpha\,\delta$ for exceedances $\delta > 0$.
The attenuation coefficient $\alpha$ depends on the training distribution's tail behavior, which is computable from $\mathcal{D}$ without training.
Given a target extreme-event scenario (e.g., a 3$\sigma$ exceedance), one can predict the minimum EES achievable under a given $\{\mathcal{L}, \mathcal{D}\}$ pair.

\subsection{The Spectral Feasibility Score}

Combining these pre-computable quantities, we define a composite measure of pipeline suitability.

\begin{definition}[Spectral Feasibility Score]
\label{def:feasibility}
For a proposed pipeline configuration $\mathbf{p} = (\mathcal{L}, \mathcal{D}, \mathcal{T}, \mathcal{A})$ targeting lead time $\tau$, the Spectral Feasibility Score is:
\begin{multline}
\text{SFS}(\mathbf{p}, \tau) = w_1\, \text{SFI}_{\rm predicted}(\mathcal{L}, \tau) + w_2\, C_{\rm data}(\mathcal{D}) \\
+ w_3\, \bigl(1 - h_{\rm KS}\,\tau / I_0\bigr)^+,
\label{eq:sfs}
\end{multline}
where $C_{\rm data}(\mathcal{D})$ is the data coverage score (fraction of the operational atmospheric distribution covered by the training set, estimated via tail coverage and distributional divergence), and $(x)^+ = \max(0,x)$.
\end{definition}

The SFS provides a pre-training lower bound on achievable forecast quality.
A low SFS indicates that the proposed pipeline faces fundamental barriers (e.g., an MSE loss targeting high SFI at long lead times, or a narrow training set targeting extreme events) and should be redesigned before committing computational resources to training.
A high SFS is necessary but not sufficient for good performance: it confirms that the pipeline does not violate any theoretical constraint, but the actual skill depends additionally on the optimization procedure ($\varepsilon_{\rm opt}$) and the training methodology ($\varepsilon_{\rm train}$), which are not yet bounded analytically.

\subsection{Empirical Evidence: The Ten Models as Pareto Surface Samples}

Our ten-model evaluation provides empirical evidence that the Pipeline Pareto Surface exists and that the models trace distinct regions of it.
In the (SFI, PCS) plane at day~5, the projection that captures the strongest trade-off ($\rho = -0.86$), the models arrange along a clear frontier:

(i) Atlas (score matching) occupies the high-SFI, low-PCS corner (SFI $\approx 0.94$, PCS $\approx 0.51$), consistent with its loss function preserving spectral energy at the expense of inter-variable balance.

(ii) Pangu (MSE, pressure-weighted) occupies the low-SFI, high-PCS corner (SFI $\approx 0.66$, PCS $\approx 0.63$), consistent with MSE optimization enforcing smooth, balanced forecasts.

(iii) The remaining MSE-trained models (Aurora, GraphCast, AIFS, FuXi, FengWu, SFNO) cluster in the intermediate region, with their positions determined primarily by loss function weighting details and training methodology rather than architecture.

(iv) The CRPS-trained models (FCN3, AIFS-ENS) achieve high SFI ($> 0.94$) with moderate PCS ($0.47$--$0.54$), occupying a distinct region of the surface that is Pareto-dominated by neither MSE nor score-matching models on all dimensions simultaneously.

Critically, no model dominates all six HMAS dimensions, confirming that the Pipeline Pareto Surface is genuinely multi-dimensional and that trade-offs are inherent to the current state of the art.
The positions of the models on this surface are predictable from the loss taxonomy (Table~\ref{tab:loss_taxonomy}) and data strategy, consistent with the pipeline-dominance thesis.

\subsection{Toward Prescriptive Pipeline Design: Open Problems}

The SFS framework as formulated above bounds $\varepsilon_{\rm loss}$ (via Theorem~\ref{thm:mse_bias}) and $\varepsilon_{\rm data}$ (via Proposition~\ref{thm:ood_bias}) but leaves two components of the pipeline decomposition (Proposition~\ref{prop:dominance}) without tight pre-training bounds:

\paragraph{Open Problem 1: Quantifying $\varepsilon_{\rm train}$.}
The training methodology term (encompassing autoregressive rollout strategy, timestep, learning rate schedule, and fine-tuning protocol) is currently characterized only empirically.
A pre-training bound would require analyzing the error accumulation in $K$-step autoregressive rollout as a function of the single-step error and the spectral properties of the dynamics.
Our ASI diagnostic (Fig.~\ref{fig:error_growth}) and the catastrophic energy drift observed in FengWu and FuXi suggest that autoregressive stability is predictable from the loss function's spectral properties and the rollout length, but the formal connection remains to be established.

\paragraph{Open Problem 2: Tightening the data coverage bound.}
Proposition~\ref{thm:ood_bias} provides a tail-specific bound, but a general bound connecting training data distributional coverage (including the diversity of data sources, temporal coverage, and representation of rare flow regimes) to forecast skill across the full atmospheric distribution requires characterizing the divergence between $\mathcal{D}$ and the operational deployment distribution in a metric compatible with forecast error.
Wasserstein distance in a reduced state space (e.g., the first few principal components of the atmospheric state) is a natural candidate.

\paragraph{Open Problem 3: Physics-constrained multi-objective loss design.}
The SFI--PCS anti-correlation ($\rho = -0.86$) suggests that no single existing loss function achieves both spectral fidelity and physical consistency.
A natural next step is \emph{multi-objective loss design} that explicitly includes balance constraints (geostrophic, thermal wind, hydrostatic) as regularization terms alongside the primary loss.
The PCS framework (\S\ref{sec:physical_consistency}) provides the diagnostic targets; the open problem is to formulate differentiable loss terms that enforce these constraints without sacrificing spectral energy.

Closing these gaps would transform the SFS from a feasibility check into a quantitative predictor of forecast skill, enabling the atmospheric science community to navigate the pipeline design space systematically rather than through expensive trial and error.

\paragraph{Open Problem 4: Initial condition sensitivity.}
Our evaluation uses ERA5 initial conditions exclusively, but operational deployment involves initial conditions from diverse sources (operational analyses, ensemble perturbations, reduced-resolution analyses) that may interact differently with each model's learned dynamics.
Sensitivity to initial condition perturbations is a practically important diagnostic that our framework does not yet address: a model with low RMSE under ERA5 initialization may degrade disproportionately under perturbed or degraded initial conditions if its training has overfit to ERA5-specific features.
Preliminary work in our laboratory suggests that autoregressive fine-tuning (as in Aurora's LoRA approach) can alter IC sensitivity, and that ensemble spread behavior under perturbed ICs varies across architectures.
Formalizing IC sensitivity as a seventh HMAS dimension, or as a separate robustness diagnostic, is a natural extension.

\section{Experimental Design}
\label{sec:experimental}

All empirical results are produced through systematic inference using NVIDIA Earth2Studio \citep{e2studio2024} (v0.12+).

\paragraph{Models.}
Ten models spanning five architecture families: AIFS, AIFS-ENS, GraphCast (GNNs); Atlas (diffusion transformer); Aurora, FengWu \citep{chen2023fengwu} (vision transformers); FCN3 \citep{bonev2025fcn3}, SFNO (spherical Fourier operators); FuXi (U-Transformer); Pangu-Weather (3D Earth Transformer).
All use pretrained weights from Earth2Studio's model registry.

\paragraph{Note on diffusion model representation.}
Atlas is the sole diffusion-based model in our evaluation.
While this limits our ability to assess the generality of diffusion model behavior (e.g., whether the spectral energy preservation and PCS degradation patterns are inherent to diffusion or specific to Atlas's implementation), it reflects the current availability of pretrained diffusion weather models in Earth2Studio.
GenCast \citep{price2023gencast} was not available in the Earth2Studio registry at the time of this study.
Conclusions about diffusion models should therefore be interpreted as Atlas-specific until validated across additional diffusion architectures.

\paragraph{Initial conditions.}
ERA5 reanalysis at 0.25$^\circ$ resolution via CDS-Beta API.
Thirty initialization dates spanning all four seasons: 8 DJF (including the 2021 Texas freeze and 2022 Winter Storm Elliott), 7 MAM, 8 JJA (including the 2021 PNW heatwave and 2023 European heatwave), and 7 SON.
This sampling captures diverse flow regimes (blocked vs.\ zonal, ENSO phases) and spans 2021--2024.
All initialization dates postdate the training period cutoff for all ten models (the latest training cutoff is approximately 2021 for Aurora; most other models use ERA5 through 2017--2019), ensuring out-of-sample verification.
Four extreme events are embedded within the 30-date set rather than treated separately, so they contribute to all standard metrics as well as the dedicated tail-fidelity analysis.

\paragraph{Verification data.}
ERA5 analysis fields for Z500, T2M, U500, V500, T850, Q700.
WeatherBench2-compatible \citep{rasp2024weatherbench2} ERA5 climatology (1990--2020, daily $\times$ 6-hourly resolution) for anomaly correlation and extreme event threshold computation (15-day DOY window, 0.5\,K floor on $\sigma_{\rm clim}$).

\paragraph{Statistical methodology.}
All metrics are averaged over the 30 initialization dates, with 90\% confidence intervals computed as inter-initialization-date standard errors: $\bar{x} \pm 1.645 \cdot s/\sqrt{n}$ (\S\ref{subsec:bootstrap}).

\paragraph{Native stochastic models.}
AIFS-ENS (afCRPS-trained) and FCN3 (spatial+spectral CRPS-trained) are natively stochastic: they generate ensemble spread through learned noise injection during inference.
We run both with zero IC perturbation ($\delta\bm{u}_0 = 0$) and evaluate a single realization, as applying IC perturbation would double-source uncertainty and break calibration.
All metrics therefore evaluate their single-member deterministic behavior, not their ensemble skill.

\section{Limitations and Open Questions}
\label{sec:limitations}

\begin{enumerate}[leftmargin=*,itemsep=2pt]
\item \textbf{Linear approximations.}
The spectral transfer framework (\S\ref{sec:spectral}) linearizes the dynamics, missing nonlinear scale interactions.
Proposition~\ref{thm:ood_bias} uses a first-order Taylor expansion.
Both are accurate for small perturbations but may underestimate errors for strongly nonlinear events (e.g., rapid cyclogenesis).

\item \textbf{Spherical harmonic vs.\ FFT mismatch.}
As discussed in \S\ref{subsec:sh_fft}, our theoretical framework uses spherical harmonics while the empirical computation uses latitude-weighted FFT.
The correspondence is approximate ($\mathcal{O}(\ell^{-1})$ correction) and most accurate for high wavenumbers and spectral ratios.
Future work should implement full SHT-based diagnostics for exact validation.

\item \textbf{Single diffusion model.}
Atlas is the sole diffusion model in our evaluation.
Diffusion-specific findings (spectral energy preservation, PCS degradation) should be verified with additional diffusion weather models (e.g., GenCast) when available.

\item \textbf{Non-stationarity.}
The Lyapunov exponents and KS entropy are defined for a stationary attractor.
Under climate change, the attractor shifts and the framework needs extension.
\end{enumerate}

\section{Conclusions}
\label{sec:conclusions}

We have constructed a holistic mathematical framework for understanding AI weather prediction skill, grounded in approximation theory on the sphere \citep{freeden1998constructive}, dynamical systems theory \citep{katok1995modern}, information theory \citep{cover2006elements}, and statistical learning theory \citep{anthony1999neural}, and validated it empirically through systematic inference across ten architecturally diverse AI weather models.

The theoretical contributions include: (i)~a scale-resolved formalization of MSE-induced spectral bias in spherical harmonic coordinates (Theorem~\ref{thm:mse_bias}), explaining the universal blurring observed across all MSE-trained architectures; (ii)~a learning pipeline error decomposition (Proposition~\ref{prop:dominance}) with inter-initialization-date uncertainty quantification across 30 dates (\S\ref{subsec:bootstrap}); (iii)~out-of-distribution bounds (Proposition~\ref{thm:ood_bias}) predicting linear bias growth with record exceedance; and (iv)~an Error Consensus Ratio formalization (Proposition~\ref{prop:error_consensus}) for testing whether forecast errors are predictability-limited.

The empirical contributions confirm all major theoretical predictions:
spectral energy loss at high wavenumbers across all MSE-trained architectures (Fig.~\ref{fig:spectral_comparison});
ECR reaching $\approx 0.60$ at day~5 and remaining above $0.58$ at day~15 (Fig.~\ref{fig:error_consensus});
linear tail attenuation (Fig.~\ref{fig:tail_fidelity});
rank instability in the scorecard (Fig.~\ref{fig:scorecard}).
Additional empirical contributions include: (v)~a four-component Physical Consistency Score (Fig.~\ref{fig:physical_consistency}) revealing the spectral fidelity--physical consistency trade-off; (vi)~latitude-resolved RMSE decomposition (Fig.~\ref{fig:regional_rmse}) revealing polar error amplification; (vii)~HMAS weight sensitivity analysis with Kendall's $W = 0.97$ (Fig.~\ref{fig:hmas_sensitivity}); and (viii)~HMAS cross-correlation matrix confirming metric independence (Fig.~\ref{fig:hmas_correlation}).

The central message is that \textbf{the learning pipeline (loss function, data strategy, and training methodology) determines AI weather forecast skill at least as much as the choice of neural network architecture}.
Architecture remains important for specific capabilities, but at current operational scales, the primary path to improved forecasts runs through the loss function and data.

Beyond diagnosis, the framework takes a first step toward \emph{prescriptive} pipeline design (\S\ref{sec:prescriptive}).
By extending the Pareto-optimal model hierarchies of \citet{beucler2025pareto} to a multi-objective Pipeline Pareto Surface in HMAS metric space, and combining pre-computable atmospheric quantities (the conditional spectral variance $\mathrm{Var}_\ell(\tau)$ from Theorem~\ref{thm:mse_bias}, the OOD attenuation bounds from Proposition~\ref{thm:ood_bias}, and the information-theoretic predictability ceiling from Theorem~\ref{thm:info_decay}) into a Spectral Feasibility Score, we provide a mathematical basis for evaluating proposed pipelines before committing to training.
The remaining open problems, quantitative bounds on $\varepsilon_{\rm train}$ and a general data coverage metric, represent natural next steps toward a complete pre-training suitability theory for AI atmospheric prediction.

\paragraph{Reproducibility.}
All inference was performed using NVIDIA Earth2Studio (v0.12.1) between January and March 2026, with publicly available pretrained model weights and ERA5 initial conditions from the CDS-Beta API (accessed on February $20^{th}$, 2026).
Model weight versions correspond to the Earth2Studio model registry as of January 2026.
The analysis code computes all metrics and generates all figures from the forecast NetCDF files and will be available at the time of journal publication. 

\paragraph{Acknowledgments.}
We thank the NVIDIA Earth2Studio team for the inference framework, ECMWF for ERA5 data access, and the developers of all ten model architectures for making pretrained weights publicly available. We also thank the RWE Commercial AI Lab and RWE Supply and Trading for providing computational resources and institutional support for this analysis.

\paragraph{Author contributions.}
P.G.\ conceived the study, designed the mathematical framework, developed the analysis code, performed all model inference and empirical evaluation, and wrote the manuscript.
D.R.G., A.E.S., and G.J.Y.\ reviewed and edited the manuscript, provided critical feedback on the theoretical framing and empirical methodology, and contributed to the interpretation of results.


\FloatBarrier


\clearpage
\appendix
\section*{Supplementary Material}
\label{sec:supplementary}

The following supplementary figures complement the main text with additional variables, events, and forecast horizons.

\begin{table*}[p]
\centering
\caption{[Supplementary] Training datasets and time periods for the ten AI weather models evaluated in this study.
All models use ERA5 reanalysis as the primary training dataset; several incorporate additional data sources for pretraining or fine-tuning.
The ``Test cutoff'' column indicates the earliest year that can be used for out-of-sample evaluation.
Our 30 initialization dates (2021--2024) postdate all training cutoffs.}
\label{tab:training_data}
\scriptsize
\setlength{\tabcolsep}{3pt}
\resizebox{\textwidth}{!}{%
\begin{tabular}{@{}llllll@{}}
\toprule
\textbf{Model} & \textbf{Primary data} & \textbf{Training period} & \textbf{Additional data} & \textbf{Fine-tuning} & \textbf{Test cutoff} \\
\midrule
AIFS & ERA5 & $\sim$1979--2018 & --- & ECMWF oper.\ analyses (2019--2020) & 2021 \\
AIFS-ENS & ERA5 & $\sim$1979--2018 & --- & ECMWF oper.\ analyses (2019--2020) & 2021 \\
Atlas & ERA5 & 1980--2019 & --- & --- & 2020 \\
Aurora & ERA5 & 1979--2020 & GFS analyses, CMIP6, IFS-HR, HRES & LoRA rollout fine-tune (HRES-T0, 2016--2021) & 2022 \\
FCN3 & ERA5 & 1980--2016 & --- & --- & 2020 \\
FengWu & ERA5 & 1979--2017 & --- & --- & 2018 \\
FuXi & ERA5 & 1979--2015 & --- & --- & 2018 \\
GraphCast & ERA5 & 1979--2017 & --- & --- & 2018 \\
Pangu & ERA5 & 1979--2017 & --- & --- & 2018 \\
SFNO & ERA5 & 1979--2015 & --- & --- & 2018 \\
\bottomrule
\end{tabular}%
}
\end{table*}

\begin{figure*}[p]
\centering
\includegraphics[width=\textwidth]{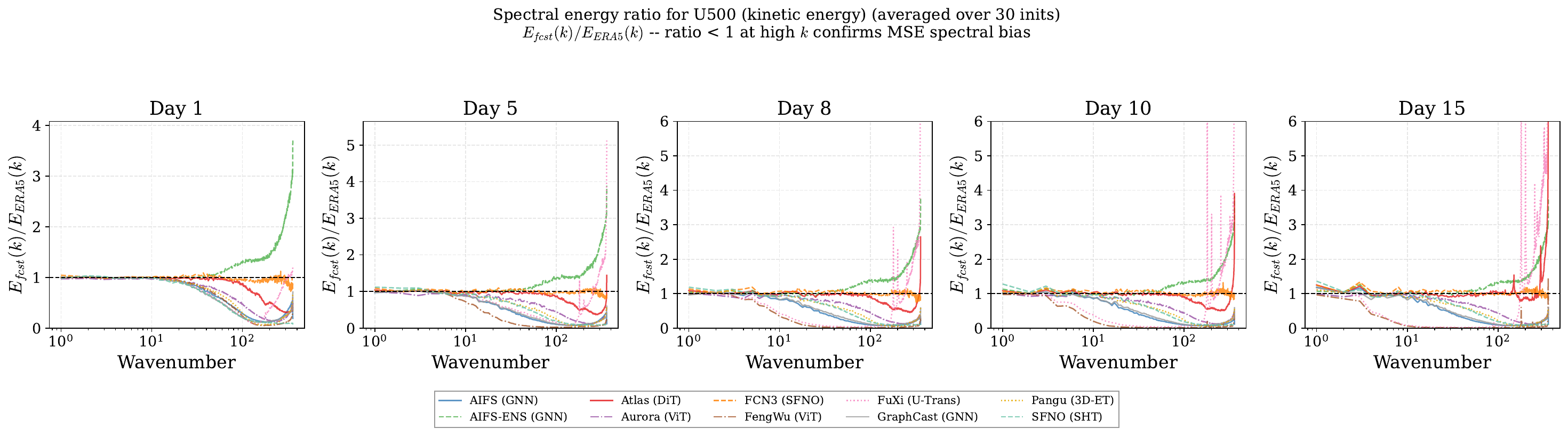}
\caption{[Supplementary] Spectral energy ratio for U500 at day~1 through day~15.
Same qualitative pattern as Z500: universal deficit at high $k$ for MSE-trained models.}
\label{fig:spectral_ratio_u500}
\end{figure*}

\begin{figure*}[p]
\centering
\includegraphics[width=\textwidth]{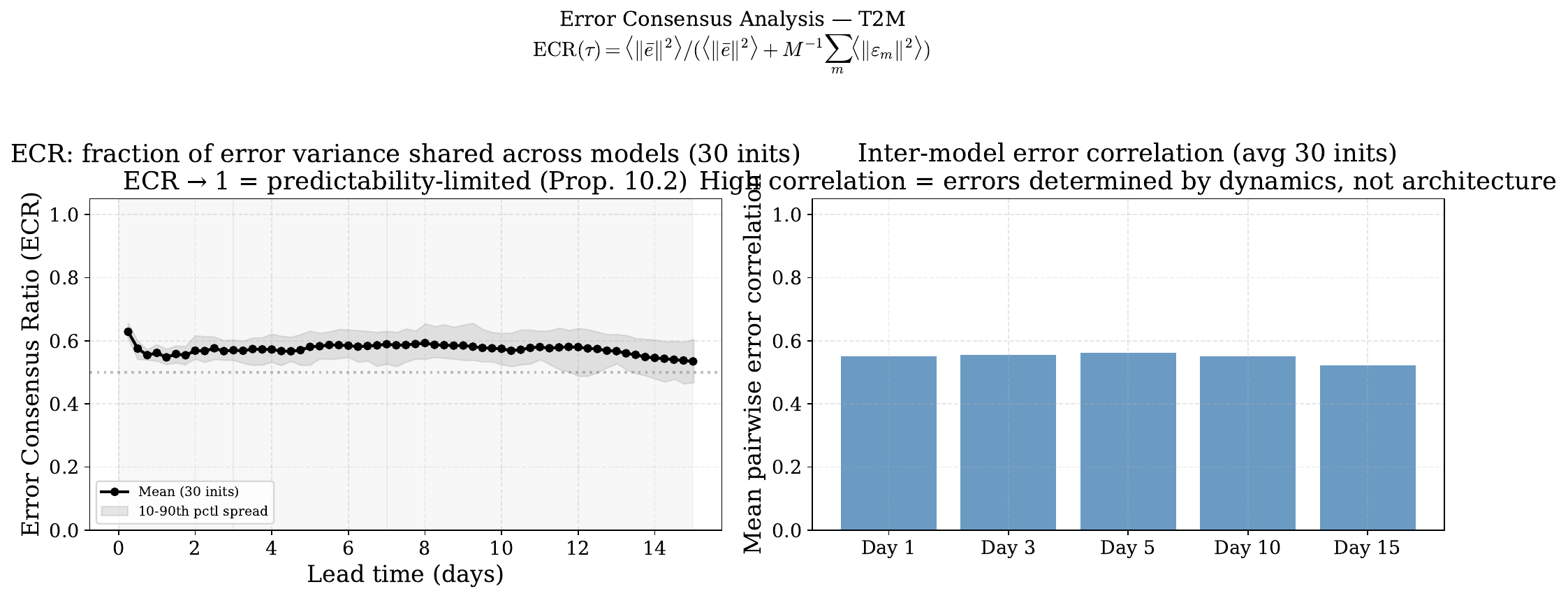}
\caption{[Supplementary] Error Consensus Analysis for T2M.
ECR starts at $\approx 0.63$ at short range and remains above $0.53$ through day~15.}
\label{fig:ecr_t2m}
\end{figure*}

\begin{figure*}[p]
\centering
\includegraphics[width=\textwidth]{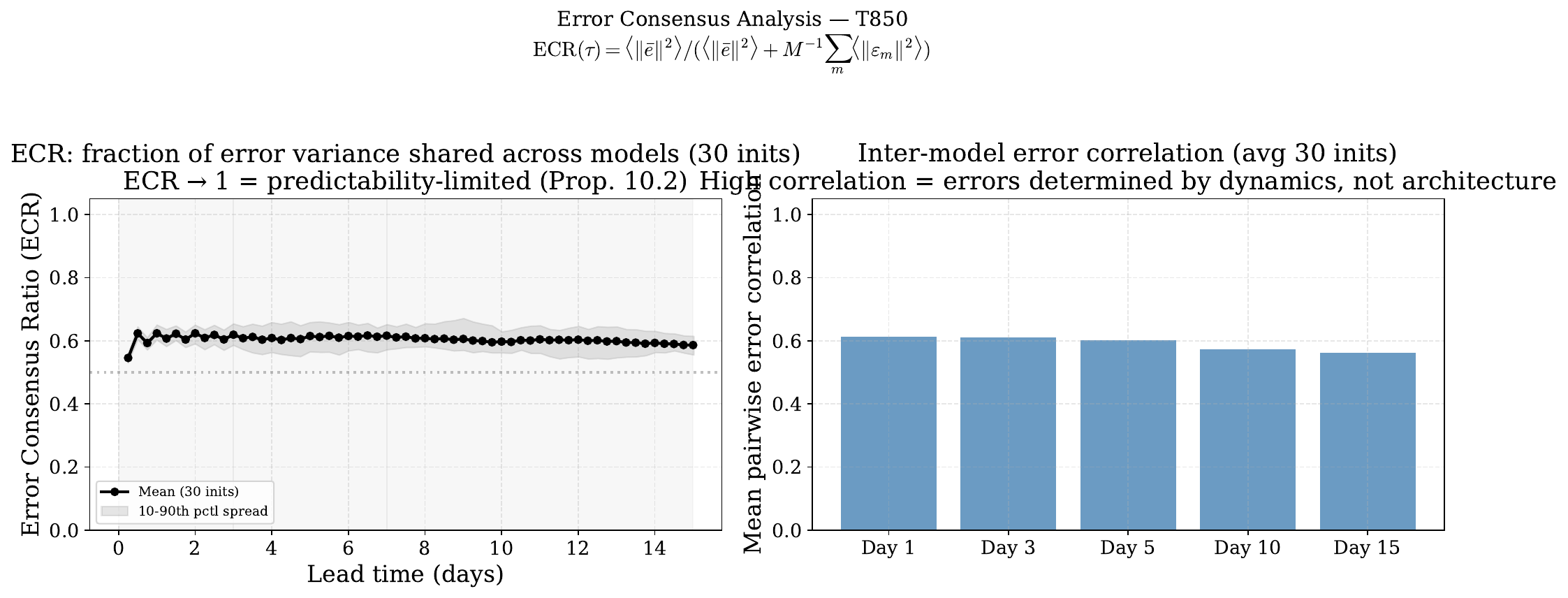}
\caption{[Supplementary] Error Consensus Analysis for T850.}
\label{fig:ecr_t850}
\end{figure*}

\begin{figure*}[p]
\centering
\includegraphics[width=\textwidth]{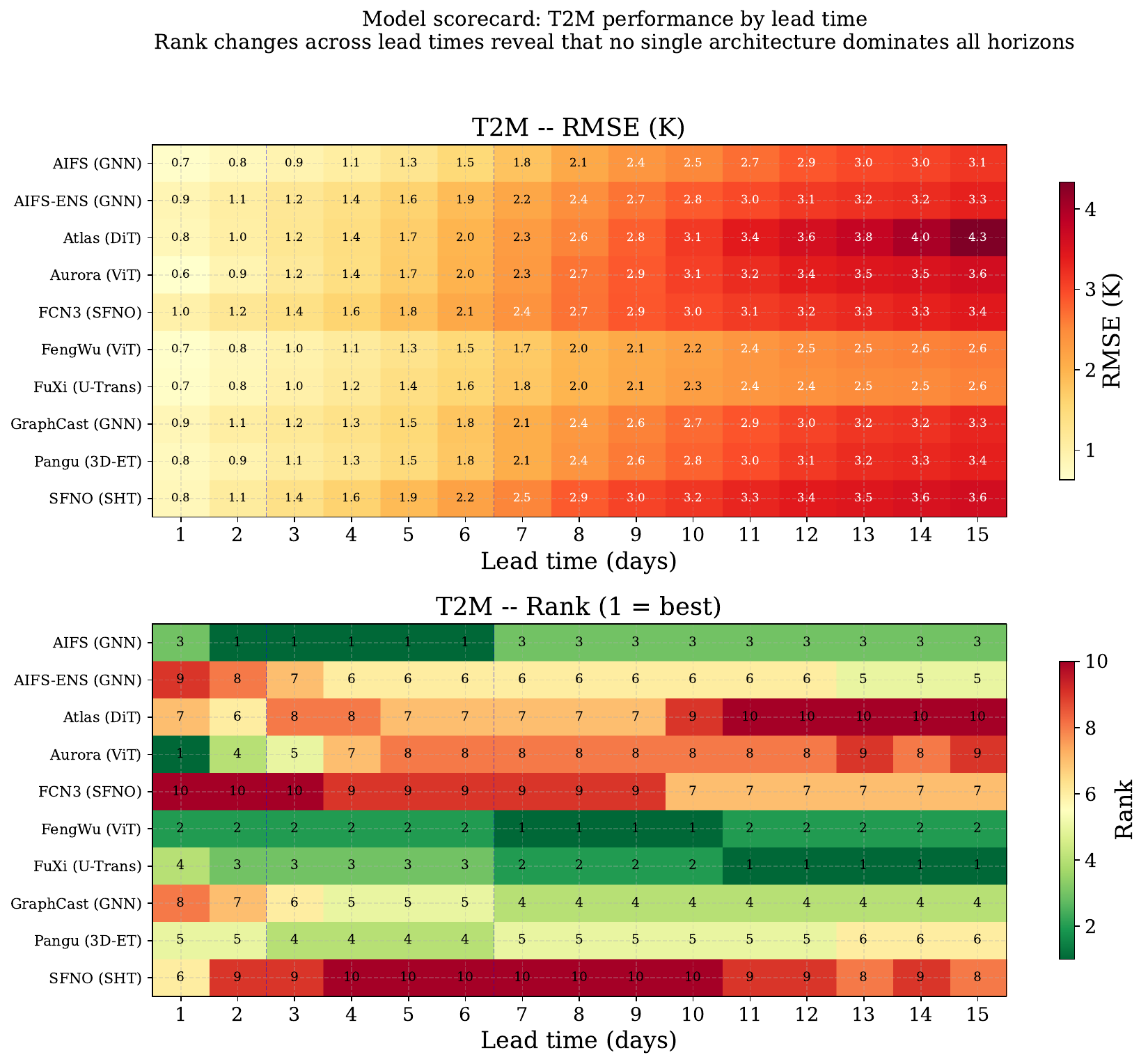}
\caption{[Supplementary] Model scorecard for T2M.}
\label{fig:scorecard_t2m}
\end{figure*}

\begin{figure*}[p]
\centering
\includegraphics[width=\textwidth]{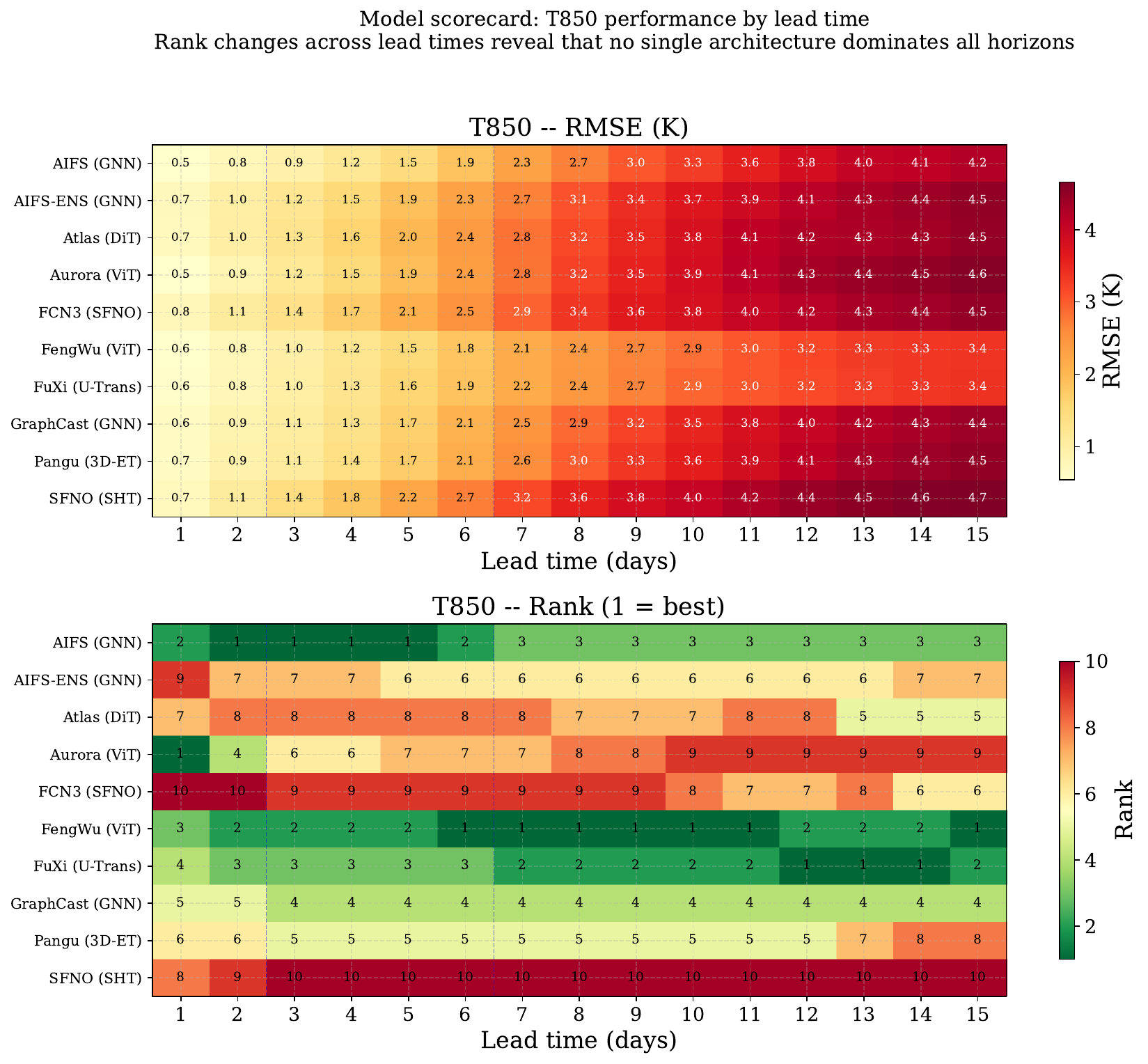}
\caption{[Supplementary] Model scorecard for T850.}
\label{fig:scorecard_t850}
\end{figure*}

\begin{figure*}[p]
\centering
\includegraphics[width=\textwidth]{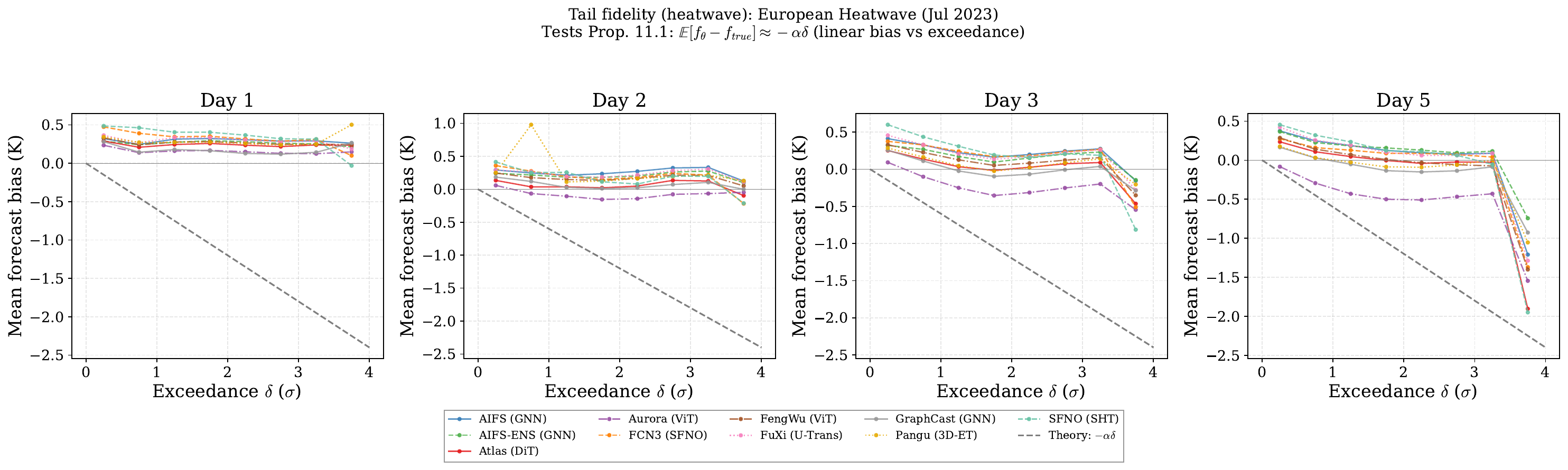}
\caption{[Supplementary] Tail fidelity for the 2023 European heatwave.}
\label{fig:tail_european}
\end{figure*}

\begin{figure*}[p]
\centering
\includegraphics[width=\textwidth]{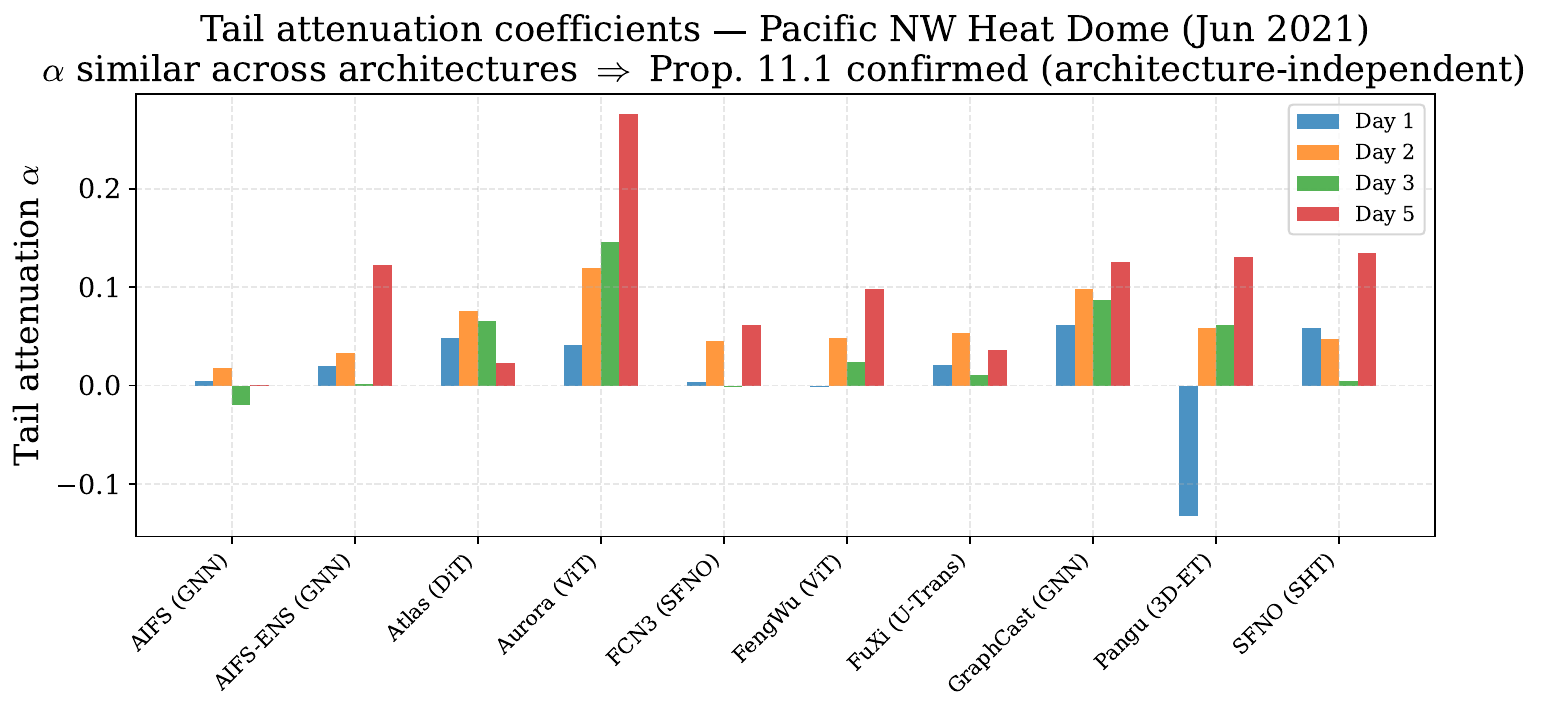}
\caption{[Supplementary] Tail attenuation coefficients $\alpha$ for the PNW heatwave.}
\label{fig:alpha_pnw}
\end{figure*}

\begin{figure*}[p]
\centering
\includegraphics[width=\textwidth]{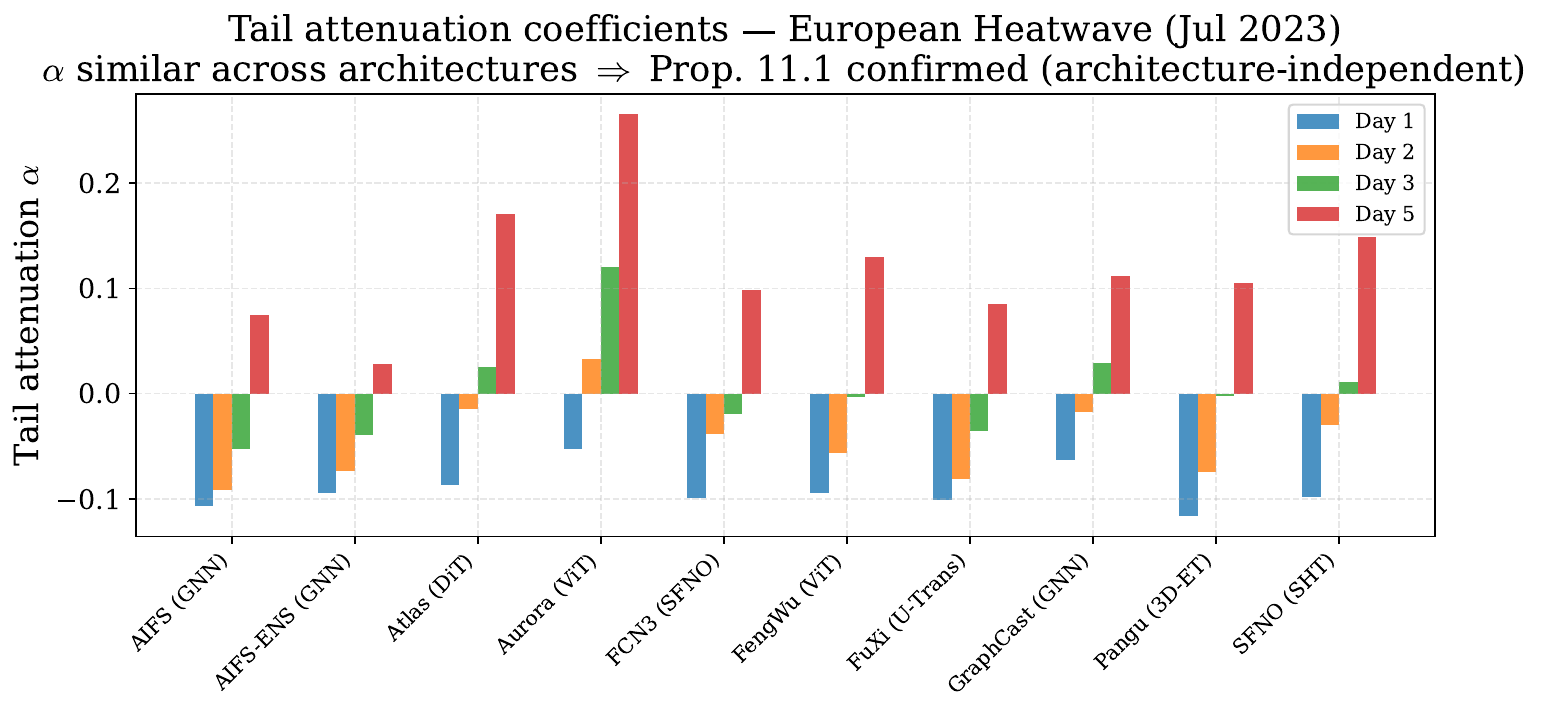}
\caption{[Supplementary] Tail attenuation coefficients $\alpha$ for the 2023 European heatwave.}
\label{fig:alpha_european}
\end{figure*}

\begin{figure*}[p]
\centering
\includegraphics[width=\textwidth]{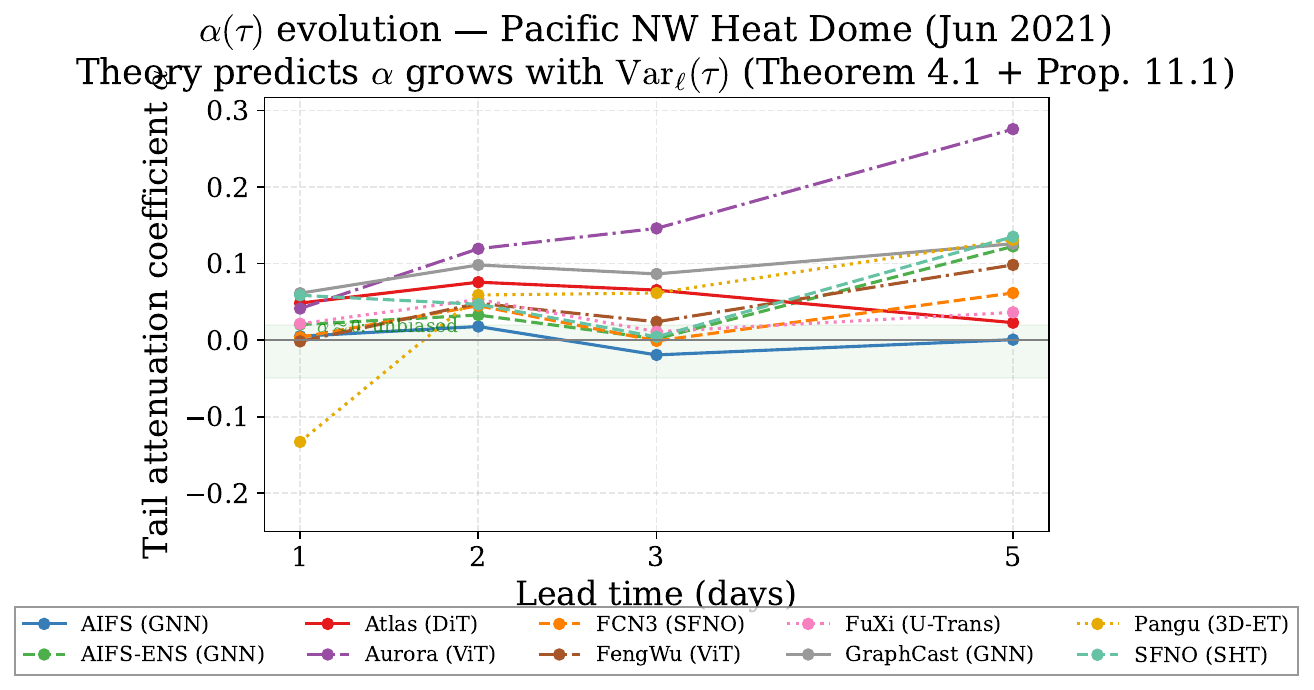}
\caption{[Supplementary] $\alpha$ coefficient evolution with lead time for the PNW heatwave, with $R^2$ values quantifying the goodness of fit of the linear bias--exceedance relationship.}
\label{fig:alpha_evolution_pnw}
\end{figure*}

\begin{figure*}[p]
\centering
\includegraphics[width=\textwidth]{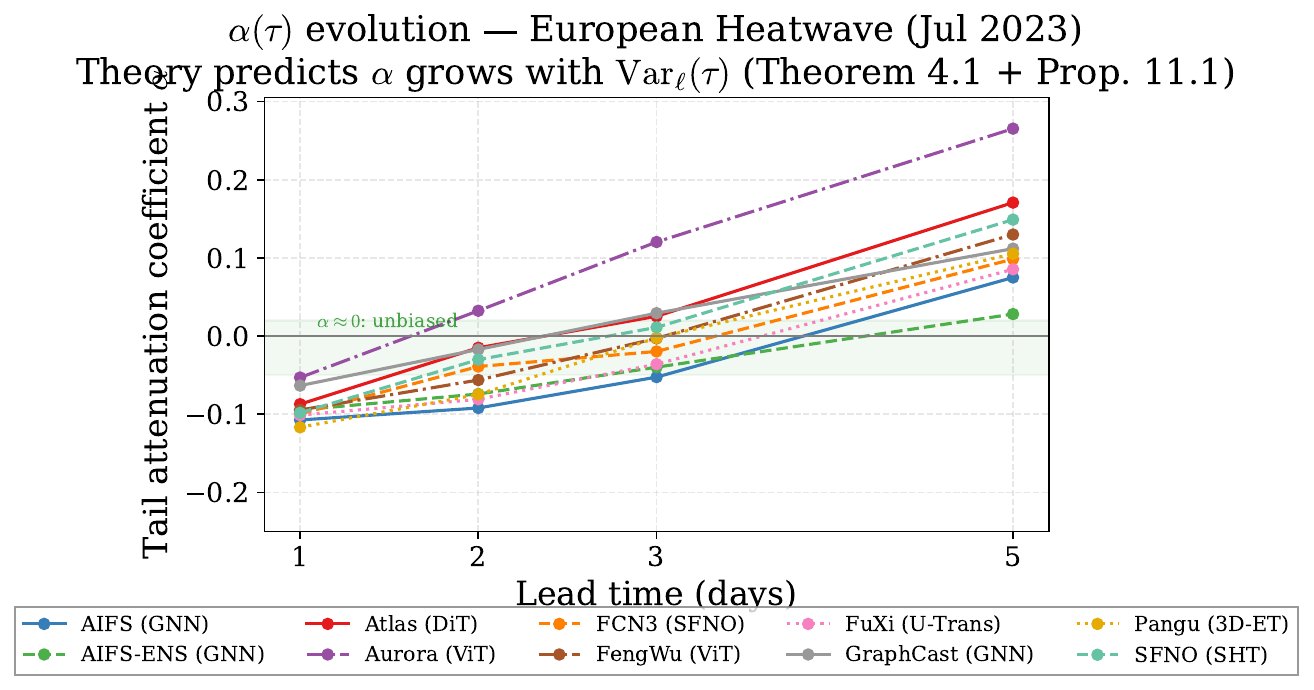}
\caption{[Supplementary] $\alpha$ coefficient evolution for the 2023 European heatwave.}
\label{fig:alpha_evolution_european}
\end{figure*}

\begin{figure*}[p]
\centering
\includegraphics[width=\textwidth]{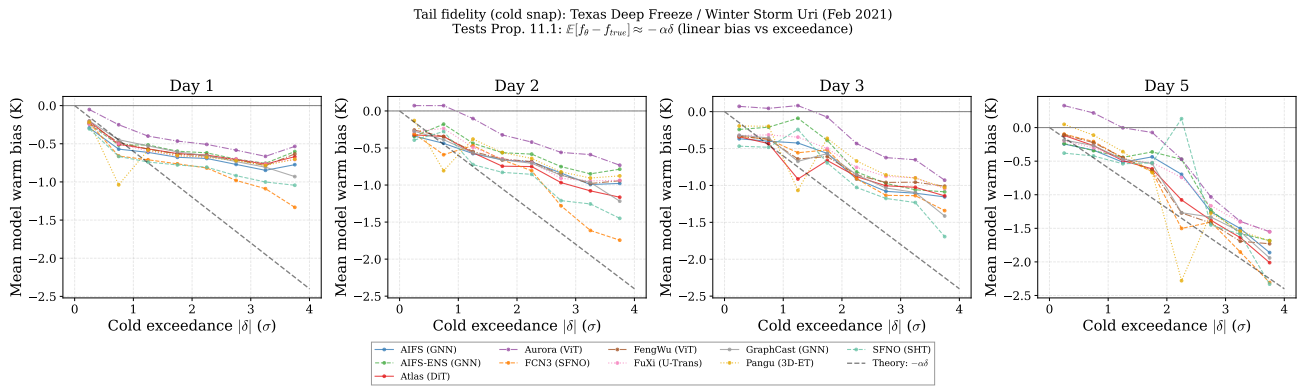}
\caption{[Supplementary] Tail fidelity for the February 2021 Texas freeze.
Cold extreme attenuation coefficients $\alpha \approx 0.34$--$0.56$ at day~5 are substantially larger than for heat events, indicating stronger OOD bias for cold extremes.}
\label{fig:tail_texas}
\end{figure*}

\begin{figure*}[p]
\centering
\includegraphics[width=\textwidth]{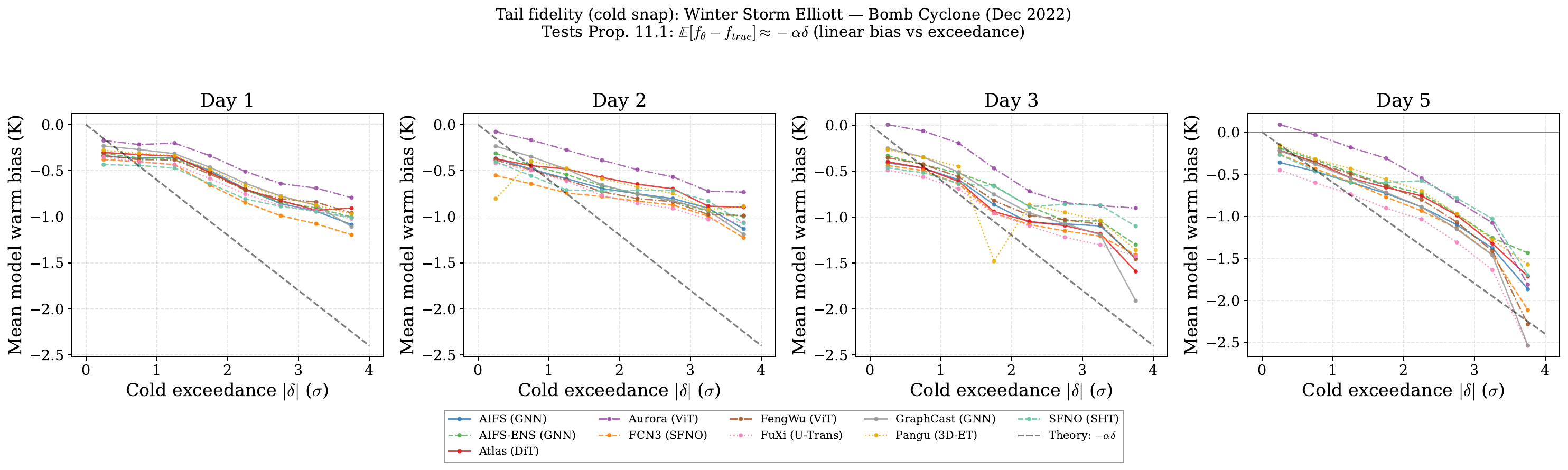}
\caption{[Supplementary] Tail fidelity for Winter Storm Elliott (December 2022).
Similar cold-extreme attenuation pattern to the Texas freeze, with $\alpha \approx 0.34$--$0.56$ at day~5.}
\label{fig:tail_elliott}
\end{figure*}

\begin{figure*}[p]
\centering
\includegraphics[width=\textwidth]{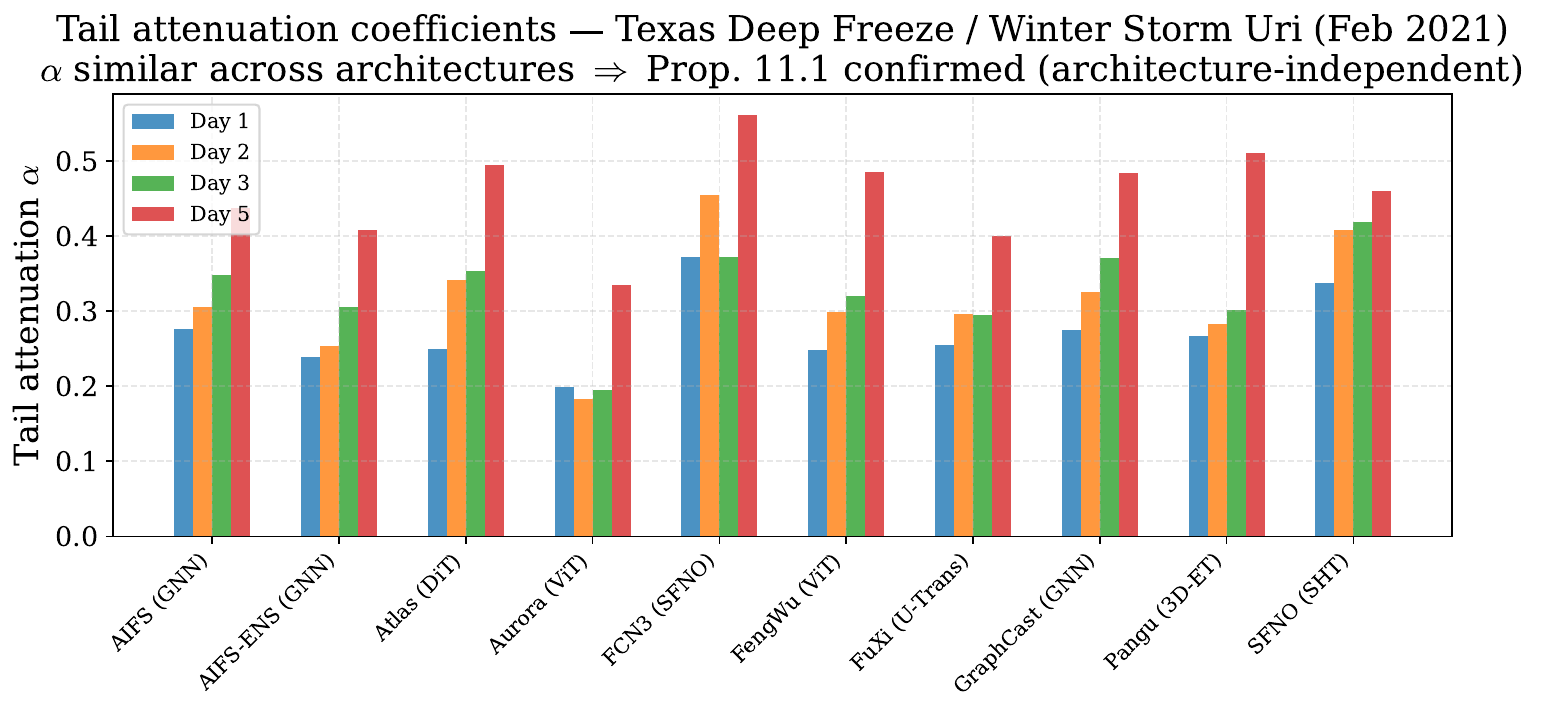}
\caption{[Supplementary] Tail attenuation coefficients $\alpha$ for the February 2021 Texas freeze.}
\label{fig:alpha_texas}
\end{figure*}

\begin{figure*}[p]
\centering
\includegraphics[width=\textwidth]{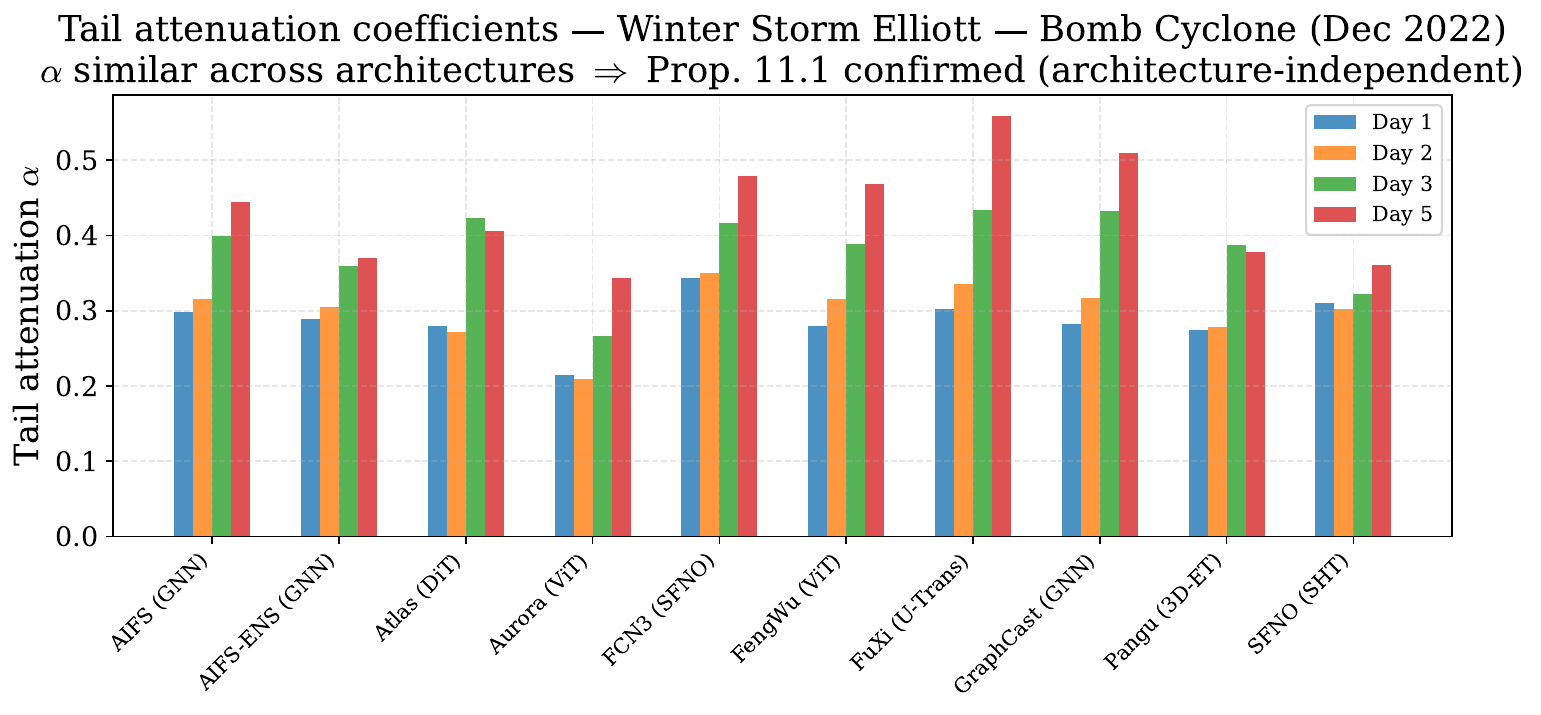}
\caption{[Supplementary] Tail attenuation coefficients $\alpha$ for Winter Storm Elliott (December 2022).}
\label{fig:alpha_elliott}
\end{figure*}

\begin{figure*}[p]
\centering
\includegraphics[width=\textwidth]{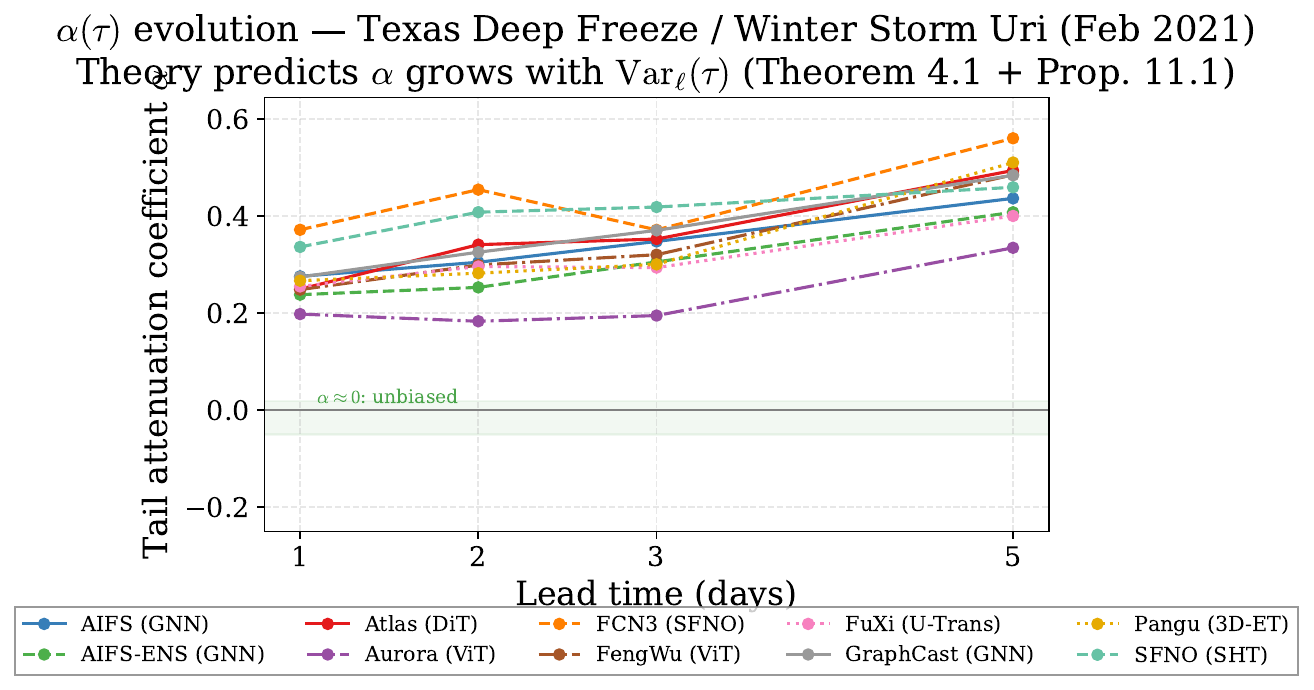}
\caption{[Supplementary] $\alpha$ coefficient evolution with lead time for the February 2021 Texas freeze.}
\label{fig:alpha_evolution_texas}
\end{figure*}

\begin{figure*}[p]
\centering
\includegraphics[width=\textwidth]{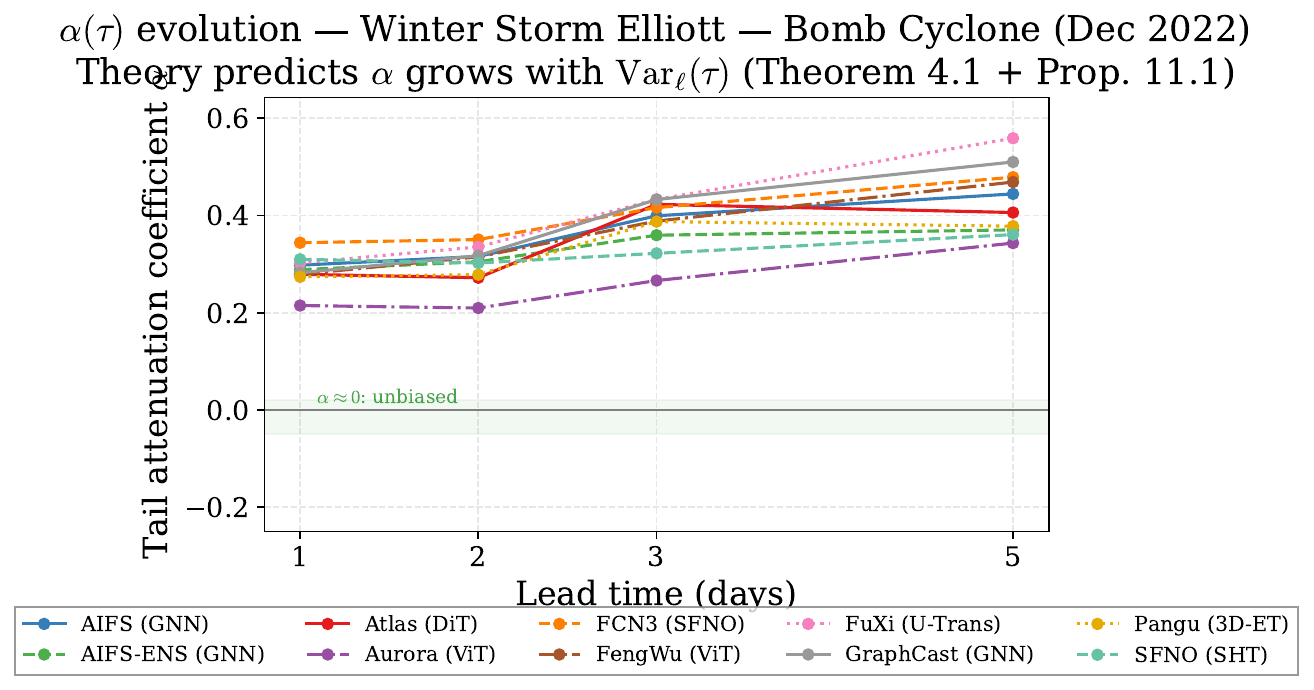}
\caption{[Supplementary] $\alpha$ coefficient evolution with lead time for Winter Storm Elliott (December 2022).}
\label{fig:alpha_evolution_elliott}
\end{figure*}

\begin{figure*}[p]
\centering
\includegraphics[width=0.75\textwidth]{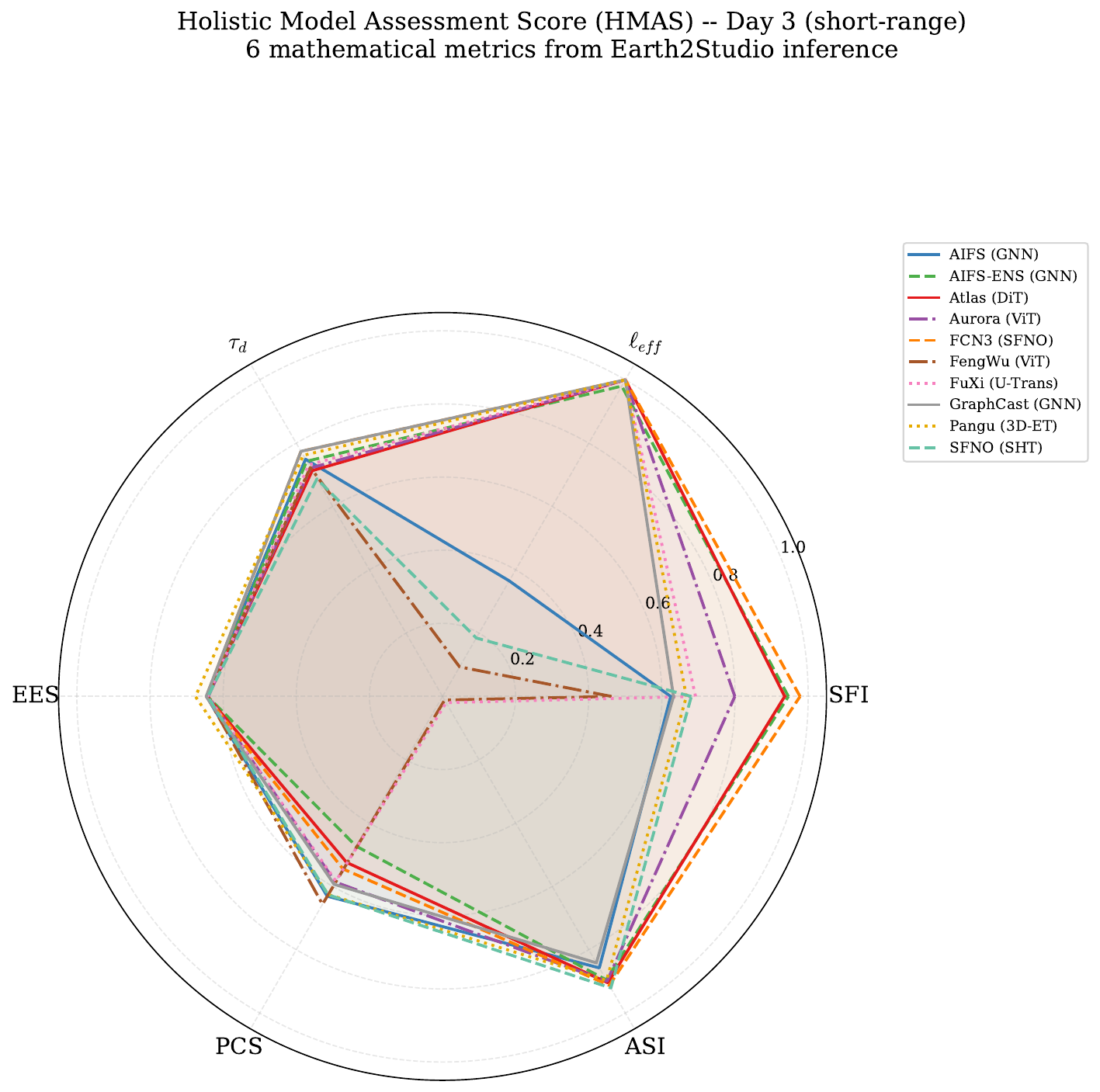}
\caption{[Supplementary] HMAS radar chart at day~3 (short-range).}
\label{fig:radar_day3}
\end{figure*}

\begin{figure*}[p]
\centering
\includegraphics[width=0.75\textwidth]{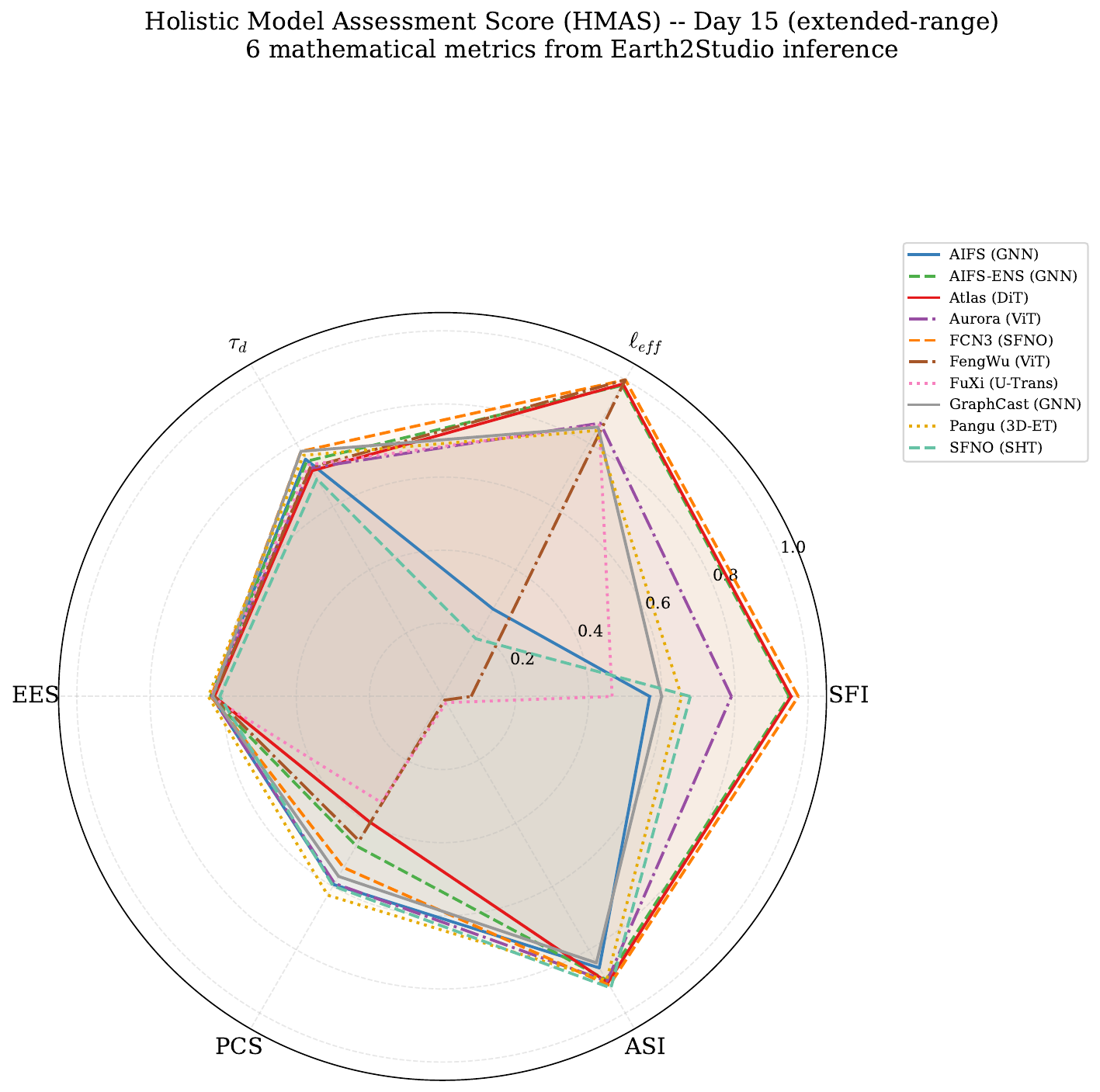}
\caption{[Supplementary] HMAS radar chart at day~15 (extended-range).}
\label{fig:radar_day15}
\end{figure*}

\begin{figure*}[p]
\centering
\includegraphics[width=\textwidth]{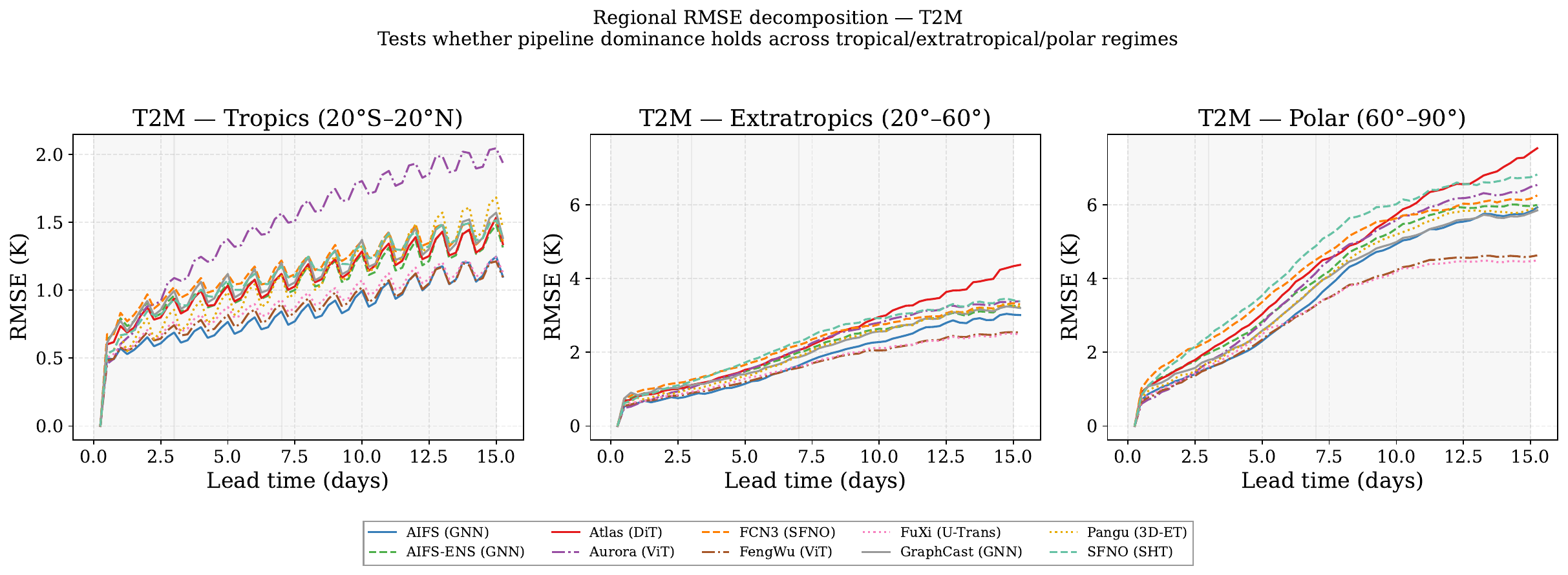}
\caption{[Supplementary] Regional RMSE decomposition for T2M by latitude band.}
\label{fig:regional_t2m}
\end{figure*}

\begin{figure*}[p]
\centering
\includegraphics[width=\textwidth]{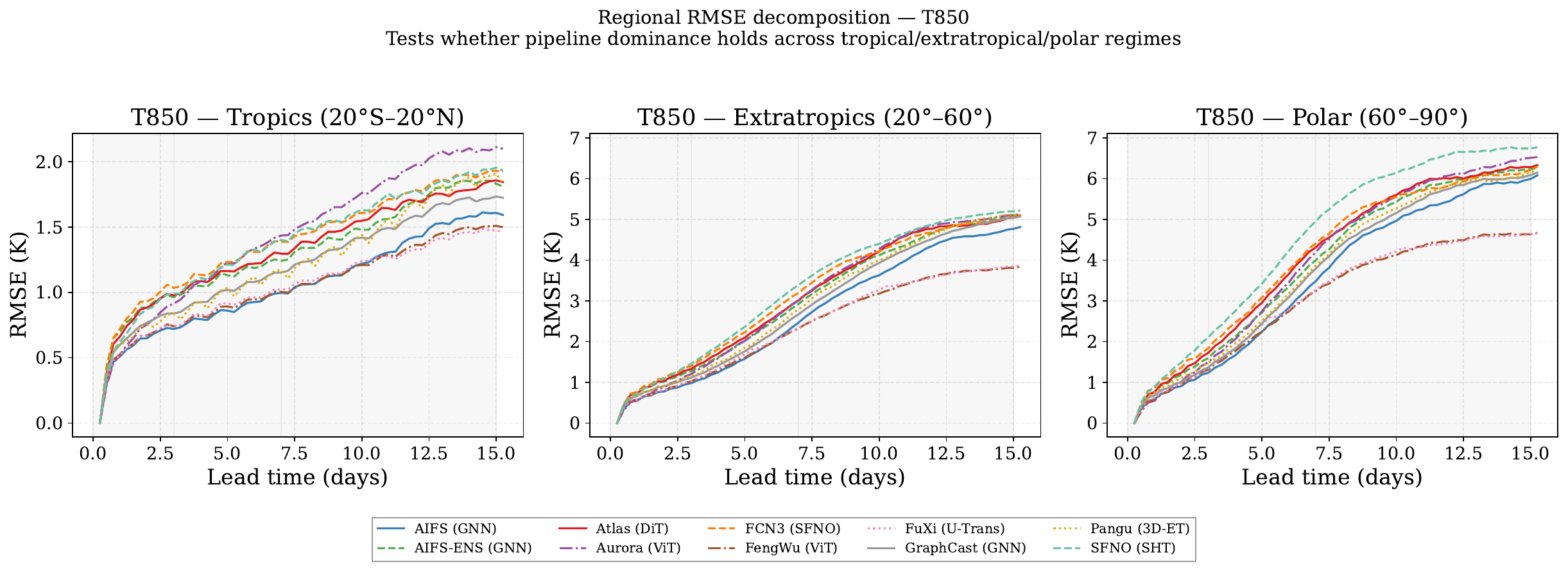}
\caption{[Supplementary] Regional RMSE decomposition for T850 by latitude band.}
\label{fig:regional_t850}
\end{figure*}

\begin{figure*}[p]
\centering
\includegraphics[width=\textwidth]{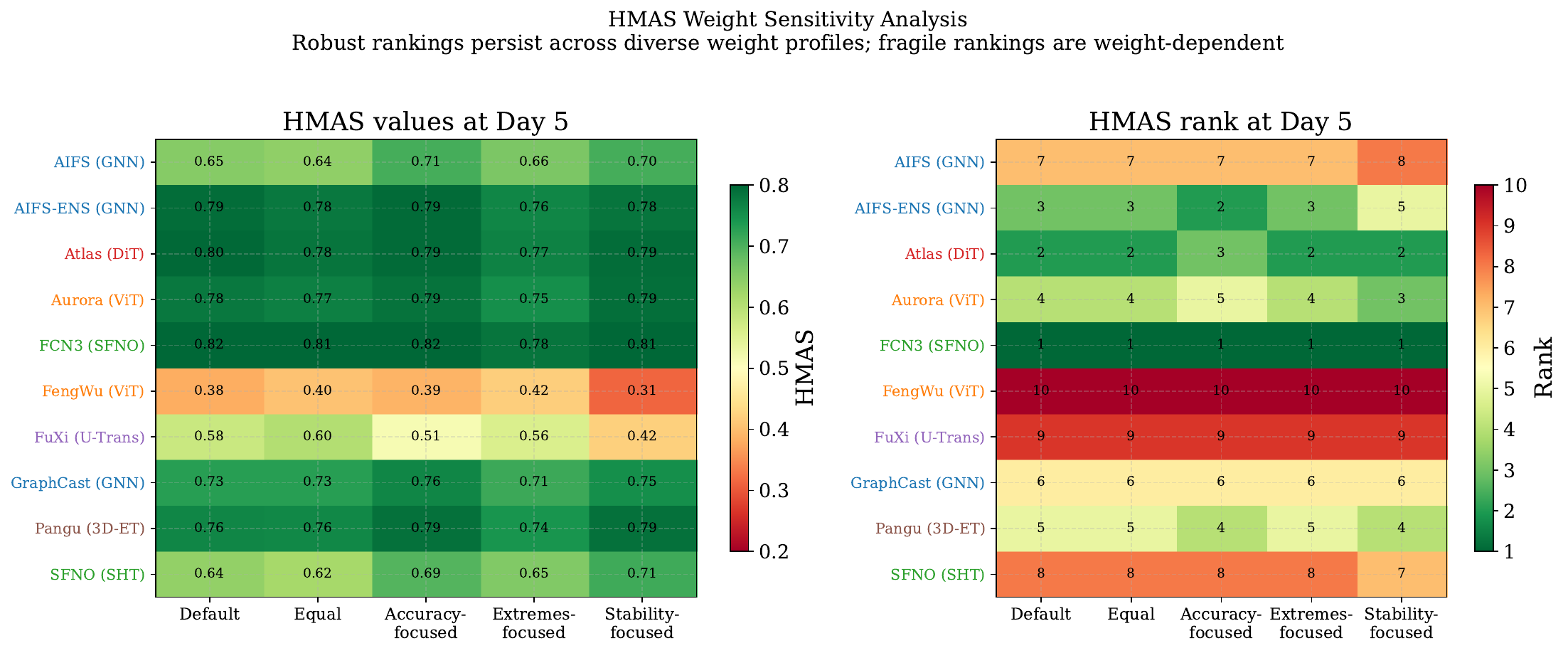}
\caption{[Supplementary] HMAS weight sensitivity analysis.
Grouped bar chart showing HMAS values under five weighting schemes (default, equal, accuracy-focused, extremes-focused, stability-focused) for all ten models.
Kendall's $W = 0.97$ confirms strong rank concordance, with the top and bottom rankings stable across all schemes.}
\label{fig:hmas_sensitivity}
\end{figure*}

\begin{figure*}[p]
\centering
\includegraphics[width=\textwidth]{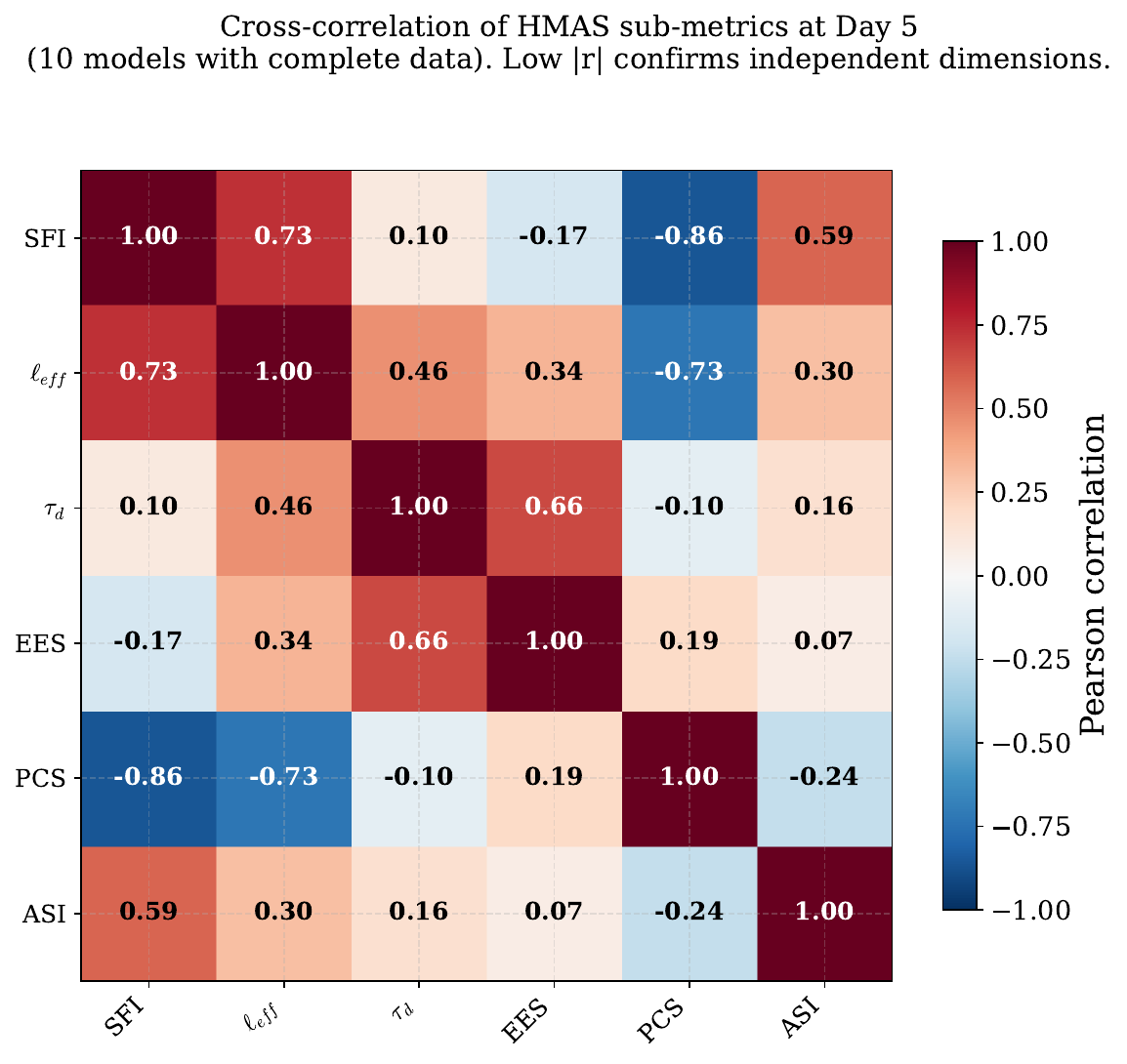}
\caption{[Supplementary] HMAS dimension cross-correlation matrix at day~5.
SFI and PCS are strongly anti-correlated ($\rho = -0.86$), confirming the spectral fidelity--physical consistency trade-off.
ASI is largely independent of accuracy and balance metrics.
Mean absolute off-diagonal correlation $\bar{|\rho|} = 0.38$.}
\label{fig:hmas_correlation}
\end{figure*}

\begin{figure*}[p]
\centering
\includegraphics[width=\textwidth]{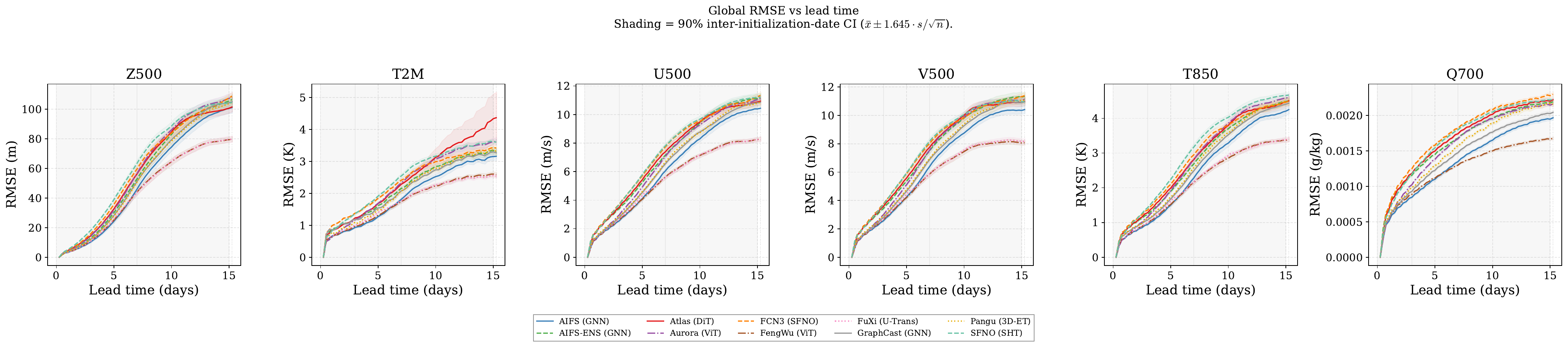}
\caption{[Supplementary] Global area-weighted RMSE vs.\ lead time for all six verification variables (Z500, T2M, U500, V500, T850, Q700), averaged over 30 initialization dates with inter-initialization-date 90\% CIs.}
\label{fig:rmse_all_vars}
\end{figure*}

\begin{figure*}[p]
\centering
\includegraphics[width=\textwidth]{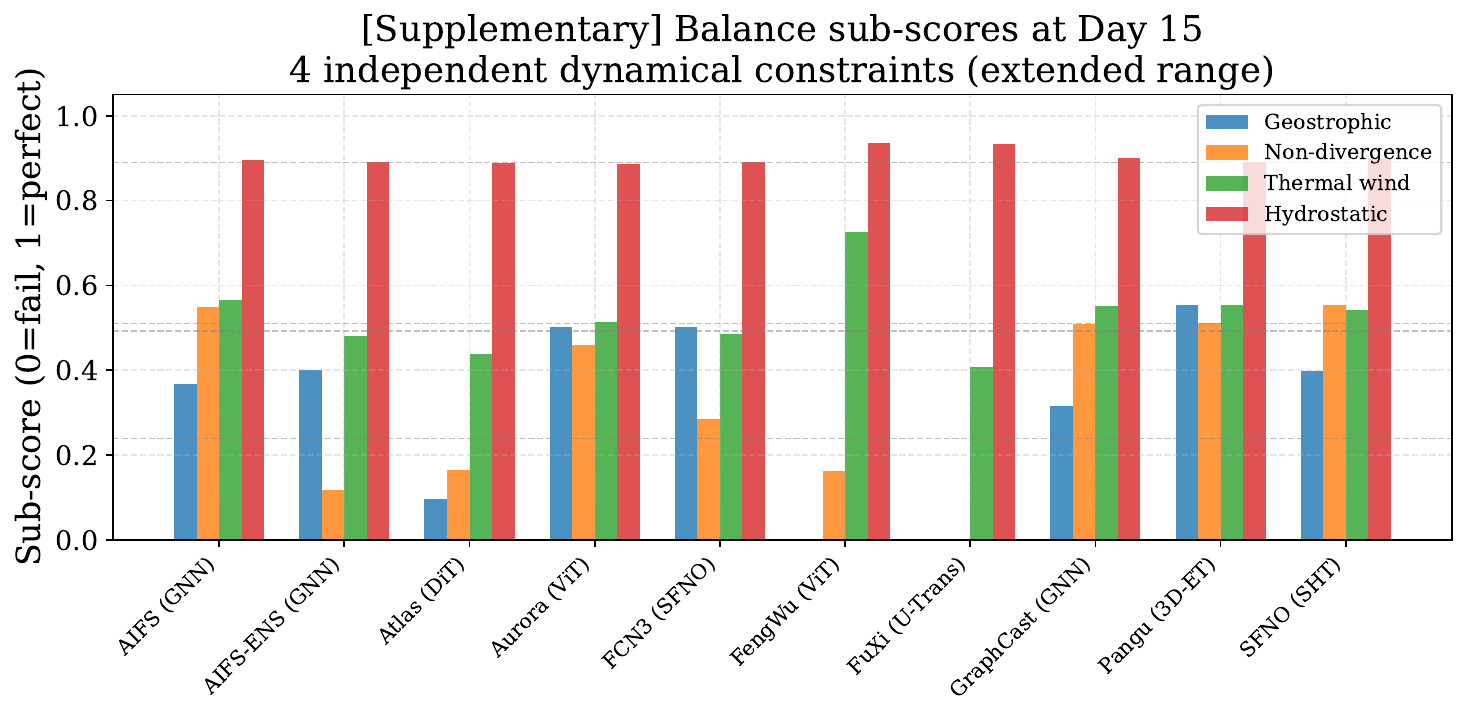}
\caption{[Supplementary] Balance sub-scores at day~15 (extended range) for all ten models.
Compared with day~5 (Fig.~\ref{fig:physical_consistency}, right panel), Atlas shows marked degradation in geostrophic balance and non-divergence, while Pangu maintains high sub-scores across all four balance components.}
\label{fig:pcs_day15_subscore}
\end{figure*}
\end{document}